\title{MAP- and MLE-Based Teaching} 
\author{Hans Ulrich Simon and Jan Arne Telle}
\newtheorem{theorem}{Theorem}[section]
\newtheorem{definition}[theorem]{Definition}
\newtheorem{lemma}[theorem]{Lemma}
\newtheorem{corollary}[theorem]{Corollary}
\newtheorem{remark}[theorem]{Remark}
\newtheorem{example}[theorem]{Example}
\newcommand{\VCD}{\mathrm{VCdim}}
\newcommand{\AN}{\mathrm{AN}}
\newcommand{\sm}{\setminus}
\newcommand{\eset}{\emptyset}
\newcommand{\cO}{{\mathcal O}}
\newcommand{\cZ}{{\mathcal Z}}
\newcommand{\ol}{\overline}
\newcommand{\seq}{\subseteq}
\newcommand{\ra}{\rightarrow}
\newcommand{\dund}{\Leftrightarrow}
\newcommand{\impl}{\Rightarrow}
\newcommand{\da}{\downarrow}
\newcommand{\ua}{\uparrow}
\newcommand{\ve}{\varepsilon}
\let\vec\mathbf
\DeclareMathOperator{\arg!max}{arg!max}
\DeclareMathOperator{\MAPTD}{MAP\mbox{-}TD}
\DeclareMathOperator{\MLETD}{MLE\mbox{-}TD}
\DeclareMathOperator{\SMN}{SMN}
\DeclareMathOperator{\NCTD}{NC\mbox{-}TD}
\begin{document}

\maketitle

\begin{abstract}
Imagine a learner $L$ who tries to infer a hidden concept from a collection
of observations. Building on the work~\cite{IJCAI} of Ferri et al.,
we assume the learner to be parameterized by priors $P(c)$ and by
$c$-conditional likelihoods $P(z|c)$ where $c$ ranges over all concepts
in a given class $C$ and $z$ ranges over all observations in an observation
set $Z$. $L$ is called a \emph{MAP-learner} (resp.~an \emph{MLE-learner})
if it thinks of a collection $S$ of observations as a random sample and
returns the concept with the maximum a-posteriori probability
(resp.~the concept which maximizes the $c$-conditional
likelihood of $S$). Depending on whether $L$ assumes that $S$
is obtained from ordered or unordered sampling resp.~from sampling with
or without replacement, we can distinguish four different sampling modes.
Given a target concept $c^* \in C$, a teacher for a MAP-learner $L$ aims
at finding a smallest collection of observations that causes $L$ to
return $c^*$. This approach leads in a natural manner to various
notions of a MAP- or MLE-teaching dimension of a concept class $C$. 
Our main results are as follows. First, we show that this teaching 
model has some desirable monotonicity properties. Second we clarify 
how the four sampling modes are related to each other. As for the 
(important!) special case, where concepts are subsets of a domain 
and observations are 0,1-labeled examples, we obtain some additional 
results. First of all, we characterize the MAP- and MLE-teaching dimension 
associated with an optimally parameterized MAP-learner graph-theoretically. 
From this central result, some other ones are easy to derive. It is
shown, for instance, that the MLE-teaching dimension is either equal
to the MAP-teaching dimension or exceeds the latter by $1$.
It is shown furthermore that these dimensions can be bounded from above 
by the so-called antichain number, the VC-dimension and related combinatorial 
parameters. Moreover they can be computed in polynomial time. 
\end{abstract}

\section{Introduction} \label{sec:introduction}

In formal models of \emph{machine learning} we have a concept class $C$ of
possible concepts/hy\-po\-the\-ses, an unknown target concept $c^* \in C$
and training data given by correctly labeled random examples.
In formal models of {\em machine teaching} a collection $T(c^*)$ of
labeled examples is instead carefully chosen by a teacher $T$ in
a way that the learner can reconstruct the target concept $c^*$
from $T(c^*)$. In recent years, the field of machine teaching has
seen various applications in fields like explainable AI~\cite{ourECML},
trustworthy AI~\cite{zhu2018overview} and pedagogy~\cite{SHAFTO201455}.

Various models of machine teaching have been proposed, e.g. the
classical teaching model~\cite{SM1991,GK1995}, the optimal
teacher model~\cite{B2008}, recursive
teaching~\cite{ZLHZ2011},  preference-based
teaching~\cite{GRSZ2017}, or no-clash teaching~\cite{KSZ2019,FKSSZ2022}.
These models differ mainly in the restrictions that they impose on the
learner and the teacher in order to avoid unfair collusion or cheating.
The common goal is to keep the size of the largest teaching
set, $\max_{c \in C}|T(c)|$, as small as possible.

There are also other variants using probabilities, from
Muggleton~\cite{Muggleton} where examples are sampled based on
likelihoods for a target concept, to  Shafto et al.~\cite{SHAFTO201455}
who calls this pedagogical sampling and leads into Bayesian
Teaching~\cite{EavesS16,yang2017explainable}, to the Bayesian learners
of Zhu~\cite{NIPS2013_9c01802d} with a proper teacher selecting examples.

In this paper we continue this line of research and consider the
probabilistic model that had been described in the abstract.
This model is inspired by and an extension of the model that was
introduced in~\cite{IJCAI}. As already observed in~\cite{IJCAI},
the condition for collusion-avoidance from~\cite{GM1996}
may here be violated, i.e., the learner may first reconstruct a concept $c_1$
from some given observations but, after having received additional
observations, switch to another concept $c_2$ even if the new
observations have given additional support to $c_1$. As the authors
of~\cite{IJCAI}, we would like to  argue that this should not be
considered as collusion or cheating as long as the parameters assigned
to the learner reflect some factual information about the world.

As already outlined in the abstract, we will distinguish between four
distinct sampling modes: ordered sampling with replacement
($(O,R)$-mode), unordered sampling with replacement ($(\ol{O},R)$-mode),
ordered sampling without replacement ($(O,\ol{R})$-mode) and
unordered sampling without replacement ($(\ol{O},\ol{R})$-mode).
The smallest number $d$ such that every $c^* \in C$ can be taught
to a given MAP-learner $L$ by a collection of at most $d$ observations
is denoted by $\MAPTD_L^{\alpha,\beta}(C)$
where $(\alpha,\beta) \in \{O,\ol{O}\} \times \{R,\ol{R}\}$
indicates the underlying sampling mode.
Then $\MAPTD^{\alpha,\beta}(C) = \min_L \MAPTD_L^{\alpha,\beta}(C)$
is the corresponding parameter with an optimally parameterized
learner $L$. The analogous notation is used for MLE-learners.
Our main results are as follows:
\begin{enumerate}
\item
The MAP-teaching model has two desirable and quite intuitive monotonicity
properties. Loosely speaking, adding new observations (making $Z$ larger)
leads to smaller $\MAPTD$ while adding new concepts (making $C$ larger)
leads to larger $\MAPTD$. See Section~\ref{subsec:monotonicity} for details.
\item
The sampling modes $(O,R)$ and $(\ol{O},R)$ are equivalent.
The sampling modes $(\ol{O},R)$, $(O,\ol{R})$ and $(\ol{O},\ol{R})$
are pairwise incomparable (i.e., which one leads to smaller
values of $\MAPTD_L(C)$ depends on the choice of $C$ and $L$).
Note that incomparability of the modes $(\alpha,\beta)$
and $(\alpha',\beta')$ does not rule out the possibility
that $\MAPTD^{\alpha,\beta}(C) \le \MAPTD^{\alpha',\beta'}(C)$
for each concept class $C$. See Section~\ref{subsec:modes} for 
details.
\item
As for the (important!) special case, where concepts are subsets of a domain 
and observations are 0,1-labeled examples, we obtain some additional
results, the first of which is the central one:
\begin{enumerate}
\item
For a (properly defined) bipartite graph $G(C)^{\alpha,\beta}$
associated with $C$ and $(\alpha,\beta) \neq (O,R)$, one
gets\footnote{$\SMN(G)$ denotes the saturating matching number
of a bipartite graph $G$ (formally defined in Section~\ref{sec:smn})}
\begin{equation} \label{eq:smn-characterization}
\MAPTD^{\alpha,\beta}(C) = \SMN(G(C)^{\alpha,\beta}) \enspace .
\end{equation}
If we replace $G(C)^{\alpha,\beta}$ by a slightly modified
graph, we obtain the corresponding result for $\MLETD$ at the
place of $\MAPTD$.\footnote{Some bounds on $\MLETD$ numbers
in terms of $\SMN$ numbers are already found in~\cite{IJCAI},
but no results that hold with equality (as in~(\ref{eq:smn-characterization}))
are proven there.} Fig.~\ref{fig:smn-characterization} visualizes this result.
See Sections~\ref{sec:smn} and~\ref{subsec:smn} for details.
\item
The MLE-teaching dimension is either equal to the MAP-teaching dimension 
or exceeds the latter by $1$. See Section~\ref{subsec:map-versus-mle}
for details.
\item
The MAP- and the MLE-teaching dimension can be bounded from above 
by the so-called antichain number, the VC-dimension and related 
combinatorial parameters. See Section~\ref{subsec:mletd-upper-bounds} 
for details.
\item
Moreover the MAP- and the MLE-teaching dimension can be computed 
in polynomial time from a natural encoding of the underlying concept class. 
See Section~\ref{subsec:polytime} for details.
\end{enumerate}
\end{enumerate}

\begin{figure}[hbt] \label{fig:smn-characterization}
\begin{center}
\includegraphics[scale=.65]{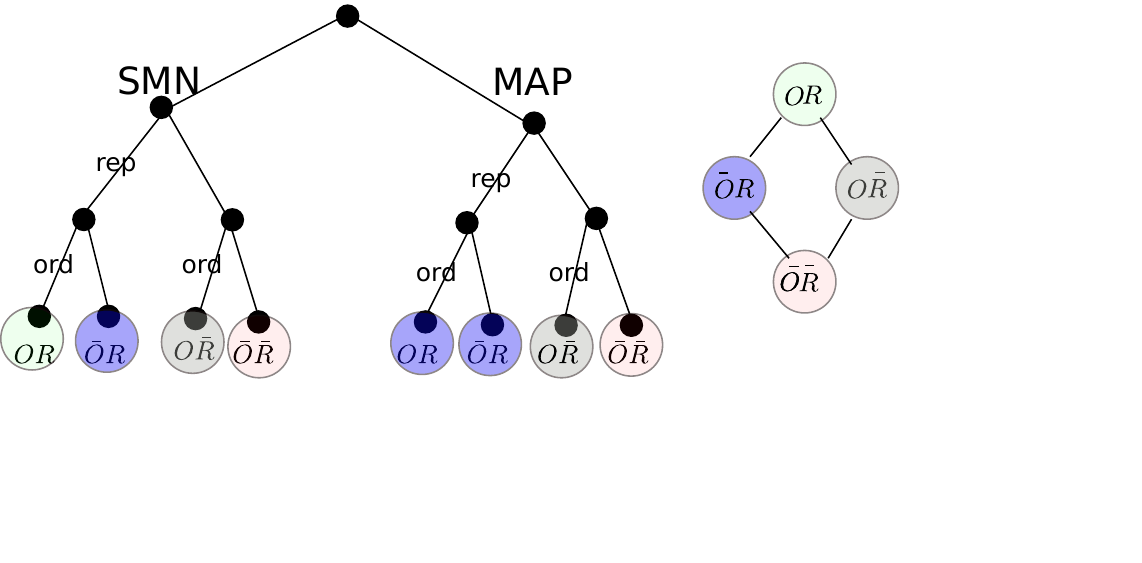}
\end{center}
\vspace{-2cm}
\caption{For any binary concept class $C \seq 2^X$ and $0,1$-labeled
examples as observations, the tree visualizes the identities
in~(\ref{eq:smn-characterization}). Using the same color for the
two leftmost leaves in the MAP-subtree is justified by the equivalence
of the modes $(O,R)$ and $(\ol{O},R)$. A parameter represented by a
leaf in the MAP-subtree has the same value as the parameter represented
by a leaf of the same color in the SMN-subtree. The parameters represented
in the SMN-subtree are ordered as indicated by the rightmost diagram,
with lowest value on top and highest value at bottom. We will see later
that parameters represented in different colors can generally have
different values.}
\end{figure}

\section{Definitions and Notations} \label{sec:definitions}


We first fix some general notation. Afterwards, 
in Sections~\ref{subsec:concept-class},~\ref{subsec:sampling},
and~\ref{subsec:map-mle}, the MAP- and MLE-based teaching model 
is introduced, step-by-step.

\paragraph{Mappings.}

The restriction of a mapping $f:A \ra B$ to a subset $A' \seq A$
will be denoted by $f_{\da A'}$. Suppose that $B$ is a set that is 
equipped with a size function which associates a size $|b|$ with 
each $b \in B$. Then the \emph{order of a mapping $f: A \ra B$} is 
defined as the size of the largest element in the image of $f$, 
i.e., the order of $f$ equals $\max_{a \in A}|f(a)|$.

\paragraph{Graphs and Matchings.}
For a graph $G = (V,E)$ and a set $U \seq V$, we denote by $\Gamma(U)$
the set of vertices which are adjacent to at least one vertex in $U$.
If $G = (V_1,V_2,E)$ is the bipartite graph with vertex sets $V_1$ 
and $V_2$ and with edge set $E \seq V_1 \times V_2$, then $U \seq V_1$
implies (of course) that $\Gamma(U) \seq V_2$. 
A matching $M$ in a bipartite graph $G = (V_1,V_2,E)$ can be viewed
as a (partially defined and injective) function $M: V_1 \ra V_2$
with the property that $(v,M(v)) \in E$ for each $v$ having an
$M$-partner. If $V_1$ is \emph{saturated by $M$}, i.e., every
vertex in $V_1$ has an $M$-partner, then this function is fully defined.

\paragraph{VC-Dimension~\cite{VC1971}.}
Let $C$ be a family of subsets of some ground set $X$. For $c \in C$
and $x \in X$, we also write $c(x)=1$ if $x \in c$ and $c(x)=0$
if $x \notin c$. We say that $S \seq X$ is \emph{shattered by $C$}
if, for every $b:S \ra \{0,1\}$, there is some $c \in C$ that
coincides with $b$ on $S$. The \emph{VC-dimension of $C$} is
defined as $\infty$ if there exist arbitrarily large shattered sets,
and it is defined as the size of a largest shattered set otherwise.

\subsection{Concept Classes} \label{subsec:concept-class}

Let $C$ be a finite set of size at least $2$, let $Z$ be another
non-empty finite set and let $\models$ be a relation on $C \times Z$.
We refer to $C$ as a \emph{concept class} and to $Z$ as a set
of \emph{observations}. If $c \models z$, then we say that
the concept $c$ is \emph{consistent with the observation} $z$.
We say that $c$ is \emph{consistent with a set (resp.~multiset)~$A$
of observations}, which is written as $c \models A$, if $c$ is
consistent with every $z \in A$. The notation $c \models \vec{z}$
with $\vec{z} = (z_1,\ldots,z_n) \in Z^n$ is understood analogously. 
For each $c \in C$, we define
\[ 
Z_c = \{z \in Z: c \models z\} \enspace .
\]

\begin{example}[Positive Examples as Observations]
\label{ex:positive-examples}
Let $Z = X$ be a set of \emph{examples} and let $C$ be a family 
of subsets of $X$. Let the consistency relation be given by
\[
\forall c \in C , x \in X: c \models x \dund x \in c \enspace .
\]
Note that $Z_c = c$ in this setting, i.e., concepts are identified 
with the sets of observations they are consistent with. 
\end{example}

\begin{example}[Labeled Examples as Observations]
\label{ex:labeled-examples}
Let $Z = X \times \{0,1\}$ be a set of \emph{labeled examples} 
and let $C$ be a family of subsets of $X$. Let the consistency 
relation be given by
\begin{equation} \label{eq:labeled-examples}
\forall c \in C , (x,b) \in Z:
c \models (x,b) \dund (b=1 \wedge x \in c) \vee (b=0 \wedge x \notin c)
\enspace .
\end{equation}
Note that $Z_c = \{(x,1): x \in c\} \cup \{(x,0): x \notin c\}$
in this setting. It follows that $|Z_c| = |X|$ for all $c \in C$.
\end{example}
We will occasionally identify a set $c \seq X$ with the 
corresponding $0,1$-valued function so that $c(x)=1$ for $x \in c$
and $c(x) = 0$ for $x \in X \sm c$. The equivalence 
in~(\ref{eq:labeled-examples}) can then be written in the
form $c \models (x,b) \dund b = c(x)$.

\begin{example}[Labeled Examples and Probabilistic Concepts]
\label{ex:probabilistic-concepts}
Let $Z = X \times \{0,1\}$ be again a set of labeled examples
and let $C$ be a family of functions from $X$ to $[0,1]$.
Let the consistency relation be given by
\[ 
\forall c \in C, x \in X:
c \models (x,1) \dund c(x)>0\ \mbox{ and }\  c \models (x,0) \dund c(x)<1
\enspace .
\]
Intuitively we should think of $c(x)$ as the probability that $c$
assigns label $1$ to instance $x$. If all concepts $c \in C$ were
$0,1$-valued, we would again be in the setting of 
Example~\ref{ex:labeled-examples}.
\end{example}

\noindent
Note that within Examples~\ref{ex:positive-examples} 
and~\ref{ex:labeled-examples}, we have that
\begin{equation} \label{eq:extensional-view}
\forall c,c' \in C:\ c \neq c' \impl Z_c \neq Z_{c'} 
\end{equation}
so that each concept $c \in C$ is uniquely determined by the full
set $Z_c$ of observations that $c$ is consistent with. Of course
this will not necessarily be the case if the concepts are probabilistic
as in Example~\ref{ex:probabilistic-concepts}.

\subsection{Variants of Sampling} \label{subsec:sampling}

As formalized in the definitions below, we distinguish between
ordered and unordered sampling and we may have sampling with or
without replacement.

\begin{definition}[Sampling with Replacement] \label{def:sampling-with}
Let $Q = (q(z))_{z \in Z}$ be a \emph{collection of probability 
parameters}, i.e., $q(z) \ge 0$ and $\sum_{z \in Z}q(z) = 1$. 
For $n \ge 0$, we define \emph{$n$-fold (ordered resp.~unordered) 
$Q$-sampling with replacement} as the following random procedure:
\begin{enumerate}
\item 
Choose $z_1,\ldots,z_n$ independently at random according to $Q$.
\item
In case of ordered sampling, return the sequence $(z_1,\ldots,z_n)$
whereas, in case of unordered sampling, return 
the multiset $\{z_1,\ldots,z_n\}$.\footnote{If $n=0$, then the
empty sequence resp.~the empty multiset is returned,}
\end{enumerate}
\end{definition}

Let $\vec{z} = (z_1,\ldots,z_n) \in Z^n$ be a sequence 
that contains $k$ distinct elements, say $z'_1,\ldots,z'_k$,
and let $n_i$ denote the number of occurrences of $z'_i$ in $\vec{z}$. 
Let $A_{\vec{z}} \seq Z$ be the corresponding multiset. The probability 
that $\vec{z}$ (resp.~$A_{\vec{z}}$) is obtained from $n$-fold ordered 
(resp.~unordered) $Q$-sampling with replacement is henceforth denoted 
by $P^{O,R}(\vec{z}|Q)$ (resp.~by $P^{\ol{O},R}(A_{\vec{z}}|Q)$).
With these notations, the following holds:
\begin{equation} \label{eq:R-probs} 
P^{O,R}(\vec{z}|Q) = \prod_{i=1}^{n} q(z_i) = \prod_{i=1}^{k} q(z'_i)^{n_i}
\ \mbox{ and }\ 
P^{\ol{O},R}(A_{\vec{z}}|Q) = 
\frac{n!}{n_1! \ldots n_k!} \cdot \prod_{i=1}^{k}q(z'_i)^{n_i}
\enspace .
\end{equation}

\begin{definition}[Sampling without Replacement] 
\label{def:sampling-without}
Let $Q = (q(z))_{z \in Z}$ be a collection of probability parameters.
Let $N^+(Q)$ be the number of $z \in Z$ such that $q(z)>0$. 
For $0 \le n \le N^+(Q)$, we define \emph{$n$-fold (ordered 
resp.~unordered) $Q$-sampling without replacement} as the
following random procedure:
\begin{enumerate}
\item
Choose $z_1$ at random according to $Q$.
\item 
For $i=2,\ldots,n$ do the following: \\
Choose $z_i \in Z \sm \{z_1,\ldots,z_{i-1}\}$ at random where,
for each $z \in Z \sm \{z_1,\ldots,z_{i-1}\}$, the probability 
for $z_i=z$ equals $\frac{q(z)}{1-(q(z_1) +\ldots+ q(z_{i-1}))}$.\footnote{Note
that the probability parameters for $z \in Z \sm \{z_1,\ldots,z_{i-1}\}$
are the same as before up to normalization.}
\item
In case of ordered sampling, return the sequence $(z_1,\ldots,z_n)$
whereas, in case of unordered sampling, return the set $\{z_1,\ldots,z_n\}$.
\end{enumerate}
\end{definition}

Let $\vec{z} = (z_1,\ldots,z_n) \in Z^n$ be a repetition-free sequence 
and let $A_{\vec{z}} \seq Z$ be the corresponding set. 
For a permutation $\sigma$ of $1,\ldots,n$, 
we define $\vec{z}_\sigma = (z_{\sigma(1)},\ldots,z_{\sigma(n)})$. The
probability that $\vec{z}$ (resp.~$A_{\vec{z}}$) is obtained from
$n$-fold ordered (resp.~unordered) $Q$-sampling without replacement
is henceforth denoted by $P^{O,\ol{R}}(\vec{z}|Q)$ 
(resp.~by $P^{\ol{O},\ol{R}}(A_{\vec{z}}|Q)$). With these notations,
the following holds:
\begin{equation} \label{eq:O-bar-R}
P^{O,\ol{R}}(\vec{z}|Q) = 
\prod_{i=1}^{n}\frac{q(z_i)}{1-(q(z_1) +\ldots+ q(z_{i-1}))} 
\ \mbox{ and }\ 
P^{\ol{O},\ol{R}}(A_{\vec{z}}|Q) =
\sum_\sigma P^{O,\ol{R}}(\vec{z}_\sigma|Q) \enspace ,
\end{equation}
where $\sigma$ ranges over all permutations of $1,\ldots,n$.


\noindent
We introduce the following notation:
\begin{itemize}
\item
$\cZ^{O,R} = Z^*$ denotes the set of sequences over $Z$ 
(including the empty sequence).
\item
$\cZ^{\ol{O},R}$ denotes the set of multisets over $Z$
(including the empty multiset).
\item
$\cZ^{O,\ol{R}}$ denotes the set of repetition-free sequences
over $Z$ (including the empty sequence).
\item
$\cZ^{\ol{O},\ol{R}} = 2^Z$ denotes the powerset of $Z$.
\end{itemize}

The pairs $(\alpha,\beta) \in \{O,\ol{O}\} \times \{R,\ol{R}\}$
are called \emph{sampling modes}. 
We use the symbol $\eset$ not only to denote the empty set
but also to denote the empty multiset or the empty sequence.
If $A$ is a finite set or multiset, then $|A|$ denotes its size
where, in case of a multiset, the multiple occurrences of elements
are taken into account. The length of a finite sequence $\vec{z}$
is denoted by~$|\vec{z}|$. 


\begin{remark}[Trivial Identities] \label{rem:trivial-identities}
Suppose that $Q = (q(z))_{z \in Z}$ is collection of probability 
parameters. Then, for each sampling mode $(\alpha,\beta)$, we have 
that $P^{\alpha,\beta}(\eset | Q) = 1$. Moreover, if all parameters $q(z)$
with $z \in Z$ are strictly positive, then $P^{\ol{O},\ol{R}}(Z | Q) = 1$.
\end{remark}

\noindent
We close this section with a more or less obvious result whose
proof will be given for sake of completeness.

\begin{remark} \label{rem:identity-permutation}
Let $z_1,\ldots,z_n$ be a sequence with pairwise distinct elements from $Z$.
Let $p_1 > p_2 > \ldots p_n$ be a strictly decreasing sequence of
strictly positive parameters such that $\sum_{i=1}^{n}p_i \le 1$.
For each permutation $\sigma$ of $[n]$, consider the parameter 
collection $Q_\sigma = (q_\sigma(z_i))_{i=1,\ldots,n}$ given 
by $q_\sigma(z_i) = p_{\sigma(i)}$. Then the identity permutation
is the unique maximizer of $P^{O,\ol{R}}(z_1,\ldots,z_n | Q_\sigma)$.
\end{remark}

\begin{proof}
According to~(\ref{eq:O-bar-R}), we have
\begin{eqnarray*}
P^{O,\ol{R}}(z_1,\ldots,z_k | Q_\sigma) & = & 
\prod_{i=1}^{n}\frac{q_\sigma(z_i)}{1-(q_\sigma(z_1) +\ldots+ q_\sigma(z_{i-1}))} \\
& = & 
\prod_{i=1}^{n}\frac{p_{\sigma(i)}}{1-(p_{\sigma(1)} +\ldots+ p_{\sigma(i-1)})} =  
\frac{\prod_{i=1}^{n}p_i}{\prod_{i=1}^{n}(1-(p_{\sigma(1)} +\ldots+ p_{\sigma(i-1)})} 
\end{eqnarray*}
The product in the numerator is the same for all permutations $\sigma$.
The following assertions are equivalent:
\begin{enumerate}
\item
$\sigma^*$ is the identity permutation.
\item
The sequence $p_{\sigma^*(1)},\ldots,p_{\sigma^*(n)}$ is strictly decreasing.
\item
For each permutation $\sigma \neq \sigma^*$ and each $i \in [n]$, 
we have that 
\[
p_{\sigma^*(1)} +\ldots+ p_{\sigma^*(i-1)} \ge 
p_{\sigma(1)} +\ldots+ p_{\sigma(i-1)}
\]
and, for at least one $i \in [n]$, this inequality is strict.
\item
The permutation $\sigma^*$ is the unique maximizer
of $P^{O,\ol{R}}(z_1,\ldots,z_k | Q_\sigma)$.
\end{enumerate}
The remark now is immediate from the equivalence of the first and 
the fourth statement.
\end{proof}

\subsection{MAP- and MLE-based Teaching} \label{subsec:map-mle}

An MLE-learner will always choose a hypothesis from a class $C$ 
that maximizes the likelihood of a given set of observations. 
MAP-learners are a bit more general because they additionally 
bring into play priors $(P(c))_{c \in C}$. 
The notion of likelihood depends on how the observations are
randomly sampled. We proceed with the formal definition of MAP-
and MLE-learners and their teachers:

\begin{definition}[MAP- and MLE-Learner] \label{def:mle-learner}
A \emph{MAP-Learner $L$ for $C$} is given by (and henceforth 
identified with) parameters $P(z|c) \ge 0$ 
and $P(c) > 0$ for $z \in Z$ and $c \in C$ such that 
\[
\sum_{c \in C}P(c) = 1\ \mbox{ and }\ 
\forall c \in C: \sum_{z\in Z} P(z|c) = 1 \enspace .
\]
The parameters $P(c)$ are referred to as \emph{priors}.
The parameters $P(z|c)$, referred to as \emph{$c$-conditional 
likelihoods}, must satisfy the following \emph{validity condition}:
\begin{equation} \label{eq:validity}
c \not\models z \impl P(z|c) = 0 \enspace .
\end{equation}
Set $Z_c^+(L) := \{z \in Z: P(z|c) > 0\}$ 
and $N^+(C,L) = \min_{c \in C}|Z_c^+(L)|$.\footnote{Because of the validity 
condition, $Z_c^+(L)$ is a subset of $Z_c = \{z \in Z: c \models z\}$.}
$L$ can be in four different sampling modes (depending on 
the assumed kind of sampling). These modes determine the form
of $L$'s input and the choice of its output as will be detailed
below.
\begin{description}
\item[$(O,R)$-mode:]
For every $n \ge 0$ and every sequence $\vec{a} \in Z^n$, 
we denote by $P^{O,R}(\vec{a}|c)$ the probability that $\vec{a}$ is obtained 
from $n$-fold ordered $P(\cdot|c)$-sampling with replacement. Given 
a sequence $\vec{a} \in \cZ^{O,R}$, $L$ returns 
the concept $\arg!max_{c \in C}\left[P(c) \cdot P^{O,R}(\vec{a}|c)\right]$ 
if it exists, and a question mark otherwise.\footnote{The 
operator $\arg!max_{c \in C}f(c)$ returns the {\bf unique} 
maximizer $c^* \in C$ of $f(c)$ provided that it exists.} 
\item[$(\ol{O},R)$-mode:]
For every $n \ge 0$ and and every multiset $A \seq Z$ of size $n$, 
we denote by $P^{\ol{O},R}(A|c)$ the probability that $A$ is obtained 
from $n$-fold unordered $P(\cdot|c)$-sampling with replacement. Given 
a multiset $A \in \cZ^{\ol{O},R}$, $L$ returns the concept \\
$\arg!max_{c \in C} \left[P(c) \cdot P^{\ol{O},R}(A|c)\right]$ 
if it exists, and a question mark otherwise.
\item[$(O,\ol{R})$-mode:]
For every $0 \le n \le N^+(C,L)$, and every repetition-free 
sequence $\vec{a} \in Z^n$, we denote by $P^{O,\ol{R}}(\vec{a}|c)$) 
the probability that $\vec{a}$ 
is obtained from $n$-fold ordered $P(\cdot|c)$-sampling without 
replacement. Given a repetition-free sequence $\vec{a} \in \cZ^{O,\ol{R}}$
with $|\vec{a}| \le N^+(C,L)$, $L$ returns 
the concept $\arg!max_{c \in C}\left[P(c) \cdot P^{O,\ol{R}}(\vec{a}|c)\right]$ 
if it exists, and a question mark otherwise. If $|\vec{a}| > N^+(C,L)$, then also
a question mark is returned.  
\item[$(\ol{O},\ol{R})$-mode:]
For every $0 \le n \le N^+(C,L)$,
and every set $A \seq Z$ of size $n$, we denote 
by $P^{\ol{O},\ol{R}}(A|c)$ the probability that $A$ is obtained 
from $n$-fold unordered $P(\cdot|c)$-sampling without replacement. 
Given a set $A \in \cZ^{\ol{O},\ol{R}}$ with $|A| \le N^+(C,L)$, $L$ returns 
the concept $\arg!max_{c \in C}\left[P(c) \cdot P^{\ol{O},\ol{R}}(A|c) \right]$ 
if it exists, and a question mark otherwise. If $|A| > N^+(C,L)$, then 
also a question mark is returned.  
\end{description}
An \emph{MLE-learner} is a MAP-learner with uniform priors (so that 
the factor $P(c)$ in the above $arg!max$-expressions can be dropped).
\end{definition}

\begin{definition}[Teacher] \label{def:teacher}
Suppose that $L$ is a MAP-learner for $C$ that is 
in sampling mode $(\alpha,\beta) \in \{O,\ol{O}\} \times \{R,\ol{R}\}$.
A \emph{(successful) teacher} for $L$ is a mapping $T$ which assigns 
to each concept $c_0 \in C$ an input $I = T(c_0)$ for $L$ 
such that $L(I) = c_0$. In other words:
\begin{enumerate}
\item
$I \in \cZ^{\alpha,\beta}$ and, if $\beta = \ol{R}$, then $|I| \le N^+(C,L)$.
\item
$c_0 = \arg!max_{c \in C}\left[P(c) \cdot P^{\alpha,\beta}(I|c)\right]$.
\end{enumerate}
\end{definition}

\noindent
A couple of observations are in place here. 

\begin{remark}
Suppose that $L$ is a MAP-learner for $C$ 
which is in sampling mode $(\alpha,\beta) \in \{O,\ol{O}\} \times \{R,\ol{R}\}$. 
Suppose that $T$ is a teacher for $L$. Then the following holds 
for all $c,c' \in C$:
\begin{equation} \label{eq:general-conditions}
L(T(c)) = c\ ,\ P^{\alpha,\beta}(\eset|c) = 1\ ,\  
P^{\alpha,\beta}(T(c)|c) > 0\ ,\ c \models T(c)\  
\mbox{ and }\ (c \neq c' \impl T(c) \neq T(c')) 
\enspace .
\end{equation}
Moreover, if $L$ is an MLE-learner and $T$ is a teacher for $L$, 
then $T(c) \neq \eset$.
\end{remark}

\begin{proof}
$L(T(c)) = c$ is an immediate consequence of Definitions~\ref{def:mle-learner}
and~\ref{def:teacher}. It now follows that, if  $T(c) = T(c')$, 
then $c = L(T(c)) = L(T(c')) = c'$. In other words, $c \neq c'$
implies that $T(c) \neq T(c')$. $0$-fold sampling conditioned to $c$
yields $\eset$ regardless of how $c$ is chosen. It follows 
that $P^{\alpha,\beta}(\eset|c) = 1$. Assume now for contradiction 
that $P^{\alpha,\beta}(T(c')|c') = 0$. But then $c'$ cannot be the 
unique maximizer of $P^{\alpha,\beta}(T(c')|c)$ in $C$.
This is in contradiction with $L(T(c')) = c'$. Assume for contradiction
that $T(c)$ contains an observation $z \in Z$ such that $c \not\models z$.
It follows that $P^{\alpha,\beta}(T(c)|c) = 0$, which is in contradiction 
with $P^{\alpha,\beta}(T(c)|c) > 0$. Thus $c \models T(c)$. Finally, suppose that 
the priors are uniform, i.e., $P(c) = 1/|C|$ for every $c \in C$. 
Assume for contradiction that $T(c_0) = \eset$ for some $c_0 \in C$.
For every $c \in C$, we have $P(c) \cdot P^{\alpha,\beta}(\eset|c) = P(c) = 1/|C|$.
Hence $c_0$ cannot be unique maximizer of $P(c) \cdot P^{\alpha,\beta}(\eset|c)$
in $C$. This is in contradiction with $L(T(c_0)) = c_0$.
\end{proof}

\noindent
Here is the definition of the parameter that is in the focus 
of our interest:

\begin{definition}[MAP- and MLE-Teaching Dimension] \label{def:mletd}
Suppose that $L$ is a MAP-learner for $C$ who is in sampling 
mode $(\alpha,\beta)$. The \emph{MAP-teaching dimension of $C$ 
given $L$ and $(\alpha,\beta)$}, denoted as $\MAPTD^{\alpha,\beta}_L(C)$, 
is defined as the smallest number $d$ such that there exists a teacher 
of order $d$ for $L$, respectively as $\infty$ if there does not exist
a teacher for $L$. The \emph{MAP-teaching dimension of $C$ with respect
to sampling mode $(\alpha,\beta)$} is then given by 
\[
\MAPTD^{\alpha,\beta}(C) := \min_L \MAPTD^{\alpha,\beta}_L(C) \enspace ,
\]
where $L$ ranges over all MAP-learners for $C$. Similarly, 
the \emph{MLE-teaching dimension of $C$ with respect
to sampling mode $(\alpha,\beta)$} is given by 
$\MLETD^{\alpha,\beta}(C) := \min_L \MAPTD^{\alpha,\beta}_L(C)$
with $L$ ranging over all MLE-learners for $C$.
\end{definition}

The parameter $\MAPTD^{\alpha,\beta}(C)$ equals the number of observations 
needed to teach an optimally parameterized learner. It represents an
information-theoretic barrier that cannot be brocken regardless
of how the learner is parameterized. Of course, this parameter will
generally be smaller than the parameter $\MAPTD_L^{\alpha,\beta}(C)$ 
associated with a ``naturally parameterized'' learner. We close this 
section by mentioning the inequality 
\[ 
\MAPTD^{\alpha,\beta}(C) \le \MLETD^{\alpha,\beta}(C) \enspace ,
\]
which (for trivial reasons) holds for each choice of $C$ 
and $(\alpha,\beta)$.

\section{Basic Results on the MAP-Based Teaching Model}
\label{sec:basic-results}

%

In~\cite{IJCAI}, the authors used a more restrictive condition at 
the place of the validity condition. However, as we will see in
Section~\ref{subsec:validity}, in the context of MAP-learners and 
their teachers, both conditions lead essentially to the same 
results. In Section~\ref{subsec:monotonicity}, we discuss two 
natural monotonicity properties and thereafter,
in Section~\ref{subsec:modes}, we note the equivalence 
of $(O,R)$- and the $(\ol{O},R)$-mode and prove the pairwise 
incomparability of the 
modes $(\ol{O},R)$, $(O,\ol{R})$ and $(\ol{O},\ol{R})$.

\subsection{Validity and Strong Validity} \label{subsec:validity}

We will refer to 
\[ c \not\models z \dund P(z|c) = 0 \]
as the \emph{strong validity condition} 
for the parameters $(P(z|c))_{z \in Z,c \in C}$.
This is the condition that the authors of~\cite{IJCAI}
had imposed on the $c$-conditional likelihoods associated 
with a MAP-learner. We will see that each $L$ satisfying 
the validity condition has a ``close relative'' $L_\ve$ 
that satisfies the strong validity condition. Here comes
the definition of $L_\ve$: 

\begin{definition}[$\ve$-Shift] \label{def:eps-shift}
Let $L$ be given by parameters $P(c)$ and $P(z|c)$ with $c \in C$ 
and $z \in Z$ such that the validity condition is satisfied
but the strong validity condition is not. We say that $L_\ve$ 
(with $0 < \ve \le 1/2$) is the \emph{$\ve$-shift of $L$} 
if $L_\ve$ is given by the parameters $P(c)$ and $P_\ve(z|c)$ 
where
\[
P_\ve(z|c) = \left\{ \begin{array}{ll}
    (1-\ve) \cdot P(z|c) & \mbox{if $z \in Z^+_c(L)$} \\
    \frac{\ve}{|Z_c \sm Z_c^+(L)|} & \mbox{if $z \in Z_c \sm Z_c^+(L)$} \\
    0 & \mbox{if $z \in Z \sm Z_c$}
    \end{array} \right. \enspace .
\]
For convenience, we set $P_\ve(z|c) = P(z|c)$ if already $L$ 
satisfies the strong validity condition.
\end{definition}

Note that $L_\ve$ satisfies the strong validity condition 
because $P_\ve(z|c) = 0$ iff $z \not\in Z_c$ 
and $Z_c = \{z \in Z: c \models z\}$. A learner and its $\ve$-shift 
are related as follows:

\begin{lemma} \label{lem:validity}
Let $L$ be a MAP-learner for $C$ whose parameters satisfy the
validity condition. Then the following holds for 
each $(\alpha,\beta) \in \{O,\ol{O}\} \times \{R,\ol{R}\}$ and 
all sufficiently small $\ve>0$: each teacher for $L$ in sampling
mode $(\alpha,\beta)$ is also a teacher for $L_\ve$ in sampling
mode $(\alpha,\beta)$.
\end{lemma}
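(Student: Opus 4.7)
The plan is a compactness/continuity argument: the MAP decision of $L$ on a teaching input $I$ is stable under a sufficiently small perturbation of the likelihoods, and the $\ve$-shift is exactly such a perturbation. I would fix a teacher $T$ for $L$, a target concept $c_0 \in C$, and set $I := T(c_0)$. I then want to show that $c_0$ remains the unique maximizer of $P(c) \cdot P_\ve^{\alpha,\beta}(I|c)$ over $c \in C$ for all sufficiently small $\ve > 0$, and finally take a single $\ve$ that works simultaneously for every $c_0 \in C$.

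First, from Definition~\ref{def:eps-shift} each perturbed likelihood $P_\ve(z|c)$ is a continuous (in fact, affine) function of $\ve$ that reduces to $P(z|c)$ at $\ve = 0$. By the formulas~(\ref{eq:R-probs}) and~(\ref{eq:O-bar-R}), the quantity $P_\ve^{\alpha,\beta}(I|c)$ is therefore a continuous function of $\ve$ on $[0, 1/2]$; the denominators appearing in the $\ol{R}$-cases stay strictly positive for $\ve > 0$, since $P_\ve(z|c)>0$ for every $z \in Z_c$. By~(\ref{eq:general-conditions}) we have $c_0 \models I$ and $P^{\alpha,\beta}(I|c_0) > 0$, so $P_\ve^{\alpha,\beta}(I|c_0)$ converges to a strictly positive limit as $\ve \to 0$.

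Next, I would split the competitors $c \neq c_0$ into two classes. If $P^{\alpha,\beta}(I|c) > 0$, then by uniqueness of the maximizer for $L$ we have the strict inequality $P(c_0) \cdot P^{\alpha,\beta}(I|c_0) > P(c) \cdot P^{\alpha,\beta}(I|c) > 0$, and continuity carries it over to $L_\ve$ for all sufficiently small $\ve$. Otherwise $P^{\alpha,\beta}(I|c) = 0$; if $c \not\models z$ for some observation $z$ in $I$, then $P_\ve(z|c) = 0$ for every $\ve$ and hence $P_\ve^{\alpha,\beta}(I|c) = 0$; in the remaining subcase $I$ meets $Z_c \sm Z_c^+(L)$ in at least one observation $z$, so $P_\ve(z|c) = O(\ve)$ and therefore $P_\ve^{\alpha,\beta}(I|c) = O(\ve) \to 0$, while $P(c_0) \cdot P_\ve^{\alpha,\beta}(I|c_0)$ stays bounded away from $0$. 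In both subcases the required strict inequality holds for all small enough $\ve$.

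Since $C$ is finite, only finitely many pairs $(c_0, c)$ arise, and a single $\ve > 0$ can be chosen to work simultaneously. For modes with $\beta = \ol{R}$ one also has to check the length constraint $|T(c_0)| \le N^+(C, L_\ve)$; but $Z_c^+(L_\ve) = Z_c \supseteq Z_c^+(L)$ gives $N^+(C, L_\ve) \ge N^+(C, L) \ge |T(c_0)|$, so this is automatic. The main piece of bookkeeping is handling the four sampling modes uniformly; the only mode-specific subtlety is the continuity of the $\ol{R}$-formulas, which is clean because their denominators stay bounded away from zero throughout $[0, 1/2]$.
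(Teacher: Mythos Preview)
Your approach is the same continuity argument as the paper's: show $P_\ve^{\alpha,\beta}(T(c_0)\mid c) \to P^{\alpha,\beta}(T(c_0)\mid c)$ as $\ve \to 0$ for every $c$, whence the unique maximizer $c_0$ is preserved for small $\ve$. Your additional case split (on whether $P^{\alpha,\beta}(I\mid c)$ is positive or zero) is harmless but not needed once continuity at $\ve=0$ is in hand.

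The one point you assert without justification is that, in the $\ol{R}$-modes, the denominators $1 - \sum_{j<i} P_\ve(z_j\mid c)$ are bounded away from zero \emph{uniformly} on $[0,1/2]$. Your stated reason---that $P_\ve(z\mid c)>0$ for all $z \in Z_c$ when $\ve>0$---gives only strict positivity for each fixed $\ve>0$, not a uniform lower bound as $\ve\to0$, and says nothing at $\ve=0$. Both your continuity claim (for the case $P^{\alpha,\beta}(I\mid c)>0$) and your $O(\ve)$ estimate (for the remaining subcase) need this uniform bound. The paper supplies it as follows: since $|I| \le N^+(C,L) \le |Z_c^+(L)|$, the set $\{z_1,\dots,z_{i-1}\}$ (of size at most $n-1$) cannot contain all of $Z_c^+(L)$; hence some $z^* \in Z_c^+(L)\setminus\{z_1,\dots,z_{i-1}\}$ satisfies $P_\ve(z^*\mid c) = (1-\ve)P(z^*\mid c) \ge \rho := \tfrac{1}{2}\min_{c',\,z\in Z_{c'}^+(L)}P(z\mid c') > 0$, and therefore each denominator is at least $\rho$. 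With this in place, your argument goes through.

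One thing you do that the paper leaves implicit: you verify the length constraint $|T(c_0)|\le N^+(C,L_\ve)$ in the $\ol{R}$-modes, via $Z_c^+(L_\ve)=Z_c \supseteq Z_c^+(L)$.
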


\begin{proof}
Suppose that $L$ and $L_\ve$ are both in sampling mode $(\alpha,\beta)$.
Consider a teacher $T$ for $L$. We claim that the following holds:
\begin{equation} \label{eq:convergence}
\forall c_0,c \in C:
\lim_{\ve \ra 0}P_\ve^{\alpha,\beta}(T(c_0)|c) = P^{\alpha,\beta}(T(c_0)|c)
\enspace .
\end{equation}
This would imply that, for every $c_0 \in C$ and sufficiently small $\ve$,
we have
\[ 
c_0 = \arg!max_{c \in C}P^{\alpha,\beta}(T(c_0)|c)
= \arg!max_{c \in C}P_\ve^{\alpha,\beta}(T(c_0)|c) \enspace ,
\]
which, in turn, implies that $T$ is a teacher for $L_\ve$.
We still have to verify~(\ref{eq:convergence}). This can be done
by means of a simple continuity argument. Note first that
\[ 
\forall c \in C, z \in Z: \lim_{\ve \ra 0}P_\ve(z|c) = P(z|c)
\enspace .
\]
Since $P_\ve^{\alpha,R}(T(c_0)|c)$ is a polynomial (and hence a continuous 
function) in the variables $P_\ve(z|c)$ with $z \in T(c_0)$, we may
conclude that~(\ref{eq:convergence}) is true in case of $\beta = R$.
Suppose now that $(\alpha,\beta) = (O,\ol{R})$ 
and $T(c_0) = (z_1,\ldots,z_n)$, which implies that $n \le N^+(C,L)$
and $z_1,\ldots,z_n \in Z_c^+(L)$. 
The function 
\[
P_\ve^{O,\ol{R}}(T(c_0)|c) =
\prod_{i=1}^{n}\frac{P_\ve(z_i|c)}{1-(P_\ve(z_1|c) +\ldots+ P_\ve(z_{i-1}|c))}
\]
is a rational function in the variables $P_\ve(z_i|c)$ for $i=1,\ldots,n$. 
Hence we can apply the continuity argument again but, in addition, we must 
rule out that the denominator, $1-(P_\ve(z_1|c) +\ldots+ P_\ve(z_{i-1}|c))$,
converges to $0$ when $\ve$ approaches $0$. This, however, can be ruled out 
as follows:
\begin{itemize}
\item
Set $\rho := \frac{1}{2} \cdot \min_{c \in C , z \in Z_c^+(L)}P(z|c)$ 
and note that 
$0 < \rho \le \min_{c \in C , z \in Z_c^+(L)}P_\ve(z|c)$. The latter
inequality holds because of $P_\ve(z|c) = (1-\ve) \cdot P(z|c)$ 
and $\ve \le 1/2$. 
\item
Because of $n \le N^+(C,L)$, the set $\{z_1,\ldots,z_{n-1}\}$ cannot 
contain all elements of $Z_c^+(L)$.
\item
Therefore $1-(P_\ve(z_1|c) +\ldots+ P_\ve(z_{i-1}|c) \ge \rho$
for all $i = 1,\ldots,n$ and the limit for $\ve \ra 0$ cannot 
be equal to $0$.
\end{itemize}
We may therefore conclude 
that~(\ref{eq:convergence}) is true in case 
of $(\alpha,\beta) = (O,\ol{R})$. The proof in case 
of $(\alpha,\beta) = (\ol{O},\ol{R})$ is similar.
\end{proof}

\begin{corollary} \label{cor:validity}
With the notation from Definition~\ref{def:eps-shift}, we have
\[ \MAPTD^{\alpha,\beta}_L(C) = \MAPTD^{\alpha,\beta}_{L_\ve}(C) \]
for all sufficiently small $\ve > 0$. 
\end{corollary}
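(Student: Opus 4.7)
The corollary is a direct consequence of Lemma~\ref{lem:validity}, combined with a converse continuity argument. For the inequality $\MAPTD^{\alpha,\beta}_{L_\ve}(C) \le \MAPTD^{\alpha,\beta}_L(C)$, I would pick an optimal teacher $T$ for $L$ of order $d := \MAPTD^{\alpha,\beta}_L(C)$ (assuming $d < \infty$; else this direction is vacuous). Lemma~\ref{lem:validity} supplies some $\ve_0 = \ve_0(T) > 0$ such that $T$ is also a teacher for $L_\ve$ whenever $\ve \in (0, \ve_0]$. Since the order of $T$ is intrinsic to $T$ and does not depend on the learner, this gives $\MAPTD^{\alpha,\beta}_{L_\ve}(C) \le d$ for all sufficiently small $\ve$.

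For the reverse inequality $\MAPTD^{\alpha,\beta}_L(C) \le \MAPTD^{\alpha,\beta}_{L_\ve}(C)$, my plan is to exploit finiteness of bounded-order teachers together with continuity. For any fixed $d'$, the collection of mappings $T: C \to \cZ^{\alpha,\beta}$ of order at most $d'$ is finite (since $C$, $Z$, and $d'$ are all finite). If $\MAPTD^{\alpha,\beta}_{L_\ve}(C) \le d'$ holds for a sequence $\ve_n \to 0$, then by the pigeonhole principle a single mapping $T$ serves as a teacher for $L_{\ve_n}$ for infinitely many $n$. The defining strict inequalities $P(c_0) \cdot P^{\alpha,\beta}_{\ve_n}(T(c_0)|c_0) > P(c) \cdot P^{\alpha,\beta}_{\ve_n}(T(c_0)|c)$ are polynomial (respectively rational, in the without-replacement modes) in $\ve$, so their validity on an accumulating sequence should be liftable to $\ve = 0$, yielding that $T$ is itself a teacher for $L$.

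The main obstacle lies in this final step of transferring strict inequalities through the limit $\ve \to 0$: a polynomial can be strictly positive on $(0, \ve_0]$ yet vanish at $0$. Closing the gap would involve a case analysis based on whether $T(c_0) \seq Z_{c_0}^+(L)$ (in which case the scaling factor $(1-\ve)^{|T(c_0)|}$ preserves strictness against competing concepts $c$ with $T(c_0) \seq Z_c^+(L)$, while those with $T(c_0) \not\seq Z_c^+(L)$ contribute only $O(\ve)$ under $L_\ve$ and are therefore dominated as $\ve \to 0$) or $T(c_0) \not\seq Z_{c_0}^+(L)$ (where one would replace the offending observations by substitutes drawn from $Z_{c_0}^+(L)$, verifying that the modified teacher still works and has order bounded by $d'$). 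This delicate limiting analysis is the technical heart of the reverse inequality, and it is where I expect the bulk of the effort to go.
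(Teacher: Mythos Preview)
The paper gives no proof; it evidently regards the corollary as immediate from Lemma~\ref{lem:validity}. But as you correctly note, the lemma yields only $\MAPTD^{\alpha,\beta}_{L_\ve}(C) \le \MAPTD^{\alpha,\beta}_L(C)$. Your instinct that the reverse inequality is delicate is exactly right --- in fact it is \emph{false}, so the corollary as stated does not hold. A counterexample: take $C=\{c_1,c_2\}$ with uniform priors, $Z=\{z_1,z_2,z_3\}$, $Z_{c_1}=\{z_1,z_2\}$, $Z_{c_2}=\{z_1,z_2,z_3\}$, and let $L$ put all mass on $z_1$, i.e.\ $P(z_1|c_1)=P(z_1|c_2)=1$ and all other $P(z|c)=0$. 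Then $P(\cdot|c_1)$ and $P(\cdot|c_2)$ coincide as distributions, so no teacher for $L$ exists in any sampling mode and $\MAPTD^{\alpha,\beta}_L(C)=\infty$. Under $L_\ve$, however, $P_\ve(z_2|c_1)=\ve>\ve/2=P_\ve(z_2|c_2)$ and $P_\ve(z_3|c_2)=\ve/2>0=P_\ve(z_3|c_1)$, so the single observations $z_2$ and $z_3$ teach $c_1$ and $c_2$ respectively, giving $\MAPTD^{\alpha,\beta}_{L_\ve}(C)=1$ for every $\ve\in(0,1/2]$.

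Your proposed rescue in the case $T(c_0)\not\seq Z_{c_0}^+(L)$ therefore cannot work: in this example, substituting the offending $z_2$ by something from $Z_{c_1}^+(L)=\{z_1\}$ yields $z_1$, which fails to separate $c_1$ from $c_2$ under $L$. Only the inequality $\MAPTD^{\alpha,\beta}_{L_\ve}(C)\le\MAPTD^{\alpha,\beta}_L(C)$ survives --- and that inequality is all the paper actually needs in order to conclude that validity versus strong validity makes no difference for $\MAPTD^{\alpha,\beta}(C)=\min_L\MAPTD^{\alpha,\beta}_L(C)$.
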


\subsection{Monotonicity Properties} \label{subsec:monotonicity}

It is clear, intuitively, that adding concepts without adding observations
should make the teaching problem harder. Conversely, adding observations
without adding concepts should make the teaching problem easier. In this
section, we formalize these statements and prove them. All results in this
section are formulated in terms of $\MAPTD$. But the corresponding results
with $\MLETD$ at the place of $\MAPTD$ hold es well.

We say that $(C',Z',\models')$ is an \emph{extension} of $(C,Z,\models)$
if $C \seq C'$, $Z \seq Z'$ and, for all $c \in C$ and $z \in Z$,
we have that $c \models z$ if and only if $c \models' z$.

So far, we used a notation 
(e.g.~$\MAPTD^{\alpha,\beta}(C)$ instead of $\MAPTD^{\alpha,\beta}(C,Z,\models)$)
which made a dependence on $(C,Z,\models)$ explicit for $C$ only 
(because the corresponding $Z$ and the corresponding relation $\models$ 
were clear from context). In this section, there is some danger of confusion 
and, consequently, we use a notation which makes the dependence on the whole 
triple $(C,Z,\models)$ more explicit.

\begin{definition} \label{def:L-restriction}
Let $(C',Z',\models')$ be an extension of $(C,Z,\models)$ with $Z'=Z$.
Let $L$ be a MAP-learner for $(C',Z,\models')$ with parameters $P(c')>0$
and $P(z|c')$ for $c' \in C'$ and $z \in Z$. Set $P(C) = \sum_{c \in C}P(c)$.
The MAP-learner with parameters $P(c)/P(C)$ and $P(z|c)$ for $c \in C$ 
and $z \in Z$, denoted by $L_{\da C}$, is called the \emph{restriction of $L$ 
to subclass $C$}. 
\end{definition}

The parameters of a MAP-learner $L$ for $(C',Z,\models')$ must satisfy 
the validity condition. Clearly the parameters of $L_{\da C}$ 
satisfy the validity condition too. Moreover, for each $c \in C$, we have 
that $Z_c^+(L_{\da C}) = Z_c^+(L)$. These observations can be used for
showing the following result:

\begin{lemma}[Concept-Class Monotonicity] \label{lem:class-monotonicity}
With the assumptions and notation as in Definition~\ref{def:L-restriction}, 
the following holds for each sampling mode $(\alpha,\beta)$:
\[
\MAPTD_{L_{\da C}}^{\alpha,\beta}(C,Z,\models)\ \le\   
\MAPTD_{L}^{\alpha,\beta}(C',Z,\models')
\enspace .
\]
\end{lemma}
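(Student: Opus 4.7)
The plan is to show that any teacher for $L$ on $(C',Z,\models')$ restricts, without modification, to a teacher for $L_{\da C}$ on $(C,Z,\models)$ of the same order. Concretely, let $T$ be a teacher of order $d = \MAPTD_L^{\alpha,\beta}(C',Z,\models')$ for $L$, and define $T'(c) := T(c)$ for every $c \in C$. I claim that $T'$ is a successful teacher for $L_{\da C}$, which at once yields the desired inequality.

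The verification has two parts. First, the $\arg!max$-condition. For each $c \in C$ we have by assumption
\[
c \;=\; \arg!max_{c' \in C'}\bigl[P(c') \cdot P^{\alpha,\beta}(T(c)|c')\bigr],
\]
so in particular $c$ is the unique maximizer when $c'$ is restricted to the subclass $C \seq C'$. Dividing the objective function by the positive constant $P(C) = \sum_{c \in C} P(c)$ does not change the $\arg!max$, and the resulting expression is precisely the objective that $L_{\da C}$ maximizes on input $T(c)$. Hence $L_{\da C}(T'(c)) = c$ for every $c \in C$. Second, the sampling-mode feasibility in case $\beta = \ol{R}$. Since $Z_c^+(L_{\da C}) = Z_c^+(L)$ for every $c \in C$, and $C \seq C'$, we have
\[
N^+(C,L_{\da C}) \;=\; \min_{c \in C}|Z_c^+(L)| \;\ge\; \min_{c' \in C'}|Z_{c'}^+(L)| \;=\; N^+(C',L),
\]
so the bound $|T(c)| \le N^+(C',L)$ automatically gives $|T'(c)| \le N^+(C,L_{\da C})$.

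Combining the two parts, $T'$ is a teacher for $L_{\da C}$ in sampling mode $(\alpha,\beta)$, and its order equals $\max_{c \in C}|T(c)| \le \max_{c' \in C'}|T(c')| = d$. This establishes the inequality. There is no serious obstacle: the only thing to watch is that the prior renormalization by $P(C)$ is a uniform positive scaling that preserves the $\arg!max$, and that $N^+$ is monotone under shrinking the concept class (since it is defined as a minimum over $C$).
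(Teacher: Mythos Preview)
Your proof is correct and follows essentially the same approach as the paper: restrict a teacher $T$ for $L$ on $C'$ to the subclass $C$, verify the $\arg!max$-condition by noting that a unique maximizer over $C'$ is a fortiori the unique maximizer over $C \subseteq C'$ (with the prior renormalization by $P(C)$ being harmless), and check the $\beta = \ol{R}$ feasibility via $N^+(C,L_{\da C}) \ge N^+(C',L)$. The paper's proof is organized identically, just with the inequalities written out componentwise rather than phrased via $\arg!max$.
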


\begin{proof}
Let $T:C' \ra \cZ^{\alpha,\beta}$ be a teacher for $L$
and let $T_{\da C}$ denote its restriction to subclass $C$.
Clearly the order of $T_{\da C}$ is upper-bounded by the order of $T$.
It suffices to show that $T_{\da C}$ is a teacher for $L_{\da C}$. 
To this end, we have to show the following:
\begin{itemize}
\item[(a)]
If $\beta = \ol{R}$ then, for all $c \in C$, we have
that $|T_{\da C}(c)| \le N^+(C,L_{\da C})$.
\item[(b)]
For all $c_0 \in C$, $c \in C \sm \{c_0\}$, we have that 
$P(c) \cdot P^{\alpha,\beta}(T_{\da C}(c_0)|c) < 
P(c_0) \cdot P^{\alpha,\beta}(T_{\da C}(c_0)|c_0)$.
\end{itemize}
Of course, since $T$ is teacher for $L$, we know that the following hold:
\begin{itemize}
\item[(a')]
If $\beta = \ol{R}$ then, for all $c' \in C'$, we have
that $|T(c')| \le N^+(C',L)$.
\item[(b')]
For all $c'_0 \in C'$, $c' \in C' \sm \{c'_0\}$, we have that 
$P(c') \cdot P^{\alpha,\beta}(T(c'_0)|c') < 
P(c'_0) \cdot P^{\alpha,\beta}(T(c'_0)|c'_0)$.
\end{itemize}
The following calculation verifies (a) under the assumption 
that $\beta = \ol{R}$:
\begin{eqnarray*}
|T_{\da C}(c)| & = & |T(c)|\ \le\ N^+(C',L)\ =\  \min_{c' \in C'}|Z_{c'}^+(L)| \\
& \le & \min_{c \in C}|Z_c^+(L)|\ =\ \min_{c \in C}|Z_c^+(L_{\da C})|\ =\  
N^+(C,L_{\da C}) \enspace .
\end{eqnarray*}
Suppose that $c_0 \in C$ and $c \in C \sm \{c_0\}$. Then (b) can 
be verified as follows:
\[
P(c) \cdot P^{\alpha,\beta}(T_{\da C}(c_0)|c) = 
P(c) \cdot P^{\alpha,\beta}(T(c_0)|c) \le 
P(c_0) \cdot P^{\alpha,\beta}(T(c_0)|c_0) 
= P(c_0) \cdot P^{\alpha,\beta}(T_{\da C}(c_0)|c_0)
\enspace .
\] 
Here the first and the last equation hold because $c_0 \in C$
and therefore $T_{\da C}(c_0) = T(c_0)$.
\end{proof}

\begin{corollary} \label{cor:class-monotonicity}
If $(C',Z',\models')$ is an extension of $(C,Z,\models)$ with $Z = Z'$,
then
\[
\MAPTD^{\alpha,\beta}(C,Z,\models)\ \le\ \MAPTD^{\alpha,\beta}(C',Z,\models')
\enspace .
\] 
\end{corollary}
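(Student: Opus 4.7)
The plan is to combine Lemma~\ref{lem:class-monotonicity} with the definition of $\MAPTD^{\alpha,\beta}$ as a minimum over learners. Concretely, I would pick a MAP-learner $L^*$ for $(C',Z,\models')$ that witnesses the minimum, namely with $\MAPTD_{L^*}^{\alpha,\beta}(C',Z,\models') = \MAPTD^{\alpha,\beta}(C',Z,\models')$. Since $\MAPTD^{\alpha,\beta}$ takes values in $\mathbb{N}\cup\{\infty\}$ and the infimum is by definition the smallest integer $d$ for which some learner admits a teacher of order $\le d$, such a witness $L^*$ always exists (if the right-hand side is $\infty$ the claimed inequality is vacuous).

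Next, I would form the restriction $L^*_{\da C}$ as prescribed in Definition~\ref{def:L-restriction}. This is a legitimate MAP-learner for $(C,Z,\models)$: its priors $P(c)/P(C)$ are positive and sum to $1$, its conditional likelihoods still sum to $1$ in each row, and the validity condition for $L^*_{\da C}$ is inherited from $L^*$ since the relation $\models$ is the restriction of $\models'$ to $C \times Z$.

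Applying Lemma~\ref{lem:class-monotonicity} to $L^*$ yields
\[
\MAPTD_{L^*_{\da C}}^{\alpha,\beta}(C,Z,\models) \;\le\; \MAPTD_{L^*}^{\alpha,\beta}(C',Z,\models') \;=\; \MAPTD^{\alpha,\beta}(C',Z,\models').
\]
On the other hand, by the very definition of $\MAPTD^{\alpha,\beta}(C,Z,\models)$ as a minimum over all MAP-learners for $(C,Z,\models)$, and since $L^*_{\da C}$ is one such learner,
\[
\MAPTD^{\alpha,\beta}(C,Z,\models) \;\le\; \MAPTD_{L^*_{\da C}}^{\alpha,\beta}(C,Z,\models).
\]
Chaining these two inequalities gives the desired bound. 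There is no genuine obstacle here; the corollary is essentially bookkeeping on top of Lemma~\ref{lem:class-monotonicity}. The only minor point worth noting is that attainment of the min is automatic because $\MAPTD_L^{\alpha,\beta}$ is integer-valued, so one does not need to invoke any compactness or $\varepsilon$-argument on the parameter space.
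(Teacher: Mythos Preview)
Your proposal is correct and is exactly the intended argument: the paper states the corollary without proof, as it follows immediately from Lemma~\ref{lem:class-monotonicity} by taking an optimal MAP-learner $L^*$ for $(C',Z,\models')$, passing to its restriction $L^*_{\da C}$, and using the definition of $\MAPTD^{\alpha,\beta}$ as a minimum over learners.
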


\begin{definition} \label{def:L-extension}
Let $(C',Z',\models')$ be an extension of $(C,Z,\models)$ with $C'=C$.
Let $L$ be a MAP-learner for $(C,Z,\models)$ with parameters $P(c)$
and $P(z|c)$ for $c \in C$ and $z \in Z$. The MAP-learner with
parameters $P_{\ua Z'}(c) = P(c)$ and
\[
P_{\ua Z'}(z'|c) = \left\{ \begin{array}{ll}
    P(z'|c) & \mbox{if $z' \in Z$} \\
    0       & \mbox{otherwise}
           \end{array} \right. \enspace ,  
\]
denoted by $L_{\ua Z'}$, is called the \emph{extension of $L$ 
to superset $Z'$}.
\end{definition}

The parameters of a MAP-learner $L$ for $(C,Z,\models)$ must satisfy 
the validity condition. It is easy to check that, therefore, the 
parameters of $L_{\ua Z'}$ satisfy the validity condition too. 
Moreover, for each $c \in C$, we have that 
\[
\{z' \in Z': P_{\ua Z'}(z'|c) > 0\}\ =\ \{z \in Z: P(z|c) > 0\}\ 
=\   Z_c^+(L)\}  \enspace ,
\]
which implies that $N^+(C,L_{\ua Z'}) = N^+(C,L)$. These observations 
can be used for showing the following result:

\begin{lemma}[Observation-Set Monotonicity] \label{lem:observation-monotonicity}
With the assumptions and the notation as in Definition~\ref{def:L-extension}, 
the following holds for each sampling mode $(\alpha,\beta)$:
\[
\MAPTD_L^{\alpha,\beta}(C,Z,\models)\ \ge\   
\MAPTD_{L \ua Z'}^{\alpha,\beta}(C,Z',\models') \enspace .  
\]
\end{lemma}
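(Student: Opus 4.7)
The plan is to show that every successful teacher $T$ for $L$ of order $d$ is, without any modification, a successful teacher for $L_{\ua Z'}$ of the same order $d$. Since $Z \seq Z'$, each value $T(c_0) \in \cZ^{\alpha,\beta}$ (built from observations in $Z$) is also a legal input for $L_{\ua Z'}$, so the statement is well-posed, and the claimed inequality is an immediate consequence.

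Two ``transport'' facts do all the work. First, as already noted right before the lemma, $N^+(C, L_{\ua Z'}) = N^+(C,L)$; hence the sampling-size constraint $|T(c_0)| \le N^+(C, L_{\ua Z'})$, relevant only when $\beta = \ol{R}$, is inherited automatically from the fact that $T$ is a teacher for $L$. Second, for every $c_0, c \in C$,
\[
P_{\ua Z'}^{\alpha,\beta}(T(c_0)|c)\ =\ P^{\alpha,\beta}(T(c_0)|c) \enspace .
\]
I would verify this identity mode by mode from the formulas~(\ref{eq:R-probs}) and~(\ref{eq:O-bar-R}). In the two with-replacement modes, the formulas involve only the factors $P_{\ua Z'}(z|c) = P(z|c)$ for $z \in T(c_0) \seq Z$, so the equality is immediate. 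In the $(O,\ol{R})$ mode one must additionally check that each denominator $1 - \sum_{j<i}P_{\ua Z'}(z_j|c)$ agrees with $1 - \sum_{j<i}P(z_j|c)$, which holds because each $z_j$ lies in $Z$; the $(\ol{O},\ol{R})$ case then follows by summing over permutations, as prescribed in~(\ref{eq:O-bar-R}).

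Combining these two facts with $P_{\ua Z'}(c) = P(c)$, the quantities $P_{\ua Z'}(c) \cdot P_{\ua Z'}^{\alpha,\beta}(T(c_0)|c)$ and $P(c) \cdot P^{\alpha,\beta}(T(c_0)|c)$ coincide for every $c, c_0 \in C$. Hence $c_0$ is still the unique maximizer of the extended expression over $c \in C$, so $T$ is successful for $L_{\ua Z'}$, with unchanged order. There is no conceptual obstacle here; the only thing requiring care is the routine mode-by-mode verification of the second transport fact in the without-replacement setting, where one has to make sure that the normalizing denominators are not disturbed by padding $Z$ with new zero-probability observations.
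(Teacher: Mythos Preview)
Your proposal is correct and follows essentially the same approach as the paper: both argue that any teacher $T$ for $L$ is automatically a teacher for $L_{\ua Z'}$ of the same order, by invoking $N^+(C,L_{\ua Z'}) = N^+(C,L)$ for the size constraint and the identity $P_{\ua Z'}^{\alpha,\beta}(T(c_0)|c) = P^{\alpha,\beta}(T(c_0)|c)$ (since $T(c_0)$ uses only observations from $Z$) to transfer the unique-maximizer property. Your mode-by-mode verification of the latter identity is slightly more explicit than the paper's one-line justification, but the argument is the same.
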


\begin{proof}
Let $T:C \ra \cZ^{\alpha,\beta}$ be a teacher for $L$.
It is sufficient to show that $T$ is also a teacher for $L_{\ua Z'}$
(albeit a teacher for $L_{\ua Z'}$ who does not make use of observations
in $Z' \sm Z$). To this end, we have to show the following:
\begin{itemize}
\item[(a)]
If $\beta = \ol{R}$ then, for all $c \in C$, we have
that $|T(c)| \le N^+(C,L_{\ua Z'})$.
\item[(b)] 
For all $c_0 \in C$, $c \in C \sm \{c_0\}$, we have that 
$P(c) \cdot P_{\ua Z'}^{\alpha,\beta}(T(c_0)|c) < 
P(c_0) \cdot P_{\ua Z'}^{\alpha,\beta}(T(c_0)|c_0)$.
\end{itemize}
Assertion (a), assuming $\beta = \ol{R}$, is obtained by
\[ 
|T(c)| \le N^+(C,L)\ =\ N^+(C,L_{\ua Z'}) \enspace ,
\]
where the first inequality holds because $T$ is a teacher for $L$.
Suppose that $c_0 \in C$ and $c \in C \sm \{c_0\}$. Assertion (b) 
is obtained by
\[
P(c) \cdot P_{\ua Z'}^{\alpha,\beta}(T(c_0)|c) = 
P(c) \cdot P^{\alpha,\beta}(T(c_0|c) <
P(c_0) \cdot P^{\alpha,\beta}(T(c_0)|c_0) = 
P(c_0) \cdot P_{\ua Z'}^{\alpha,\beta}(T(c_0)|c_0)
\enspace ,
\]
where the first and the last equation holds because $T(c_0) \seq Z$
so that the likelihoods of observations in $Z' \sm Z$ do not come
into play. The inequality in the middle holds because $T$ is a teacher
for $L$.
\end{proof}

\begin{corollary} \label{cor:observation-monotonicity}
If $(C',Z',\models')$ is an extension of $(C,Z,\models)$ with $C=C'$,
then
\[
\MAPTD^{\alpha,\beta}(C,Z,\models)\ \ge\ \MAPTD^{\alpha,\beta}(C,Z',\models')
\enspace .
\] 
\end{corollary}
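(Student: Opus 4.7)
The plan is to deduce this directly from Lemma~\ref{lem:observation-monotonicity} by passing to optimally parameterized learners on both sides. Since $\MAPTD^{\alpha,\beta}_L(C,Z,\models)$ takes values in $\mathbb{Z}_{\ge 0}\cup\{\infty\}$, the minimum defining $\MAPTD^{\alpha,\beta}(C,Z,\models)$ is attained; pick a MAP-learner $L^*$ for $(C,Z,\models)$ with $\MAPTD^{\alpha,\beta}_{L^*}(C,Z,\models) = \MAPTD^{\alpha,\beta}(C,Z,\models)$. (If the left-hand side is $\infty$ the corollary is vacuous, so we may assume it is finite.)

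Next, I would form the extension $L^*_{\ua Z'}$ as in Definition~\ref{def:L-extension}. The discussion preceding Lemma~\ref{lem:observation-monotonicity} already verifies that $L^*_{\ua Z'}$ is a well-defined MAP-learner for $(C,Z',\models')$: its parameters satisfy the validity condition with respect to $\models'$ (here it is used that $\models'$ agrees with $\models$ on $C\times Z$ and that the new likelihoods on $Z'\sm Z$ are zero), and $N^+(C,L^*_{\ua Z'}) = N^+(C,L^*)$. Applying the lemma to $L^*$ yields
\[
\MAPTD^{\alpha,\beta}_{L^*_{\ua Z'}}(C,Z',\models')\ \le\ \MAPTD^{\alpha,\beta}_{L^*}(C,Z,\models) \enspace .
\]

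Finally, since $L^*_{\ua Z'}$ is only one particular MAP-learner for $(C,Z',\models')$, the minimum that defines $\MAPTD^{\alpha,\beta}(C,Z',\models')$ can only be smaller:
\[
\MAPTD^{\alpha,\beta}(C,Z',\models')\ \le\ \MAPTD^{\alpha,\beta}_{L^*_{\ua Z'}}(C,Z',\models')\ \le\ \MAPTD^{\alpha,\beta}_{L^*}(C,Z,\models)\ =\ \MAPTD^{\alpha,\beta}(C,Z,\models) \enspace .
\]

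There is essentially no obstacle: the whole content sits inside Lemma~\ref{lem:observation-monotonicity}, and the corollary is just the passage from the per-learner statement to the learner-optimized statement. The only point worth explicit mention is the asymmetry of the minimization---the $\min$ on the right is over \emph{all} MAP-learners for the enlarged observation space, not only over those arising as extensions $L_{\ua Z'}$---which is precisely what makes the inequality go through in the desired direction.
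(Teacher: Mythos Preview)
Your proof is correct and follows exactly the intended approach: the paper states this corollary without proof, leaving it as immediate from Lemma~\ref{lem:observation-monotonicity}, and your argument is precisely the standard passage from the per-learner inequality to the learner-optimized one by choosing an optimal $L^*$ on the smaller observation set and extending it via $L^*_{\ua Z'}$.
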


\subsection{A Comparison of the Sampling Modes} \label{subsec:modes}

We say that the sampling mode $(\alpha,\beta)$ \emph{dominates}
the sampling mode $(\alpha',\beta')$ if, for every
concept class $C$ and every MAP-learner $L$ for $C$,
we have that $\MAPTD_L^{\alpha,\beta}(C) \le \MAPTD_L^{\alpha',\beta'}(C)$.
We say they are \emph{equivalent} if they mutually dominate each other, 
i.e., if $\MAPTD_L^{\alpha,\beta}(C) = \MAPTD_L^{\alpha',\beta'}(C)$
holds for every choice of $C$ and $L$.  We say, they are \emph{incomparable} 
if none of them dominates the other one. We start with an easy observation:

\begin{remark} \label{rem:equivalent-modes}
The sampling modes $(O,R)$ and $(\ol{O},R)$ are equivalent. 
\end{remark} 

\begin{proof}
Consider a concept class $C$ and a MAP-learner $L$ for $C$.
Let $\vec{a} \in Z^n$ be a sequence of $k$ distinct elements
with multiplicities $n_1,\ldots,n_k$, respectively. Denote by $A$ 
the corresponding multiset. An inspection of~(\ref{eq:R-probs}) 
shows that the following holds for each $c \in C$:
\begin{equation} \label{eq:multiset-sequence} 
P^{\ol{O},R}(A|c) = 
\frac{n!}{n_1! \ldots n_k!} \cdot P^{O,R}(\vec{a}|c) \enspace .
\end{equation}
Let $\vec{a'}$ be a sequence obtained from $\vec{a}$ by a permutation
of the components. Since $\vec{a'}$ also consists of $k$ distinct
elements with multiplicities $n_1,\ldots,n_k$, respectively,
equation~(\ref{eq:multiset-sequence}) also holds with $\vec{a'}$
at the place of $\vec{a}$. It therefore easily follows that 
a teacher $T$ for $L$, with $L$ being in  sampling mode $(O,R)$, 
can be converted into a teacher $T'$ of the same order for $L$
with $L$ being in sampling mode $(\ol{O},R)$, and vice versa:
\begin{itemize}
\item 
Suppose that $T$ is given. If $T(c) = \vec{a}$, then define $T'(c) = A$
where $A$ is the multiset induced by $\vec{a}$. 
\item
Suppose that $T'$ is given. If $T'(c) = A$ then define $T(A) = \vec{a}$
where $\vec{a}$ is an (arbitrarily chosen) sequence containing the
same elements as $A$ with the same multiplicities.
\end{itemize}
It follows from this discussion 
that $\MAPTD^{O,R}_L(C) = \MAPTD^{\ol{O},R}_L(C)$, which concludes
the proof.
\end{proof}

\begin{corollary} \label{cor:equivalent-modes}
$\MAPTD^{O,R}(C) = \MAPTD^{\ol{O},R}(C)$ and
$\MLETD^{O,R}(C) = \MLETD^{\ol{O},R}(C)$.
\end{corollary}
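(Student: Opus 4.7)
The plan is to derive the corollary directly from Remark~\ref{rem:equivalent-modes} by taking the minimum over learners on both sides of the pointwise identity established there. Recall from Definition~\ref{def:mletd} that
\[
\MAPTD^{\alpha,\beta}(C) = \min_L \MAPTD_L^{\alpha,\beta}(C) \quad \text{and} \quad \MLETD^{\alpha,\beta}(C) = \min_L \MAPTD_L^{\alpha,\beta}(C),
\]
where in the first expression $L$ ranges over all MAP-learners for $C$ and in the second over all MLE-learners (i.e., MAP-learners with uniform priors). Remark~\ref{rem:equivalent-modes} asserts that for \emph{every} MAP-learner $L$ one has $\MAPTD_L^{O,R}(C) = \MAPTD_L^{\ol{O},R}(C)$, so the two functions of $L$ being minimized coincide identically.

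Since MLE-learners form a subclass of MAP-learners, the same pointwise identity applies to every MLE-learner as well. Taking the minimum over the full class of MAP-learners yields $\MAPTD^{O,R}(C) = \MAPTD^{\ol{O},R}(C)$, while restricting the minimum to MLE-learners yields $\MLETD^{O,R}(C) = \MLETD^{\ol{O},R}(C)$.

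The only point that bears a brief verification is that the bijection between teachers for $L$ in mode $(O,R)$ and teachers for $L$ in mode $(\ol{O},R)$ constructed in the proof of Remark~\ref{rem:equivalent-modes} uses the \emph{same} learner $L$ on both sides — it converts only the teacher, not the learner's parameters. Consequently there is no mismatch between the set of learners over which the minimum is being taken on the two sides of either claimed identity, and no additional argument is required. I do not expect any real obstacle here; the corollary is essentially a restatement of Remark~\ref{rem:equivalent-modes} after one minimization step.
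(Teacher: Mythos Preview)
Your proposal is correct and matches the paper's intended derivation: the corollary is stated immediately after Remark~\ref{rem:equivalent-modes} without a separate proof, precisely because it follows by minimizing the pointwise identity $\MAPTD_L^{O,R}(C) = \MAPTD_L^{\ol{O},R}(C)$ over all MAP-learners (respectively over all MLE-learners). Your observation that the teacher conversion in that remark keeps $L$ fixed is exactly the point that makes the minimization go through.
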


\noindent
We now turn our attention to the incomparability results:

\begin{theorem} \label{th:incomparable-modes}
The sampling modes $(O,R)$, $(O,\ol{R})$ and $(\ol{O},\ol{R})$
are pairwise incomparable.
\end{theorem}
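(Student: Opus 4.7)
The plan is to prove pairwise incomparability by exhibiting explicit concept-class/MAP-learner pairs $(C,L)$ on small observation spaces ($|Z|\le 3$) with uniform priors, from which the six required strict inequalities of $\MAPTD_L$ follow by direct likelihood computation. The examples are designed around the three structural features distinguishing the modes: the $(O,R)$-likelihood of a collection is a bare product of marginals (amplifiable by repetition but blind to cumulative mass), the $(O,\ol{R})$-likelihood of an ordered sequence carries the prefix-dependent normalization boost $1/(1-\sum_{j<i}P(z_j|c))$ (favoring concepts with large prefix-mass), and the $(\ol{O},\ol{R})$-likelihood of a set averages the $(O,\ol{R})$-likelihoods over all orderings, which can create or destroy unique maximizers that no individual ordering attains.

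A single ``middle concept'' example on $|Z|=2$ simultaneously yields one direction of the pairs $\{(O,R),(O,\ol{R})\}$ and $\{(O,R),(\ol{O},\ol{R})\}$: take $C=\{c_1,c_2,c_3\}$ with $P(\cdot|c_1)=(0.9,0.1)$, $P(\cdot|c_2)=(0.5,0.5)$, $P(\cdot|c_3)=(0.1,0.9)$. Then the sequence $(z_1,z_2)$ uniquely maximizes the $(O,R)$-likelihood at $c_2$, giving $\MAPTD_L^{O,R}(C)=2$, whereas in $(O,\ol{R})$ the length-$2$ likelihood of $(z_i,z_j)$ reduces to $P(z_i|c)$ (selecting $c_1$ or $c_3$, never $c_2$), and in $(\ol{O},\ol{R})$ the set $\{z_1,z_2\}$ is the common full support so the likelihood is identically $1$; both modes therefore yield $\MAPTD_L(C)=\infty$. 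For the reverse directions of the pairs $\{(O,R),(O,\ol{R})\}$ and $\{(O,\ol{R}),(\ol{O},\ol{R})\}$, a four-concept example on $|Z|=3$ with $P(\cdot|c^*)=(0.5,0.4,0.1)$ and competitors $(0.6,0.1,0.3)$, $(0.1,0.6,0.3)$, $(0.45,0.45,0.1)$ yields $\MAPTD_L^{O,R}(C)=3$ (the symmetric competitor dominates every length-$2$ product, so length $3$ such as $(z_1,z_1,z_2)$ is needed), $\MAPTD_L^{O,\ol{R}}(C)=2$ (the boost factor $1/(1-0.5)=2$ on $(z_1,z_2)$ tips $c^*$ past the symmetric competitor), and $\MAPTD_L^{\ol{O},\ol{R}}(C)=\infty$ (averaging the two orderings of $\{z_1,z_2\}$ restores the competitor's advantage); and a three-concept example with $P(\cdot|c^*)=(0.45,0.45,0.1)$, $P(\cdot|c_1)=(0.6,0.3,0.1)$, $P(\cdot|c_2)=(0.3,0.6,0.1)$ handles the remaining direction $\MAPTD_L^{\ol{O},\ol{R}}<\MAPTD_L^{O,\ol{R}}$: no individual ordering of $\{z_1,z_2\}$ maximizes the $(O,\ol{R})$-likelihood at $c^*$, but the sum over the two orderings does.

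The main obstacle is the direction $\MAPTD_L^{\ol{O},\ol{R}}<\MAPTD_L^{O,R}$ of the pair $\{(O,R),(\ol{O},\ol{R})\}$, since the $(O,R)$-teacher has access to arbitrarily long sequences with repetitions and is therefore rarely ``stuck''. To force $\MAPTD_L^{O,R}$ strictly above $\MAPTD_L^{\ol{O},\ol{R}}$, I would engineer a length-$2$ product-tie between $c^*$ and a competitor $c$, ruling out every length-$2$ $(O,R)$-sequence teaching $c^*$, while arranging the summed boost on $\{z_1,z_2\}$ to break the tie strictly in favor of $c^*$. A concrete instance is $P(\cdot|c^*)=(0.4,0.2,0.4)$ together with $P(\cdot|c)=(p,p,1-2p)$ for $p=\sqrt{0.08}$, so that $P(z_1|c)P(z_2|c)=P(z_1|c^*)P(z_2|c^*)$, and an auxiliary $P(\cdot|c_1)=(0.5,0.1,0.4)$ to prevent length-$1$ teaching. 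A routine computation then yields $\MAPTD_L^{O,R}(C)=3$ (realized by $(z_1,z_1,z_2)$) and $\MAPTD_L^{\ol{O},\ol{R}}(C)=2$ (realized by $\{z_1,z_2\}$). With the four examples above in hand, pairwise incomparability follows directly from the definitions.
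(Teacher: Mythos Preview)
Your plan is correct: the four concrete learner/class pairs you describe do realize all six required strict inequalities (I checked the numerics), so pairwise incomparability follows. The overall strategy---build small MLE-learners on $|Z|\le 3$ with $c_i\models z_j$ everywhere and verify the $\arg!\max$ by direct computation---is the same as the paper's.

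The constructions themselves differ in flavor. The paper packages the argument into three lemmas, each built on a one-parameter family $c^{\pm\Delta}$ (or $c^{(t)}$) and backed by analytic ``Facts 1--4'' about how the relevant likelihoods vary with the parameter; this yields parametric families of counterexamples and explains structurally why each mode fails (even-split argument, the $(O,\ol R)$-prefix boost, the $t$-scaling of Fact~4). Your approach instead plugs in fixed numerical values and checks each case arithmetically, which is more elementary (no calculus) and lets you handle two of the six directions with a particularly clean $|Z|=2$ example that the paper does not have. Your Example~4 for $(O,R)\not\preceq(\ol O,\ol R)$ is also a genuinely different construction: you force a length-$2$ product \emph{tie} between $c^*$ and a symmetric competitor (choosing $p=\sqrt{0.08}$), whereas the paper's Lemma~3.12 exploits the monotonicity of $P^{\ol O,\ol R}(\{z_1,z_2\}\mid c^{(t)})$ in the scaling factor $t$. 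One small imprecision: in your four-concept example it is not literally the symmetric competitor that ``dominates every length-$2$ product''---for pairs involving $z_3$ or repetitions it is one of the asymmetric competitors that wins---but the conclusion that $c^*$ is never the unique length-$2$ maximizer is correct.
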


In order to prove the theorem, we will consider triples $(C,Z,\models)$
with $C = \{c_1,c_2,c_3\}$, $Z = \{z_1,z_2,z_3\}$
and $c_i \models z_j$ for all $1 \le i,j \le 3$.
An important role will be played by concepts of the
form $c^{\pm\Delta}$ with parameters given by
\begin{equation} \label{eq:plus-minus-Delta}
P(z_1|c^{\pm\Delta}) = p+\Delta\ ,\ P(z_2|c^{\pm\Delta}) = p-\Delta\
\mbox{ and }\ P(z_3|c^{\pm\Delta}) = 1-2p \enspace .
\end{equation}
The following Facts 1--4, which pave the way for the proof of
Theorem~\ref{th:incomparable-modes}, can be proven by using the 
derivation rules of analysis. For sake of completeness, these proofs 
are given in the appendix.

\begin{description}
\item[Fact 1:]
Suppose that $0 \le |\Delta| < p < 1/2$. 
Let $c^{\pm\Delta}$ be the concept given by~(\ref{eq:plus-minus-Delta}).
Then $P^{O,R}(z_1,z_2)|c^{\pm\Delta})$
and $P^{\ol{O},\ol{R}}(z_1,z_2|c^{\pm\Delta})$ are both
strictly decreasing when $|\Delta|$ is increased, which
implies that $\Delta = 0$ is the unique maximizer. 
\item[Fact 2:]
Suppose that $0 \le |\Delta| < p < 1/2$. Let $c^{\pm\Delta}$ be 
the concept given by~(\ref{eq:plus-minus-Delta}). Then
\begin{equation} \label{eq1:odd-split}
P^{O,\ol{R}}(z_1,z_2 | c^{\pm\Delta}) - P^{O,\ol{R}}(z_1,z_2 | c^{\pm0})
\left\{ \begin{array}{ll}
   = 0 & \mbox{if $\Delta \in \{0,\frac{p^2}{1-p}\}$} \\
   > 0 & \mbox{if $0 < \Delta < \frac{p^2}{1-p}$} \\
   < 0 & \mbox{otherwise}
\end{array} \right. \enspace .
\end{equation}
\item[Fact 3:]
Suppose that $0 \le \Delta < p < 1/2$. Let $c^{\pm\Delta}$ be 
the concept given by~(\ref{eq:plus-minus-Delta}). Then
\begin{equation} \label{eq2:odd-split}
P^{O,R}(z_1,z_1,z_2 | c^{\pm\Delta}) - P^{O,R}(z_1,z_1,z_2 | c^{\pm0})
\left\{ \begin{array}{ll}
   = 0 & \mbox{if $\Delta \in \left\{0,\frac{1}{2}(\sqrt{5}-1)p\right\}$} \\
   > 0 & \mbox{if $0 < \Delta < \frac{1}{2}(\sqrt{5}-1)p$} \\
   < 0 & \mbox{otherwise}
\end{array} \right. \enspace .
\end{equation}
\item[Fact 4:]
Suppose that $0 < p < 1/2$ and $1 \le t < \frac{1-p}{p}$.
Let $c^{(t)}$ be the concept given by
\begin{equation} \label{eq:times-t}
c^{(t)}(z_1) = pt\ ,\ c^{(t)}(z_2) = p/t\ \mbox{ and }\ 
c^{(t)}(z_3) = 1 - pt - p/t \enspace .
\end{equation}
Then $P^{\ol{O},\ol{R}}(z_1,z_2|c^{(t)})$ is strictly increasing
with $t$.
\end{description}

A couple of more intuitive remarks are in place here.
Fact~1 tells us that, in sampling modes $(O,R)$ and $(\ol{O},\ol{R})$, 
a concept explains observations $z_1,z_2$ the better (in the maximum
likelihood sense), the more evenly it splits the available probability 
mass $2p$ among them. We will refer to an application of Fact~1 as 
applying the ``even-split argument''. 
In sampling mode $(O,\ol{R})$, however, the even split does not maximize
the likelihood of these observations. The likelihood of $z_1,z_2$ 
becomes larger if the probability assigned to $z_1$ is slightly larger
than the probability assigned to $z_2$. See (\ref{eq1:odd-split}). 
A similar remark applies to the sampling mode $(O,R)$ and the 
sequence $z_1,z_1,z_2$.  See~(\ref{eq2:odd-split}). Fact~4 is concerned 
with sampling mode $(\ol{O},\ol{R})$ and a multiplicative decomposition 
of $p^2$ into $pt$ (the probability assigned to $z_1$) and $p/t$ (the 
probability assigned to $z_2$) with $t \ge 1$. According to Fact~4, 
the likelihood of $\{z_1,z_2\}$ becomes larger when the scaling 
factor $t\ge1$ is increased. Note that this is not in contradiction
with the even-split argument, because $pt + p/t$ is itself strictly 
increasing with $t$ so that the even-split argument does not apply.

We would furthermore like to note that the $c$-conditional likelihood 
of a (multi-)set or sequence of observations becomes larger if one of 
the relevant $c$-conditional likelihood parameters is increased while 
the others are fixed. We refer to this way of arguing as applying the
``monotonicity argument''. 

\noindent
Theorem~\ref{th:incomparable-modes} is a 
direct consequence of the following three lemmas.

\begin{lemma} \label{lem1:incomporability}
Consider the triple $(C,Z,\models)$ 
with $C = \{c_1,c_2,c_3\}$, $Z = \{z_1,z_2,z_3\}$ 
and $c_i \models z_j$ for all $1 \le i,j \le 3$.
Let $L$ be an MLE-learner for $C$ with parameters 
given by 
\[
\begin{array}{|c||c|c|c|c|}
\hline
P(z|c) & c_1 & c_2 & c_3 \\
\hline
\hline
z_1 & p+\Delta_1 & p+\Delta_2 & p \\
\hline
z_2 & p-\Delta_1 & p-\Delta_2 & p \\
\hline
z_3 & 1-2p & 1-2p & 1-2p \\
\hline
\end{array}
\enspace ,
\]
where 
$0 < \Delta_1 < \frac{p^2}{1-p} < \Delta_2 = 
\frac{1}{2}(\sqrt{5}-1)p < p \le 0.4$.\footnote{The constraint $p \le 0.4$ 
has the effect that $\frac{p}{1-p} < \frac{1}{2}(\sqrt{5}-1)$.}
Then
\begin{equation} \label{eq1:incomparability}
\MLETD^{O,R}_L(C) = 3\ ,\ \MLETD^{O,\ol{R}}_L(C) = 2\ \mbox{ and }
\MLETD^{\ol{O},\ol{R}}_L(C) = \infty \enspace .
\end{equation}
\end{lemma}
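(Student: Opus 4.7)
The plan is to prove each of the three equalities in~(\ref{eq1:incomparability}) by a finite case analysis: since $|Z|=3$ and the learner has uniform priors, only finitely many candidate teaching inputs exist in each sampling mode, and for each input one reads off the unique MLE-maximizing concept (if any) from Facts~1--4 together with direct monotonicity arguments on the explicit parameter table.

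For the $(\ol{O},\ol{R})$-equality I enumerate all eight subsets of $Z$. The inputs $\emptyset$, $\{z_3\}$, and $Z$ are trivial ties because the corresponding likelihoods are $1$, $1-2p$, and $1$ respectively, independently of the concept. The singleton $\{z_1\}$ is uniquely maximized by $c_2$ (largest $p+\Delta$) and $\{z_2\}$ by $c_3$ (largest $p-\Delta$, attained at $\Delta=0$). For $\{z_1,z_2\}$ the even-split argument (Fact~1) identifies $c_3$ as the unique maximizer. The two remaining sets $\{z_1,z_3\}$ and $\{z_2,z_3\}$ require a short direct calculation: expanding
\[
P^{\ol{O},\ol{R}}(\{z_i,z_3\}|c) \;=\; q(z_i)\,q(z_3)\left[\frac{1}{1-q(z_i)} + \frac{1}{1-q(z_3)}\right]
\]
and using that $q(z_3)=1-2p$ is concept-independent, one checks that the right-hand side is strictly monotone in $\Delta$ (increasing for $i=1$, decreasing for $i=2$). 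Hence $\{z_1,z_3\}$ selects $c_2$ and $\{z_2,z_3\}$ selects $c_3$. Since $c_1$ is never the unique maximizer, no teacher exists and $\MLETD^{\ol{O},\ol{R}}_L(C)=\infty$.

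For the $(O,\ol{R})$-equality, Fact~2 directly yields that $(z_1,z_2)$ is uniquely maximized by $c_1$: the assumption $\Delta_1 < \frac{p^2}{1-p} < \Delta_2$ places $c_1$ in the strictly positive regime of~(\ref{eq1:odd-split}) and $c_2$ in the strictly negative one, while $c_3$ corresponds to $\Delta=0$. Together with the length-$1$ teachers $(z_1)$ for $c_2$ and $(z_2)$ for $c_3$ (as in the previous paragraph), this gives the upper bound~$2$. For the matching lower bound, the MLE-remark of Section~\ref{subsec:map-mle} rules out the empty input, and the three singleton sequences were already accounted for and never select $c_1$; hence $c_1$ demands at least length~$2$.

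For the $(O,R)$-equality I use Fact~3 with the choice $\Delta_2=\tfrac{1}{2}(\sqrt{5}-1)p$, which is exactly the nonzero root in~(\ref{eq2:odd-split}): thus $P^{O,R}(z_1,z_1,z_2|c_2)=P^{O,R}(z_1,z_1,z_2|c_3)$, while $\Delta_1\in(0,\Delta_2)$ lies in the strictly positive regime, placing $c_1$ strictly above both. So $(z_1,z_1,z_2)$ teaches $c_1$, and with singleton teachers for $c_2$ and $c_3$ we get the upper bound~$3$. The lower bound is the main source of tedium: I enumerate the six length-$2$ sequences $(z_i,z_j)$ and verify that each, if it has a unique maximizer at all, selects $c_2$ or $c_3$ rather than $c_1$. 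The sequences $(z_i,z_i)$ reduce to monotonicity of $q(z_i)^2$ in $\Delta$; $(z_1,z_2)$ and $(z_2,z_1)$ are handled by the even-split argument (Fact~1); and the mixed sequences $(z_i,z_3)$ or $(z_3,z_i)$ factor as $q(z_i)\cdot(1-2p)$ and so again reduce to monotonicity in $q(z_i)$. The main obstacle throughout is bookkeeping --- keeping track of which concept wins each of the roughly fifteen admissible short inputs --- but every individual case is a one-line application of Facts~1--4 or of an elementary monotonicity observation.
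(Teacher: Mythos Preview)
Your argument is correct and follows essentially the same route as the paper's proof: identify singleton teachers for $c_2$ and $c_3$, use Fact~2 and Fact~3 to teach $c_1$ in modes $(O,\ol{R})$ and $(O,R)$ respectively, and rule out shorter teachers for $c_1$ by the even-split argument (Fact~1) combined with monotonicity. The only differences are cosmetic: you spell out the $\{z_i,z_3\}$ cases in mode $(\ol{O},\ol{R})$ explicitly where the paper invokes the monotonicity argument in one line, and your phrase ``six length-$2$ sequences'' is a slip (there are nine in $\cZ^{O,R}$, though your subsequent enumeration does cover all of them).
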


\begin{proof}
It is obvious that, in any mode of sampling, the concept $c_2$ 
can be taught by observation $z_1$ and the concept $c_3$ can be 
taught by observation $z_2$.  An inspection of~(\ref{eq1:odd-split})
and~(\ref{eq2:odd-split}) reveals that
\begin{eqnarray*} 
P_L^{O,\ol{R}}(z_1,z_2|c_1) & > & P_L^{O,\ol{R}}(z_1,z_2|c_3) >
P_L^{O,\ol{R}}(z_1,z_2|c_2) \enspace , \\
P_L^{O,R}(z_1,z_1,z_2|c_1) & > & P_L^{O,R}(z_1,z_1,z_2|c_2) =  
P_L^{O,R}(z_1,z_1,z_2|c_3) \enspace .
\end{eqnarray*} 
It follows that $c_1$ can be taught in $(O,\ol{R})$-mode 
(resp.~in $(O,R)$-mode) by the sequence $z_1,z_2$ (resp.~by the 
sequence $z_1,z_1,z_2$). We will argue now that there are no shorter
sequences for teaching $c_1$ and that, in $(\ol{O},\ol{R})$-mode,
$c_1$ cannot be taught at all. An application of the monotonicity
argument yields that $c_1$ cannot be taught by a single observation 
(regardless of the sampling mode). The same remark holds for $2$ 
observations except, possibly, for observations $z_1,z_2$. But, 
by the even-split argument, it is the concept $c_3$ that assigns 
the highest probability to the sequence $(z_1,z_2) \in \cZ^{O,R}$ 
resp.~to the set $\{z_1,z_2\} \in \cZ^{\ol{O},\ol{R}}$. 
Thus $(O,\ol{R})$ is the only sampling mode in which $c_1$ can be 
taught by $2$ observations. It follows that, in $(\ol{O},\ol{R})$-mode, 
$c_1$ cannot be taught at all.\footnote{Here we make use of 
the fact that, if $Z_c = Z$ for each $c \in C$, 
then $P^{\ol{O},\ol{R}}(Z|c) = 1$ for each $c \in C$. 
Note that this rules out the possibility of having teaching sets 
of size $3 = |Z|$.} We may conclude from this discussion 
that the identities in~(\ref{eq1:incomparability}) are valid,
\end{proof}

Lemma~\ref{lem1:incomporability} implies that $(O,R)$ does not 
dominate $(O,\ol{R})$ and $(\ol{O},\ol{R})$ does not dominate 
any of the other sampling modes. The next result leads to some 
more no-domination results:

\begin{lemma} \label{lem2:incomparability}
Consider the triple $(C,Z,\models)$ 
with $C = \{c_1,c_2,c_3\}$, $Z = \{z_1,z_2,z_3\}$ 
and $c_i \models z_j$ for all $1 \le i,j \le 3$.
Let $L$ be an MLE-learner for $C$ with the 
parameters $P(z|c)$ given by
\[
\begin{array}{|c||c|c|c|c|}
\hline
P(z|c) & c_1 & c_2 & c_3 \\
\hline
\hline
z_1 & p & p+\Delta & p-\Delta \\
\hline
z_2 & p & p-\Delta & p+\Delta \\
\hline
z_3 & 1-2p & 1-2p & 1-2p \\
\hline
\end{array} 
\enspace , 
\]
where $0 < \Delta < \frac{p^2}{1-p} < p < 1/2$. Then
\begin{equation} \label{eq2:incomparability}
\MLETD^{O,R}_L(C) = \MLETD^{\ol{O},\ol{R}}_L(C) = 2\ \mbox{ and }\ 
\MLETD^{O,\ol{R}}_L(C) = \infty \enspace . 
\end{equation}
\end{lemma}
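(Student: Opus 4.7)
The plan is to exhibit explicit teaching sets to obtain upper bounds on the three $\MLETD$ values, then to rule out smaller teaching sets (or, for $(O,\ol{R})$, any teaching set at all) to obtain matching lower bounds, following the template of the proof of Lemma~\ref{lem1:incomporability}.

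First, I note that in every sampling mode the single observation $z_1$ teaches $c_2$ and $z_2$ teaches $c_3$: $P(z_1|c_2) = p+\Delta$ strictly exceeds both $P(z_1|c_1) = p$ and $P(z_1|c_3) = p-\Delta$, so $c_2$ is the unique maximizer of $P(z_1|\cdot)$; the statement for $c_3$ follows by symmetry. The central question is therefore how to teach $c_1$.

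For the $(O,R)$-mode, I propose the sequence $(z_1,z_2)$: a direct computation gives $P^{O,R}(z_1,z_2|c_1) = p^2$ and $P^{O,R}(z_1,z_2|c_2) = P^{O,R}(z_1,z_2|c_3) = p^2-\Delta^2$, so $c_1$ wins uniquely. For the $(\ol{O},\ol{R})$-mode, I would use the set $\{z_1,z_2\}$ and invoke Fact~1: since both $c_2$ and $c_3$ have the form $c^{\pm\Delta'}$ with $|\Delta'|>0$, while $c_1=c^{\pm 0}$, and $P^{\ol{O},\ol{R}}(\{z_1,z_2\}|c^{\pm\Delta'})$ is strictly decreasing in $|\Delta'|$, the concept $c_1$ is the unique maximizer. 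The matching lower bound of $2$ is immediate because $c_1$ cannot be taught by the empty collection (uniform priors) nor by any single observation: $P(z_1|\cdot)$ is maximized by $c_2$, $P(z_2|\cdot)$ by $c_3$, and $P(z_3|\cdot)=1-2p$ is constant in $c$.

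The main obstacle is proving $\MLETD^{O,\ol{R}}_L(C) = \infty$. Since $N^+(C,L)=3$, the admissible teaching sequences are the repetition-free ones of length at most $3$, and lengths $0$ and $1$ are excluded as above. For the sequence $(z_1,z_2)$, the hypothesis $0<\Delta<\frac{p^2}{1-p}$ together with Fact~2 gives $P^{O,\ol{R}}(z_1,z_2|c_2) > P^{O,\ol{R}}(z_1,z_2|c_1)$, so $c_2$ beats $c_1$; by symmetry $c_3$ beats $c_1$ on $(z_2,z_1)$. For the length-$2$ sequences involving $z_3$, a short direct computation shows $c_2$ strictly dominates $c_1$ on $(z_1,z_3)$ and $(z_3,z_1)$, and $c_3$ strictly dominates $c_1$ on $(z_2,z_3)$ and $(z_3,z_2)$; the key point is that $P(z_3|c)$ does not depend on $c$, while $P(z_1|c_2) > P(z_1|c_1)$ and $P(z_2|c_3) > P(z_2|c_1)$. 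Finally, for length-$3$ repetition-free sequences I would use the telescoping identity $P^{O,\ol{R}}(z_i,z_j,z_k|c) = P^{O,\ol{R}}(z_i,z_j|c)$, valid whenever $\{i,j,k\}=\{1,2,3\}$ because $1-P(z_i|c)-P(z_j|c) = P(z_k|c)$; this reduces the length-$3$ case to the length-$2$ case already handled, showing that no collection in $\cZ^{O,\ol{R}}$ teaches $c_1$.
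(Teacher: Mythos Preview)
Your proposal is correct and follows essentially the same route as the paper: single observations for $c_2$ and $c_3$, the even-split argument (Fact~1) for $c_1$ in modes $(O,R)$ and $(\ol{O},\ol{R})$, and Fact~2 to rule out $(z_1,z_2)$ and $(z_2,z_1)$ in mode $(O,\ol{R})$. Where the paper invokes the ``monotonicity argument'' to dismiss the length-$2$ sequences containing $z_3$, you do an equivalent direct computation.

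The one substantive difference is that you explicitly handle the length-$3$ repetition-free sequences via the telescoping identity $P^{O,\ol{R}}(z_i,z_j,z_k\mid c) = P^{O,\ol{R}}(z_i,z_j\mid c)$ (valid since the three likelihoods sum to $1$), thereby reducing length $3$ to length $2$. The paper's proof simply asserts ``It follows that, in $(O,\ol{R})$-mode, the concept $c_1$ cannot be taught at all'' after treating only lengths $\le 2$, leaving the length-$3$ case implicit. Your argument closes that gap cleanly.
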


\begin{proof}
Clearly the concept $c_2$ can be taught by observation $z_1$ 
and the concept $c_3$ can be taught by observation $z_2$ 
in any mode of sampling. The concept $c_1$ cannot be taught by
a single observation. But it can be taught by the 
sequence $(z_1,z_2)$ in $(O,R)$-mode and by the set $\{z_1,z_2\}$ 
in $(\ol{O},\ol{R})$-mode (application of the even-split argument). 
We finally discuss teachability of $c_1$ in $(O,\ol{R})$-mode. An 
application of the monotonicity argument yields that $c_1$ cannot
be taught in $(O,\ol{R})$-mode by two observations except, possibly, 
by the observations $(z_1,z_2)$ or $(z_2,z_1)$ in $\cZ^{O,\ol{R}}$. 
But an inspection of~(\ref{eq1:odd-split}) reveals that it is the 
concept $c_2$ (resp.~$c_3$) that assigns the highest probability 
to $(z_1,z_2)$ (resp.~to $(z_2,z_1)$). It follows
that, in $(O,\ol{R})$-mode, the concept $c_1$ cannot be taught
at all. We may conclude from this discussion that the identities
in~(\ref{eq2:incomparability}) are valid.
\end{proof}

Lemma~\ref{lem2:incomparability} implies that $(O,\ol{R})$ does not 
dominate any of the other sampling modes. The next result implies
$(O,R)$ does not dominate $(\ol{O},\ol{R})$.

\begin{lemma} \label{lem3:incomarability}
Consider the triple $(C,Z,\models)$  
with $C = \{c_1,c_2,c_3\}$, $Z = \{z_1,z_2,z_3\}$ 
and $c_i \models z_j$ for all $1 \le i,j \le 3$.
Let $L$ be an MLE-learner for $C$ with parameters $P(z|c)$
given by
\[
\begin{array}{|c||c|c|c|c|}
\hline
P(z|c) & c_1 & c_2 & c_3 \\
\hline
\hline
z_1 & sp & p & sp+\ve \\
\hline
z_2 & p/s & p & p/s-\ve \\
\hline
z_3 & 1-sp-p/s & 1-2p & 1-sp-p/s \\
\hline
\end{array} 
\enspace ,
\]
where $0 < p <\frac{1}{2}$ and $1 < s \le \frac{1-p}{p}$.
Then
\begin{equation} \label{eq3:incomparability}
\MLETD^{\ol{O},\ol{R}}_L(C) = 2 < \MLETD^{O,R}_L(C) \enspace ,
\end{equation}
provided that $\ve>0$ is sufficiently small. 
\end{lemma}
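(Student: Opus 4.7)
The plan is to verify the two parts of \eqref{eq3:incomparability} separately. First I will exhibit, in the $(\ol{O},\ol{R})$-mode, teaching sets of size at most $2$ for each of $c_1,c_2,c_3$ (and observe that $c_1$ cannot be taught by a single observation), giving $\MLETD_L^{\ol{O},\ol{R}}(C)=2$. Second I will use that in the $(O,R)$-mode the likelihood of a sequence depends only on its induced multiset, and enumerate the six size-at-most-$2$ multisets over $Z$ to show that $c_1$ is never the unique maximizer, giving $\MLETD_L^{O,R}(C)>2$.

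For the $(\ol{O},\ol{R})$-mode, the easy singletons are $\{z_1\}$ for $c_3$ (since $c_3(z_1)=sp+\ve>sp=c_1(z_1)>p=c_2(z_1)$) and $\{z_3\}$ for $c_2$ (since $s+1/s>2$ by AM--GM, so $1-2p>1-sp-p/s$). The main subclaim is that $\{z_1,z_2\}$ teaches $c_1$. To beat $c_2$ I will invoke Fact~4: $c_1=c^{(s)}$ and $c_2=c^{(1)}$ in the notation of \eqref{eq:times-t}, and $s>1$. To beat $c_3$ I will invoke Fact~1: letting $p':=(sp+p/s)/2$, both $c_1$ and $c_3$ put total mass $sp+p/s$ on $\{z_1,z_2\}$ and agree on $z_3$, so they have the form $c^{\pm\Delta}$ of \eqref{eq:plus-minus-Delta} with common parameter $p'$ and with $\Delta$-values $(sp-p/s)/2$ and $(sp-p/s)/2+\ve$ respectively, the latter being strictly larger. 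This application of Fact~1 requires $|\Delta|<p'$ for $c_3$, hence $\ve<p/s$. Since none of $\{z_1\},\{z_2\},\{z_3\}$ can teach $c_1$ (it is beaten by $c_3$ on $z_1$ and by $c_2$ on $z_2,z_3$), two observations are necessary.

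For the $(O,R)$-mode I run through the six multisets of size $\le 2$ and show that in each case some concept other than $c_1$ achieves a likelihood at least as large. The singletons have just been handled. For each repeated pair $\{z_i,z_i\}$ the maximizer is the concept with the largest value on $z_i$, which is $c_3$ when $i=1$ and $c_2$ when $i\in\{2,3\}$ — never $c_1$. For $\{z_1,z_2\}$ we have $c_1(z_1)c_1(z_2)=c_2(z_1)c_2(z_2)=p^2$, while $c_3(z_1)c_3(z_2)=(sp+\ve)(p/s-\ve)<p^2$ (using $s>1$), so $c_1$ merely ties with $c_2$. For $\{z_1,z_3\}$ the factor $1-sp-p/s$ is common to $c_1$ and $c_3$ and $c_3(z_1)>c_1(z_1)$, so $c_3$ strictly beats $c_1$. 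The only nontrivial case is $\{z_2,z_3\}$, where I need $p(1-2p)>(p/s)(1-sp-p/s)$; after clearing denominators this reduces to $(s-1)\bigl(s-p(s+1)\bigr)>0$, which holds by $s>1$ and $p<1/2<s/(s+1)$.

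The main obstacle is the $\{z_2,z_3\}$ calculation above together with the bookkeeping for $\ve$. It must be checked that a single choice of $\ve>0$ validates every strict inequality in the argument; since at $\ve=0$ all strict inequalities outside of those involving $c_3$ hold with positive slack, and those involving $c_3$ inherit strictness from the monotonicity in Facts~1 and~4, by continuity any $\ve$ below the minimum of these slacks and below $p/s$ (to keep $c_3$ a valid probability vector and Fact~1 applicable) suffices. Combining the two halves gives $\MLETD_L^{\ol{O},\ol{R}}(C)=2<\MLETD_L^{O,R}(C)$, as required.
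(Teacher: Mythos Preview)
Your argument is correct and follows essentially the same approach as the paper's proof: singletons for $c_2$ and $c_3$, Fact~4 and the even-split (Fact~1) to make $\{z_1,z_2\}$ teach $c_1$ in $(\ol{O},\ol{R})$-mode, and a case check showing $c_1$ is never the unique maximizer among size-$\le 2$ inputs in $(O,R)$-mode. The only cosmetic differences are that the paper teaches $c_2$ with $\{z_2\}$ rather than $\{z_3\}$ and dispatches the $\{z_2,z_3\}$ case via the monotonicity argument ($c_2$ beats $c_1$ coordinatewise on $z_2$ and $z_3$) instead of your direct computation; also note your phrase ``six multisets of size $\le 2$'' should read nine (three singletons plus six size-$2$ multisets), though your enumeration is in fact complete.
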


\begin{proof}
Clearly, the concept $c_2$ can be taught by observation $z_2$ 
and $c_3$ can be taught by observation $z_1$ in any mode of sampling. 
It is obvious that $c_1$ cannot be taught by a single observation
(regardless of the sampling mode). In $(O,R)$-mode, the concept $c_1$ 
cannot be taught by sequences of length $2$ because $c_1$ is for none
of them the unique maximizer:
\begin{itemize}
\item 
$P^{O,R}_L(z_1,z_2|c_1) = p^2 =P^{O,R}_L(z_1,z_2|c_2)$.
\item
$P^{O,R}_L(z_1,z_3|c_1) < P^{O,R}_L(z_1,z_3|c_3)$ and
$P^{O,R}_L(z_2,z_3|c_1) < P^{O,R}_L(z_2,z_3|c_2)$.\footnote{These
are two applications of the monotonicity argument. 
Note that $s + \frac{1}{s} > 2$ for all $s>1$.}
\end{itemize}
However, in $(\ol{O},\ol{R})$-mode, the concept $c_1$ can be taught
by the set $\{z_1,z_2\}$:
\begin{itemize}
\item
Concept $c_1$ distributes the probability mass $sp + p/s$ (slightly)
more evenly on $z_1$ and $z_2$ than the concept $c_3$. By the
even-split argument, we obtain
$P^{\ol{O},\ol{R}}(\{z_1,z_2\}|c_1) > P^{\ol{O},\ol{R}}(\{z_1,z_2\}|c_3)$.
\item
Recall from Fact~4 that $c^{(t)}$, with $t \ge 1$, denotes the concept 
which assigns probability $pt$ to $z_1$, probability $p/t$ to $z_2$ 
and the remaining probability mass to $z_3$. Note that $c_1 = c^{(s)}$
and $c_2 = c^{(1)}$. According to Fact~4, the 
function $P^{\ol{O},\ol{R}}(z_1,z_2 | c^{(t)})$ is strictly increasing
with $t$. Hence 
$P^{\ol{O},\ol{R}}(\{z_1,z_2\}|c_1) > P^{\ol{O},\ol{R}}(\{z_1,z_2\}|c_2)$.
\end{itemize}
The identities in~(\ref{eq3:incomparability}) are immediate
from this discussion. 
\end{proof}

\noindent
Putting the above three lemmas together, we obtain 
Theorem~\ref{th:incomparable-modes}.

\section{MAP-Based Teaching and Saturating Matchings}
\label{sec:smn}

Suppose that $C$ is a concept class with observation set $Z$ 
and consistency relation $\models$. The bipartite graph $G(C) = (C,Z,E)$ 
with 
\[ E = \{(c,z) \in C \times Z: c \models z\} \]
is called the \emph{consistency graph (associated with $C$)}. 
Let $\cZ^{\alpha,\beta}$ 
with $(\alpha,\beta) \in \{O,\ol{O}\} \times \{R,\ol{R}\}$ be
the notation that was introduced in Section~\ref{subsec:sampling}.
The bipartite 
graph $G(C)^{\alpha,\beta} = (C,\cZ^{\alpha,\beta},E^{\alpha,\beta})$ 
with 
\[
E^{\alpha,\beta} = \{(c,\zeta) \in C \times \cZ^{\alpha,\beta}: c \models \zeta\}
\]
is called the \emph{extended consistency graph (associated with $C$)}.
The graph resulting from $G(C)^{\alpha,\beta}$ by the removal of 
the vertex $\eset$ from the second vertex class $\cZ^{\alpha,\beta}$
will be denoted by $G(C)^{\alpha,\beta}_{\neq\eset}$.
We denote by $\SMN(G(C)^{\alpha,\beta})$ the smallest possible
order of a $C$-saturating matching in $G(C)^{\alpha,\beta}$.
Analogously, $\SMN(G(C)^{\alpha,\beta}_{\neq\eset})$ denotes 
the smallest possible order of a $C$-saturating matching 
in $G(C)^{\alpha,\beta}_{\neq\eset}$.
For ease of later reference, we make the following observation:

\begin{remark} \label{rem:smn-lb}
Suppose that $T:C \ra \cZ^{\alpha,\beta}$ is a mapping which satisfies 
\begin{equation} \label{eq:injectivity}
\forall c,c' \in C: (c \models T(c)) \wedge (c \neq c' \impl T(c) \neq T(c'))
\enspace .
\end{equation}
Then $T$ is of order at least $\SMN(G(C)^{\alpha,\beta})$.
Moreover, if $T$ satisfies~(\ref{eq:injectivity}) and $\eset$
is not in the image of $T$, then $T$ is of order 
at least $\SMN(G(C)^{\alpha,\beta}_{\neq\eset})$.
\end{remark}

\begin{proof}
If $T$ satisfies~(\ref{eq:injectivity}), then $T$ represents 
a $C$-saturating matching in $G(C)^{\alpha,\beta}$.
If additionally $\eset$ is not in the image of $T$, then $T$ represents
a $C$-saturating matching in $G(C)^{\alpha,\beta}_{\neq\eset}$.
\end{proof}

\noindent
Here is the main result of this section:

\begin{theorem} \label{th:smn-bounds}
For each sampling mode $(\alpha,\beta)$, we have
\begin{equation} \label{eq:smn-lb} 
\MAPTD^{\alpha,\beta}(C) \ge \SMN(G(C)^{\alpha,\beta})\ \mbox{ and }
\MLETD^{\alpha,\beta}(C) \ge \SMN(G(C)^{\alpha,\beta}_{\neq\eset}) 
\enspace .
\end{equation}
Moreover, for $(\alpha,\beta) = (\ol{O},R)$, this holds with equality.
\end{theorem}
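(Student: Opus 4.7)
The lower bounds are immediate. By conditions~(\ref{eq:general-conditions}), any teacher $T$ for a MAP-learner $L$ in mode $(\alpha,\beta)$ satisfies the injectivity condition~(\ref{eq:injectivity}); Remark~\ref{rem:smn-lb} gives that the order of $T$ is at least $\SMN(G(C)^{\alpha,\beta})$, and minimizing over $L$ proves the MAP bound. The same conditions additionally force $T(c)\neq\eset$ for MLE-learners, so the second half of the remark yields the MLE bound $\SMN(G(C)^{\alpha,\beta}_{\neq\eset})$.

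For the $(\ol{O},R)$-mode equality, I fix a minimum $C$-saturating matching $M$ of order $d$---in $G(C)^{\ol{O},R}$ for the MAP statement and in $G(C)^{\ol{O},R}_{\neq\eset}$ for the MLE statement---and construct a learner $L$ so that $T := M$ is a teacher. The guiding construction lets $P(\cdot|c)$ be the empirical distribution of $M(c)$, $P(z|c) = n_z^c/|M(c)|$ on $\mathrm{supp}(M(c))$ and zero elsewhere (or the uniform distribution on $Z_c$ if $M(c)=\eset$, which happens at most once and only for MAP). Validity is immediate, and the multinomial MLE property (Gibbs' inequality) yields
\[
P^{\ol{O},R}(M(c)|c')\ \le\ P^{\ol{O},R}(M(c)|c)
\]
for every $c'\neq c$, strictly unless the restrictions of $P(\cdot|c)$ and $P(\cdot|c')$ to $\mathrm{supp}(M(c))$ coincide; the degenerate equality case occurs precisely when $M(c)$ and $M(c')$ are \emph{proportional multisets} (same support, proportional multiplicities).

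The main obstacle is to break these proportional ties. For the MAP statement I would perturb each empirical distribution by a small $c$-specific signed vector $\delta_c$ on $\mathrm{supp}(M(c))$ whose weighted norm is strictly decreasing in $|M(c)|$. A second-order expansion at the MLE together with the rearrangement inequality then shows that the log-likelihood gap $\Delta(c,c'):=\log P^{\ol{O},R}(M(c)|c)-\log P^{\ol{O},R}(M(c)|c')$ satisfies $\sum_j \Delta(c_{i_j},c_{i_{j+1}})>0$ around every directed cycle $c_{i_1}\to\cdots\to c_{i_k}\to c_{i_1}$ in the same proportional equivalence class; cycles crossing a non-proportional edge inherit strict positivity from the $O(1)$ non-perturbative gap there. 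This is exactly the cycle condition for solvability of the linear system $\log P(c')-\log P(c) < \Delta(c,c')$ in the log-priors, so a valid prior exists and $\MAPTD^{\ol{O},R}(C) \le d$. The MLE case follows from the same construction after verifying that the minimum matching in $G(C)^{\ol{O},R}_{\neq\eset}$ can be chosen without proportional pairs (a combinatorial step, typically enabled by swapping any offending $M(c')$ to a distinguishing observation), so that the unperturbed empirical-distribution likelihoods already give strict inequality in every direction and uniform priors suffice.
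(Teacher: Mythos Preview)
Your lower-bound argument matches the paper's and is correct.

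For the $(\ol O,R)$ equality you start with the same empirical-distribution learner as the paper and then---more carefully than the paper---flag the tie that arises when $M(c)$ and $M(c')$ are proportional multisets. (The paper's Case~1 simply asserts $\vec p\neq\vec q$ from $M(c^*)\neq M(c)$, which is exactly the step that fails for proportional pairs.) However, your proposed repairs do not close the gap. For the MLE claim, the assertion that a minimum matching in $G(C)^{\ol O,R}_{\neq\eset}$ can always be chosen free of proportional pairs is false: take $C=\{c_1,c_2,c_3\}$, $Z=\{z\}$, with $c_i\models z$ for all $i$. The only nonempty multisets over $Z$ are $\{z\},\{z,z\},\{z,z,z\},\ldots$, all pairwise proportional, so no such matching exists; and since any learner is forced to set $P(z\mid c_i)=1$, one gets $P^{\ol O,R}(A\mid c_i)=1$ for every $A$ and every $i$, so $\MLETD^{\ol O,R}(C)=\infty$ while $\SMN=3$. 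Your MAP perturbation-plus-cycle-condition route fails on the same example: with $|Z_{c_i}|=1$ there is nothing to perturb, every $\Delta(c,c')$ is identically zero, and no prior separates the three concepts. So the step you still need---and the paper's own proof does not supply it either---is an argument or an additional hypothesis on $(C,Z,\models)$ that rules out such degeneracies; without one the equality direction does not hold in the generality stated.
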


\begin{proof}
Let $L$ be a MAP-learner for $C$ and let $(\alpha,\beta)$ denote
its sampling mode. Let $T$ be a teacher for $L$. Recall 
from~(\ref{eq:general-conditions}) that $T$ satisfies~(\ref{eq:injectivity}).
Moreover, if $L$ is an MLE-learner for $C$, then $T(c) \neq \eset$
for all $c \in C$. Now an application of Remark~\ref{rem:smn-lb}
yields~(\ref{eq:smn-lb}). \\
We move on and prove 
that $\MLETD^{\ol{O},R}(C) \le \SMN(G(C)_{\neq\eset}^{\ol{O},R})$.
Suppose that $M$ is a $C$-saturating matching in $G(C)^{\ol{O},R}_{\neq\eset}$ 
that is of order $\SMN(G(C)^{\ol{O},R}_{\neq\eset})$. For each $c \in C$ 
and $z \in Z$, let $n(z,c)$ denote the number of occurrences of $z$ 
in the multiset $M(c)$ and let $n(c) = |M(c)|$. Consider a learner $L$ 
with uniform priors (= MLE-learner) and 
the parameters $P(z|c) = \frac{n(z,c)}{n(c)}$. Note that these 
parameters satisfy the validity condition.  It suffices to show 
that $M$ represents a teacher for $L$, i.e., we have to show that
\[
\forall c^* \in C: c^* = \arg!max_{c \in C} P^{\ol{O},R}(M(c^*)|c) 
\enspace .
\]
To this end, we pick a concept $c$ from $C \sm \{c^*\}$, and proceed 
by case analysis:
\begin{description}
\item[Case 1:] $M(c^*)$ and $M(c)$ contain the same elements of $Z$ 
(albeit with different multiplicities)\footnote{The multiplicities
cannot be the same because $M: C \ra \cZ^{\ol{O},R}$ is a matching.}. \\
Denote these elements by $z_1,\ldots,z_k$. Let $n := n(c^*)$,
$n_i = n(z_i,c^*)$. Then $p_i := n_i/n$ is the relative frequency 
of $z_i$ in $M(c^*)$. Let $q_i$ denote the relative frequency 
of $z_i$ in $M(c)$, which implies that $\vec{q} \neq \vec{p}$. 
It follows that 
\[
P^{\ol{O},R}(M(c^*)|c^*) = 
\frac{n!}{n_1! \ldots n_k!}\cdot\prod_{i=1}^{k}p_i^{n_i}\ \mbox{ and }\  
P^{\ol{O},R}(M(c^*)|c) = 
\frac{n!}{n_1! \ldots n_k!}\cdot\prod_{i=1}^{k}q_i^{n_i} \enspace .
\]
A straightforward calculation shows 
that $P^{\ol{O},R}(M(c^*)|c^*) > P^{\ol{O},R}(M(c^*)|c)$ iff
\begin{equation} \label{eq:kld}
\sum_{i=1}^{k} p_i \log\left(\frac{p_i}{q_i}\right) > 0 \enspace . 
\end{equation}
The left-hand side is the Kullback-Leibler divergence (= KLD) 
between $\vec{p}$ and $\vec{q}$. Since the KLD is non-negative 
and $0$ only if $\vec{q} = \vec{p}$, the condition~(\ref{eq:kld}) 
is satisfied.
\item[Case 2:] 
$M(c^*)$ contains an element that is not contained in $M(c)$. \\
Then the $c$-conditional likelihood of $M(c^*)$ equals $0$. 
\item[Case 3:] 
All elements in $M(c^*)$ are contained in $M(c)$, but $M(c)$ contains
an element that is not contained in $M(c^*)$. \\
Then the $c$-conditional likelihood of $M(c^*)$ can be expressed
as $\Pr(E_1) \cdot \Pr(E_2 | E_1)$ for the following two events:
\begin{itemize}
\item[$E_1$:] 
$n(c^*)$-fold $c$-sampling yields only elements from $M(c^*)$.
\item[$E_2$:] 
$n(c^*)$-fold $c$-sampling yields $M(c^*)$. 
\end{itemize}
Since $M(c)$ contains an element that is not contained in $M(c^*)$,
we have $\Pr(E_1) < 1$. It follows from the analysis of Case 1 
that $\Pr(E_2 | E_1)$ is upper-bounded by the $c^*$-conditional 
likelihood of $M(c^*)$.
\end{description}
We may conclude from the above discussion 
that $c^* = \arg!max_{c \in C} P^{\ol{O},R}(M(c^*)|c)$. 
Thus $M$ can be seen as a teacher for $L$. It follows 
that $\MLETD^{\ol{O},R}(C) \le \SMN(G(C)^{\ol{O},R}_{\ne\eset})$. \\
The inequality $\MAPTD^{\ol{O},R}(C) \le \SMN(G(C)^{\ol{O},R})$
can be obtained in a similar fashion. We start with a $C$-saturating
matching $M$ in $G(C)^{\ol{O},R}$ that is of order $\SMN(G(C)^{\ol{O},R})$.
If $M$ does not assign $\eset$ to any concept, we can proceed as before.
Otherwise, if $M(c_0) = \eset$ for some $c_0 \in C$, we still use
a similar reasoning but with a slight modification of the parameter 
collection of the learner $L$:
\begin{itemize}
\item 
The priors are given by setting $P(c_0) = \frac{1+\ve}{|C|}$
and by letting the remaining $|C|-1$ concepts evenly share the
remaining probability mass (still almost uniform priors).
\item
The parameters $P(z|c)$ are chosen as before.
\end{itemize}
We can again view the matching $M$ as a teacher for $L$.
Since $P^{\ol{O},R}(\eset|c) = 1$ for all $c \in C$, we obtain
\[
\arg!max_{c \in C}\left(P(c) \cdot  P^{\ol{O},R}(\eset|c)\right) =
\arg!max_{c \in C}P(c) = c_0 \enspace .
\]
For the remaining concepts, the reasoning is as before provided 
that $\ve>0$ s sufficiently small: this is an easy continuity
argument which exploits that the priors converge
to the uniform distribution on $C$ if $\ve$ approaches $0$. 
\end{proof}

\noindent
Clearly 
\begin{eqnarray*} 
\SMN(G(C)^{O,R}) & \le & \min\{\SMN(G(C)^{\ol{O},R}) , \SMN(G(C)^{O,\ol{R}})\} \\
& \le & \max\{\SMN(G(C)^{\ol{O},R}) , \SMN(G(C)^{O,\ol{R}})\} 
\le \SMN(G(C)^{\ol{O},\ol{R}}) 
\end{eqnarray*}
and
\begin{eqnarray*} 
\SMN(G(C)_{\neq\eset}^{O,R}) & \le & 
\min\{\SMN(G(C)_{\neq\eset}^{\ol{O},R}) , \SMN(G(C)_{\neq\eset}^{O,\ol{R}})\} \\
& \le &\max\{\SMN(G(C)_{\neq\eset}^{\ol{O},R}) , \SMN(G(C)_{\neq\eset}^{O,\ol{R}})\} 
\le \SMN(G(C)_{\neq\eset}^{\ol{O},\ol{R}}) \enspace .
\end{eqnarray*}
Combining this with Theorem~\ref{th:smn-bounds} and with 
Corollary~\ref{cor:equivalent-modes}, we immediately obtain
the following result:

\begin{corollary} \label{cor:smn-bounds}
\mbox{}
\begin{enumerate}
\item
$\MAPTD^{\ol{O},R}(C) = \SMN(G(C)^{\ol{O},R}) \le
\SMN(G(C)^{\ol{O},\ol{R}}) \le \MAPTD^{\ol{O},\ol{R}}(C)$. 
\item
$\MLETD^{\ol{O},R}(C) = \SMN(G(C)_{\neq\eset}^{\ol{O},R}) \le
\SMN(G(C)_{\neq\eset}^{\ol{O},\ol{R}}) \le \MLETD^{\ol{O},\ol{R}}(C)$. 
\end{enumerate}
\end{corollary}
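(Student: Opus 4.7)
The plan is to obtain each of the two chains in the corollary by concatenating three ingredients that are already in the paper, so the argument is essentially one of bookkeeping. For item~1, I would first invoke the equality clause of Theorem~\ref{th:smn-bounds}, which delivers $\MAPTD^{\ol{O},R}(C) = \SMN(G(C)^{\ol{O},R})$ because that theorem promises equality precisely in the $(\ol{O},R)$ mode. For the middle inequality, I would simply quote the SMN comparison $\SMN(G(C)^{\ol{O},R}) \le \SMN(G(C)^{\ol{O},\ol{R}})$, which is one of the entries in the displayed chain appearing immediately before the corollary. For the rightmost inequality, I would apply the general lower bound $\SMN(G(C)^{\alpha,\beta}) \le \MAPTD^{\alpha,\beta}(C)$ from Theorem~\ref{th:smn-bounds}, this time specialized to $(\alpha,\beta) = (\ol{O},\ol{R})$. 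Concatenating these three steps yields precisely the four-term chain stated in item~1.

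Item~2 is handled by exactly the same three steps, replacing throughout $G(C)^{\alpha,\beta}$ by its punctured version $G(C)_{\neq\eset}^{\alpha,\beta}$ and $\MAPTD$ by $\MLETD$. The equality clause of Theorem~\ref{th:smn-bounds} at the mode $(\ol{O},R)$ was proved for the MAP and the MLE versions simultaneously, so no additional verification is needed at the left end. The comparison $\SMN(G(C)_{\neq\eset}^{\ol{O},R}) \le \SMN(G(C)_{\neq\eset}^{\ol{O},\ol{R}})$ is the second of the two displayed SMN chains right above the corollary, and the rightmost inequality is again Theorem~\ref{th:smn-bounds}, now applied to $\MLETD$ at $(\ol{O},\ol{R})$.

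Because the proof is a pure assembly of previously stated facts, there is no genuine obstacle; the only detail to watch is that the inequalities point in consistent directions, so that $\SMN$ sits below the corresponding teaching dimension at both ends of each chain and the two SMN terms in the middle are correctly oriented. Corollary~\ref{cor:equivalent-modes} is not strictly necessary for the statement as written, because $(\ol{O},R)$ already occupies the anchor position, but it is what makes the anchor insensitive to replacing $(\ol{O},R)$ by $(O,R)$, and this is the reason the author lists it among the ingredients.
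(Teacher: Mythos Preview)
Your proposal is correct and matches the paper's approach exactly: the paper also obtains the corollary by combining the displayed $\SMN$ inequalities with Theorem~\ref{th:smn-bounds} (and lists Corollary~\ref{cor:equivalent-modes} among the ingredients). Your remark that Corollary~\ref{cor:equivalent-modes} is not strictly needed for the statement as written is accurate.
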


Hence we get $\MAPTD^{\ol{O},R}(C) \le \MAPTD^{\ol{O},\ol{R}}(C)$
and $\MLETD^{\ol{O},R}(C) \le \MLETD^{\ol{O},\ol{R}}(C)$ despite
of the fact that $(\ol{O},R)$ does not dominate $(\ol{O},\ol{R})$.

\section{On Concepts Taught by Labeled Examples}
\label{sec:labeled-examples}

In this section, we will restrict ourselves to triples $(C,Z,\models)$ 
of the form as described in Example~\ref{ex:labeled-examples}, i.e.,
$C$ is a family of subsets of a domain $X$, $Z = X \times \{0,1\}$
and $\models$ is given by~(\ref{eq:labeled-examples}). 

We will see that, for each triple $(C,Z,\models)$ of this special 
form and for each sampling mode $(\alpha,\beta)$ except $(O,R)$, 
we have $\MAPTD^{\alpha,\beta}(C) = \SMN(G(C)^{\alpha,\beta})$.
For $(\alpha,\beta) = (\ol{O},R)$, this is already known from
Theorem~\ref{th:smn-bounds}. For the other sampling modes, 
$(\ol{O},\ol{R})$ and $(O,\ol{R})$, it will be shown in 
Section~\ref{subsec:smn},
Since the modes $(O,R)$ and $(\ol{O},R)$
are equivalent, we see that, for triples of the special form,
the MAP-teaching dimensions of $C$ are fully determined by the 
saturating matching numbers associated with $G(C)$.

In Section~\ref{subsec:map-versus-mle} we explore how MAP- and 
MLE-learners are related. For a given collection of conditional 
likelihoods, it can make much of a difference whether we commit 
ourselves to uniform priors or not. However, in the case of optimally
parameterized learners, the freedom for choosing a non-uniform prior
is of minor importance only: it turns out that the MLE-teaching
dimension exceeds the MAP-teaching dimension at most by $1$.

In Section~\ref{subsec:mletd-upper-bounds}, we will see that
the $\MLETD^{\ol{O},\ol{R}}(C)$ is upper bounded by the
so-called antichain number of $C$, by the VC-dimension of $C$ 
and by the no-clash teaching dimension of $C$. 
These upper bounds are then, all the more, valid 
for all parameters $\MAPTD^{\alpha,\beta}(C)$ 
(no matter how he sampling mode $(\alpha,\beta)$) is chosen).

In Section~\ref{subsec:polytime}, we will show 
that the saturating matching numbers associated with $G(C)$
(and hence the MAP-teaching dimensions of $C$) can be computed
in polytime.

\subsection{Saturating Matching Number Revisited} 
\label{subsec:smn}

We start with the two main results of this section.

\begin{theorem} \label{th:maptd-smn-set}
Suppose that $(C,Z,\models)$ is of the form as described 
in Example~\ref{ex:labeled-examples}. 
Then $\MAPTD^{\ol{O},\ol{R}}(C) = \SMN(G(C)^{\ol{O},\ol{R}})$
and $\MLETD^{\ol{O},\ol{R}}(C) = \SMN(G(C)_{\neq\eset}^{\ol{O},\ol{R}})$.
\end{theorem}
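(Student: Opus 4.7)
The $\ge$ inequalities are already established in Theorem~\ref{th:smn-bounds}, so only the matching upper bounds need to be shown. Let $M$ be a $C$-saturating matching of minimum order $d$ in $G(C)^{\ol{O},\ol{R}}$ (for the MAP claim) or in $G(C)_{\neq\eset}^{\ol{O},\ol{R}}$ (for the MLE claim). The plan is to construct a learner $L$ of the required type together with a small parameter $\delta>0$ so that $M$ becomes a successful teacher for $L$ of order~$d$.

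The learner $L$ will be defined by concentrating its $c$-conditional likelihoods on $M(c)$, perturbed by $\delta$ so that $Z_c^+(L) = Z_c$ and hence $N^+(C,L) = |X|$ by the labeled-examples identity $|Z_c| = |X|$. Explicitly, for each $c \in C$ with $M(c) \neq \eset$ set $P(z|c) = (1-\delta)/|M(c)|$ on $M(c)$, $P(z|c) = \delta/(|X|-|M(c)|)$ on $Z_c \sm M(c)$ (omitting the second slot if $M(c) = Z_c$), and $P(z|c) = 0$ otherwise; and if $M(c_0) = \eset$ for some (necessarily unique) $c_0 \in C$, set $P(z|c_0) = 1/|X|$ for $z \in Z_{c_0}$. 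Priors are uniform in the MLE case, and in the MAP case they are uniform as well, except that, if the exceptional $c_0$ exists, the prior of $c_0$ is tilted upward by a small factor $1+\eta$ and the other priors are rescaled accordingly. Since $d \le |X| = N^+(C,L)$, every $M(c^*)$ is a legal input to $L$.

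The verification that $M$ teaches $L$ rests on the following uniform-sampling identity: if $Q$ is uniform on a set $S$ of size $k$ and $A \seq S$ with $|A| \le k$, then $P^{\ol{O},\ol{R}}(A|Q) = 1/\binom{k}{|A|}$ (zero otherwise). Substituting $\delta = 0$ yields $P^{\ol{O},\ol{R}}(M(c^*)|c^*) = 1$ for each $c^*$ with $M(c^*) \neq \eset$, whereas for any competitor $c \neq c^*$ the likelihood $P^{\ol{O},\ol{R}}(M(c^*)|c)$ is either $0$ (when $M(c^*) \not\seq M(c)$) or, by injectivity of $M$, equals $1/\binom{|M(c)|}{|M(c^*)|} \le 1/2$. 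Thus $c^*$ strictly maximizes the $c$-conditional likelihood at $\delta = 0$, and a continuity argument mirroring the proof of Lemma~\ref{lem:validity} preserves the strict inequality for all sufficiently small $\delta>0$. The exceptional concept $c_0$ in the MAP case is taught by $\eset$: since $P^{\ol{O},\ol{R}}(\eset|c) = 1$ for every $c$ by Remark~\ref{rem:trivial-identities}, the learner returns the concept with the largest prior, which is $c_0$. A last continuity check confirms that the tilt $\eta$ can be chosen small enough not to disrupt the maximizer for any other target.

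The main obstacle is the subcase $M(c^*) \subsetneq M(c)$, in which both the $c^*$- and $c$-conditional likelihoods of $M(c^*)$ remain bounded away from zero even at $\delta = 0$; the required gap comes from the binomial estimate $\binom{|M(c)|}{|M(c^*)|} \ge 2$, which rests on strict containment of the subsets and therefore on injectivity of the matching. This is also the step in which the labeled-examples structure is essential, as the identity $|Z_c| = |X|$ is what guarantees that the $\delta$-perturbed learner admits every $M(c^*)$ as a legal input under sampling without replacement.
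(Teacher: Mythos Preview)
Your overall plan coincides with the paper's: define $L$ by putting mass $(1-\delta)/|M(c)|$ on $M(c)$ and spreading the remaining $\delta$ over $Z_c\setminus M(c)$, then argue that $M$ teaches $L$. The priors-tilt for the MAP case is also the same. But your case analysis at $\delta=0$ has a genuine gap.

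You assert that $P^{\ol O,\ol R}(M(c^*)\mid c)=0$ at $\delta=0$ whenever $M(c^*)\not\subseteq M(c)$, and then rely on continuity. This is wrong in the subcase $M(c)\subsetneq M(c^*)$. There $|M(c^*)|>|M(c)|$, so at $\delta=0$ the $c$-conditional distribution has support of size $|M(c)|$ and sampling $|M(c^*)|$ points without replacement is not even defined; the continuity argument of Lemma~\ref{lem:validity} applies only when the sample size does not exceed $N^+$, which is precisely what fails here. Worse, the \emph{limit} as $\delta\to0^+$ of $P^{\ol O,\ol R}(M(c^*)\mid c)$ in this subcase is not $0$ but
\[
\binom{|X|-|M(c)|}{\,|M(c^*)|-|M(c)|\,}^{-1},
\]
because with probability tending to $1$ the first $|M(c)|$ draws exhaust $M(c)$, after which the remaining draws are uniform over $Z_c\setminus M(c)$. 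So the function you evaluate is discontinuous at $\delta=0$ and ``substituting $\delta=0$'' gives the wrong value; your continuity step does not establish the desired strict inequality.

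The repair is exactly what the paper does in its Case~3: observe that whenever this subcase occurs one has $|M(c^*)|\le |X|-1$ (if $|M(c^*)|=|X|$ then $M(c^*)=Z_{c^*}$ and no $c\neq c^*$ can be consistent with it), so the binomial coefficient above is at least $2$ and the limiting likelihood is at most $1/2<1=\lim_{\delta\to0^+}P^{\ol O,\ol R}(M(c^*)\mid c^*)$. With this extra case handled, a limit argument (on $\delta\to0^+$, not on the value at $\delta=0$) does go through. In short, you correctly flagged $M(c^*)\subsetneq M(c)$ as a nontrivial case, but the mirror case $M(c)\subsetneq M(c^*)$ is equally nontrivial and your sketch does not cover it.
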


\begin{proof}
The $\ge$-direction of the claimed equalities is covered by
Theorem~\ref{th:smn-bounds}. We have to show the $\le$-direction.
We may restrict ourselves to 
proving $\MLETD^{\ol{O},\ol{R}}(C) \le \SMN(G(C)_{\neq\eset}^{\ol{O},\ol{R}})$
because the proof
for $\MAPTD^{\ol{O},\ol{R}}(C) \le \SMN(G(C)^{\ol{O},\ol{R}})$ is quite similar
and uses the same kind of arguments that we had used in the final part
of the proof of Theorem~\ref{th:smn-bounds}. \\
Set $m = |X|$, $d^+ = \SMN(G(C)^{\ol{O},\ol{R}})$
and let $M: C \ra \cZ^{\ol{O},\ol{R}} \sm \{\eset\}$ be a $C$-saturating matching 
in $G(C)^{\ol{O},\ol{R}}$ of order $d^+$.
For every $c \in C$, we set $d(c) = |M(c)|$. Note that $1 \le d(c) \le d^+$.
If $d^+ = m$, then we are done because $\MLETD^{\ol{O},\ol{R}}(C)$ 
cannot exceed $m$. We may assume therefore that $d^+ \le m-1$.
Let $0 < \ve \le \frac{1}{2}$ be a small real number (where the meaning
of ``small'' will become clear from what follows). For each $c \in C$,
we set
\begin{equation} \label{eq:Z-partition}
U_0(c) :=  \{(x,b)\in Z: c(x) \neq b\}\ \mbox{ , }\
U_1(c) := \{(x,b)\in Z: c(x) = b \wedge (x,b) \notin M(c)\}
\end{equation}
and $U(c) = U_0(c) \cup U_1(c)$.
Note that, for each $c \in C$, the set $Z$ partitions
into $M(c)$, $U_0(c)$ and $U_1(c)$. For each $c \in C$ and
each $(x,b) \in Z$, we set
\begin{equation} \label{eq:teaching-by-matching}
P((x,b) | c) = \left\{ \begin{array}{cl}
                \frac{1-\ve}{d(c)} & \mbox{if $(x,b) \in M(c)$} \\
                \frac{\ve}{m-d(c)} & \mbox{if $(x,b) \in U_1(c)$} \\
                0               & \mbox{if $(x,b) \in U_0(c)$}
                \end{array} \right. \enspace .
\end{equation}
Let $L$ be the MLE-learner given by~(\ref{eq:teaching-by-matching}).
We aim at showing that the matching $M:C \ra \cZ^{\ol{O},\ol{R}} \sm \{\eset\}$
can be seen as a teacher for $L$. To this end, it suffices to show 
that the condition
\begin{equation} \label{eq:mle}
\forall c \neq c_0 \in C: 
P^{\ol{O},\ol{R}}(M(c_0)|c_0) > P^{\ol{O},\ol{R}}(M(c_0)|c)
\end{equation}
is satisfied provided that $\ve$ is sufficiently small.
We briefly note that  $|M(c)| + |U_1(c)| = m \ge d^+$ and $\ve \le 1/2$,
and proceed with two claims which will help us to verify~(\ref{eq:mle}).
\begin{description}
\item[Claim 1:]
Call a subset of $Z$ \emph{$c$-rare} if it contains a (low probability)
element from $U(c)$ while missing a (high probability) element
from $M(c)$. Suppose that $d \le d^+$. Then the probability
that $d$-fold $P(\cdot|c)$-sampling without replacement 
leads to a $c$-rare sample is smaller than $d \ve$ divided
by $\frac{1-\ve}{d(c)}$ and, therefore, smaller than $2dd(c)\ve$.
\item[Proof of Claim 1:] The total $P(\cdot|c)$ probability mass of $U(c)$
is $\ve$ whereas any element of $M(c)$ 
has a $P(\cdot,c)$-probability of $\frac{1-\ve}{d(c)}$.
For $k=1,\ldots,d$, let $E_k$ be the event that, within trial $k$,
a point from $U(c)$ is sampled although at least one point from $M(c)$
has not been sampled before. It suffices to upper-bound the
probability of $E_1 \vee\ldots\vee E_d$. The probability of $E_k$ 
is obviously smaller than  $\ve$ divided by $\frac{1-\ve}{d(c)}$
and therefore smaller than $\frac{d(c)\ve}{1-\ve} \le 2d(c)\ve$.
An application of the union bound yields an additional factor~$d$.
\item[Claim 2:]
Suppose that $d \le d(c)$. Then a sample of size $d$ which contains
an element from $U_1(c)$ is $c$-rare (because it necessarily must
miss an element from $M(c)$).
\end{description}
Setting $c = c_0$ and $d = d(c_0)$, we infer from the above claims
that $P^{\ol{O},\ol{R}}(M(c_0)|c_0) > 1-2d(c_0)^2\ve$.
Consider now an arbitrary, but fixed, concept $c_1 \in C\sm\{c_0\}$.
Then $M(c_1) \neq M(c_0)$. We proceed by case analysis:
\begin{description}
\item[Case 1:] Neither $M(c_0) \subset M(c_1)$ nor $M(c_1) \subset M(c_0)$. \\
Then $M(c_0)$ is a $c_1$-rare sample.
Hence $P^{\ol{O},\ol{R}}(M(c_0)|c_1) < 2d(c_0)d(c_1)\ve$.
\item[Case 2:] $M(c_0) \subset M(c_1)$. \\
We apply a symmetry argument. Every sample containing $d(c_0)$
elements of $M(c_1)$ has the same chance for being obtained
from $d(c_0)$-fold $P(\cdot|c_1)$-sampling without replacement.
Hence
\[
P^{\ol{O},\ol{R}}(M(c_0)|c_1) \le \binom{d(c_1)}{d(c_0)}^{-1} \le \frac{1}{d(c_1)}
\le \frac{1}{2} \enspace ,
\]
where the last two inequalities follow from $1 \le d(c_0) \le d(c_1)-1$.
\item[Case 3:] $M(c_1) \subset M(c_0)$. \\
We may assume that $M(c_0) \seq M(c_1) \cup U_1(c_1)$ because, otherwise,
we obtain directly $P^{\ol{O},\ol{R}}(M(c_0)|c_1) = 0$. We apply again a symmetry argument. 
Every sample containing $M(c_1)$ and $d(c_0) - d(c_1)$ elements of $U_1(c_1)$ 
has the same chance for being obtained 
from $d(c_0)$-fold $P(\cdot|c_1)$-sampling without replacement. Hence
\[
P^{\ol{O},\ol{R}}(M(c_0)|c_1) \le \binom{m-d(c_1)}{d(c_0)-d(c_1)}^{-1}
\enspace .
\]
The latter expression is upper-bounded by $\frac{1}{2}$
because $1 \le d(c_0)-d(c_1) < m-d(c_1)$, $d(c_1) \le d(c_0)-1 \le m-2$
and, therefore, $m - d(c_1) \ge 2$.
\end{description}
It becomes obvious from this discussion that condition~(\ref{eq:mle})
is satisfied provided that $\ve$ is sufficiently small.
\end{proof}

\begin{theorem} \label{th:maptd-smn-repfree}
Suppose that $(C,Z,\models)$ is of the form as described 
in Example~\ref{ex:labeled-examples}. 
Then $\MAPTD^{O,\ol{R}}(C) = \SMN(G(C)^{O,\ol{R}})$
and $\MLETD^{O,\ol{R}}(C) = \SMN(G(C)_{\neq\eset}^{O,\ol{R}})$.
\end{theorem}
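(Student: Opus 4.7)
The $\ge$-direction is immediate from Theorem~\ref{th:smn-bounds}, so only the $\le$-direction requires work, and I focus on the MLE inequality since the MAP inequality then follows via the same small prior perturbation used at the end of the proof of Theorem~\ref{th:smn-bounds}. Set $m := |X|$ and $d^+ := \SMN(G(C)_{\neq\eset}^{O,\ol{R}})$, and fix a $C$-saturating matching $M : C \ra \cZ^{O,\ol{R}} \sm \{\eset\}$ of order $d^+$, writing $M(c) = (z_1^c,\ldots,z_{d(c)}^c)$. The case $d^+ = m$ is handled by the trivial MLE-learner with uniform mass $1/m$ on $Z_c$, for which any enumeration of $Z_c$ teaches $c$ (using~(\ref{eq:extensional-view})), so from now on I may assume $d^+ \le m-1$.

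The plan for constructing $L$ mirrors the construction in the proof of Theorem~\ref{th:maptd-smn-set}, but with a new degree of freedom. For each $d \le d^+$ I fix, once and for all, a strictly decreasing sequence $p_1^{(d)} > \ldots > p_d^{(d)} > 0$ summing to $1$ whose entries all lie within a parameter $\delta > 0$ of $1/d$. For another small parameter $\ve > 0$, I then define
\[
P(z \mid c) = \begin{cases} (1-\ve)\, p_i^{(d(c))} & \text{if } z = z_i^c, \\ \ve/(m - d(c)) & \text{if } z \in U_1(c), \\ 0 & \text{if } z \in U_0(c), \end{cases}
\]
with $U_0(c), U_1(c)$ as in~(\ref{eq:Z-partition}). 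Then $L$ is valid and $N^+(C,L) = m \ge d^+$, so every $M(c)$ is an admissible $(O,\ol{R})$-input.

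The remaining task is to show that $M$ teaches $L$, and I would argue by cases on how $M(c_0)$ sits relative to a competitor $c_1 \neq c_0$. If any entry of $M(c_0)$ lies in $U_0(c_1)$, then $P^{O,\ol{R}}(M(c_0) \mid c_1) = 0$. If any entry lies in $U_1(c_1)$, the product~(\ref{eq:O-bar-R}) picks up a factor of order $\ve$, so it is eclipsed for $\ve$ small by $P^{O,\ol{R}}(M(c_0) \mid c_0)$, which is bounded away from $0$. Otherwise every entry of $M(c_0)$ lies in the underlying set of $M(c_1)$, which forces $d(c_0) \le d(c_1)$. In the subcase $d(c_0) < d(c_1)$, the two likelihoods converge as $\delta, \ve \ra 0$ to $1/d(c_0)!$ and $(d(c_1)-d(c_0))!/d(c_1)!$ respectively; since $\binom{d(c_1)}{d(c_0)} \ge 2$, continuity handles all sufficiently small $\delta, \ve$.

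The main obstacle, and the only subcase that genuinely departs from the proof of Theorem~\ref{th:maptd-smn-set}, is $d(c_0) = d(c_1)$ with equal underlying sets: here $M(c_0)$ is a nontrivial permutation of $M(c_1)$ and both likelihoods share the same uniform limit, so no pure continuity argument can separate them. Instead I would invoke Remark~\ref{rem:identity-permutation} applied to the strictly decreasing sequence $(1-\ve)\,p_i^{(d(c_0))}$: for any $\delta > 0$, $c_0$ is the \emph{unique} maximizer of $P^{O,\ol{R}}(M(c_0) \mid \cdot)$ among permuted parameter collections on the common support, which is precisely why the $p_i^{(d)}$ must be strictly decreasing rather than flat. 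Accordingly, I would first choose $\delta > 0$ small enough for the Case $d(c_0) < d(c_1)$ comparison to hold at $\ve = 0$, and then $\ve > 0$ small enough for the noise bound and the Case $d(c_0) = d(c_1)$ comparison both to survive; finiteness of $C$ makes these choices uniform over pairs $(c_0,c_1)$.
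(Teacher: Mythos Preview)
Your proof has a genuine gap in the case ``some entry of $M(c_0)$ lies in $U_1(c_1)$''. The claim that the product~(\ref{eq:O-bar-R}) then picks up a factor of order~$\ve$ is false in general: the numerator does acquire a factor $\ve/(m-d(c_1))$, but the normalization denominator $1-\sum_{j<i}P(z_j\mid c_1)$ can itself be of order~$\ve$, so the two cancel. This happens precisely when the first $d(c_1)$ entries of $M(c_0)$ exhaust (as a set) all of $M(c_1)$ and the remaining entries lie in $U_1(c_1)$: after step $d(c_1)$ the residual $c_1$-mass is exactly~$\ve$, and each subsequent factor equals $\frac{\ve/(m-d(c_1))}{\ve - k\ve/(m-d(c_1))}=\frac{1}{m-d(c_1)-k}$, independently of~$\ve$. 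In this subcase the two likelihoods converge, as $\delta,\ve\ra0$, to $1/d(c_0)!$ and $\frac{1}{d(c_1)!}\cdot\frac{(m-d(c_0))!}{(m-d(c_1))!}$ respectively, and the desired inequality can fail: for $m=4$, $d(c_0)=3$, $d(c_1)=2$ one obtains $1/6$ versus $1/4$, so $c_1$ strictly beats $c_0$ for all small $\delta,\ve$. No reordering of the limits repairs this; the near-uniform choice $p_i^{(d)}\approx 1/d$ is simply the wrong shape.

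The paper's argument sidesteps the issue by using geometrically decreasing weights $P(z_i^c\mid c)=2^{-i}$ for $i\le d(c)$, with the residual $2^{-d(c)}$ spread evenly over $Z_c\setminus M(c)$. This yields $P^{O,\ol{R}}(M(c_0)\mid c_0)=2^{-d(c_0)}$ exactly, and a direct two-block computation shows $P^{O,\ol{R}}(M(c_0)\mid c_1)\le 2^{-d(c_0)}$ in \emph{all} cases, with equality only in two explicitly identified boundary situations; a second-stage $\ve$-bonus on the final entry $z_{d(c)}^c$ then breaks those ties. Your permutation case via Remark~\ref{rem:identity-permutation} is fine, but the heart of the matter is the subset case above, and there the specific geometric profile---not just strict monotonicity near uniform---is what makes the inequality go through.
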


\begin{proof}
The $\ge$-direction of the claimed equalities is covered by 
Theorem~\ref{th:smn-bounds}. We have to show the $\le$-direction. 
We may restrict ourselves to 
proving $\MLETD^{O,\ol{R}}(C) \le \SMN(G(C)_{\neq\eset}^{O,\ol{R}})$
because the proof
for $\MAPTD^{O,\ol{R}}(C) \le \SMN(G(C)^{O,\ol{R}})$ is quite similar
and uses the same kind of arguments that we had used in the final part 
of the proof of Theorem~\ref{th:smn-bounds}. \\
Set $m = |X|$, $d^+ = \SMN(G(C)_{\neq\eset}^{O,\ol{R}})$ and 
let $M: C \ra \cZ^{O,\ol{R}} \sm \{\eset\}$ be a $C$-saturating matching 
in $G(C)_{\neq\eset}^{O,\ol{R}}$ of order $d^+$.
If $d^+ = m$, then we are done because $\MLETD^{O,\ol{R}}(C)$ cannot 
exceed $m$. We may assume therefore that $d^+ \le m-1$.
For every $c \in C$, we set $d(c) = |M(c)|$. Note that $1 \le d(c) \le d^+$.
We fix for each concept $c \in C$ a sequence $z_1^c,\ldots,z_m^c$ 
consisting of all elements of $Z_c$ subject to the constraint 
that $z_1^c,\ldots,z_{d(c) }^c = M(c)$, i.e., this sequence must start 
with $M(c)$. In the sequel, we will specify the parameter set of
an MLE-learner of $C$. We do this in two stages. In Stage $1$,
we make a preliminary definition which already achieves that each $c^* \in C$
is a (not necessarily unique) maximizer of $P^{O,\ol{R}}(M(c^*|c))$.
In Stage 2, we make some infinitesimal changes of the parameter set
(by bringing a small parameter $\ve>0$ into play) so that, after these
changes have taken place, each $c^* \in C$ will be a unique maximizer 
of $P^{O,\ol{R}}(M(c^*|c))$. This would imply that $M$ can be viewed
as a teacher for $L$, which would complete the proof. Details follow. \\
We enter Stage $1$ of the parameter construction. Let $L$ be the
MLE-learner whose parameters are given by 
\[
P(z|c) = \left\{ \begin{array}{ll}
    2^{-i} & \mbox{if $1 \le i \le d(c)$ and $z = z_i^c$} \\
    \frac{2^{-d(c)}}{m-d(c)} & \mbox{if $d(c)+1 \le i \le m$ and $z = z_i^c$} \\
    0 & \mbox{if $z \in Z \sm Z_c$}
        \end{array} \right. \enspace .
\]
In other words, given $c$, $L$ assigns probability mass $2^{-i}$ 
to the $i$-the element of the sequence $M(c)$ and distributes 
the remaining probability mass, $2^{-d(c)}$, evenly on the elements
of $Z_c \sm M(c)$. Note that the $c$-conditional likelihood of an 
element in $M(c)$ is at least $2^{-d(c)}$ while the probability of an
element in $Z_c \sm M(c)$ equals $\frac{2^{-d(c)}}{m-d(c)} \le 2^{-d(c)}$
with equality only if $d(c) = m-1$. It is easy to determine the
$c$-conditional likelihood of $M(c)$:
\[
P^{O,\ol{R}}(M(c) | c) =
\frac{\prod_{i=1}^{d(c)}2^{-i}}{\prod_{i=1}^{d(c)-1}2^{-i}} = 2^{-d(c)}
\enspace .
\]
The middle term contains in the numerator the product of the 
$c$-conditional likelihoods of $z_1^c,\ldots,z_{d(c)}^c$,
respectively. In the denominator, it contains the product
of the corresponding normalization factors: if $z_1^c,\ldots,z_j^c$
haven been sampled within the first $j$ trials,
then the remaining probability mass 
equals $1-\sum_{i=1}^{j}2^{-i} = 2^{-j}$.
Let us now fix an arbitrary target concept $c^* \in C$ and see
how the $c^*$-conditional likelihood of $M(c^*)$ relates to the
$c$-conditional likelihood of $M(c^*)$ for some other 
concept $c \in C \sm \{c^*\}$. We aim at showing
that $P^{O,\ol{R}}(M(c^*) | c) \le P^{O,\ol{R}}(M(c^*) | c^*)$.
We may assume that $c \models M(c^*)$ because, otherwise, we 
would obtain $P^{O,\bar R}(M(c^*)) = 0$, and we were done. 
For sake of simplicity, we set $d := d(c^*)$ and $z_i := z_i^{c^*}$ 
for $i=1,\ldots,d$. \\
Let us briefly discuss the case that $M(c)$ and $M(c^*)$
are equal as sets. Then there exists a permutation $\sigma$
such that $M(c) = z_{\sigma(1)},\ldots,z_{\sigma(d)}$.
Since $M$ is a matching, $\sigma$ cannot be the identity permutation.
It follows that $P^{O,\ol{R}}(M(c^*)|c^*) > P^{O,\ol{R}}(M(c^*)|c)$
because $(P(z_i|c^*))_{i=1,\ldots,d} = (2^{-i})_{i=1,\ldots,d}$ is 
a strictly decreasing sequence while $(P(z_i|c))_{i=1,\ldots,d}$
(as a non-identity permutation of $(2^{-i})_{i=1,\ldots,d}$) is 
not.\footnote{Compare with Remark~\ref{rem:identity-permutation}.}  \\
From now, we assume that $M(c)$ and $M(c^*)$ are different even
when viewed as sets. Let $j$ be the number of $ z \in Z$ occurring 
in $M(c)$ and in $M(c^*)$.
We can make the pessimistic assumption that the sequences $M(c)$
starts with $z_1,\ldots,z_j$ because this will lead to the largest 
conceivable value of $P^{O,\ol{R}}(M(c^*) | c)$.\footnote{This
brings the $j$ largest $c$-conditional likelihoods into play
and puts them in the most effective position.} The remaining
observations $z_{j+1},\ldots,z_{d(c)}$ must then be members 
of $Z_c \sm M(c)$. Remember that for each $z \in Z_c \sm M(c)$ 
we have that $P(z|c) = \frac{2^{-d(c)}}{m-d(c)}$. 
The term $P^{O,\ol{R}}(M(c^*) | c)$ can be expressed as a product
of two terms. The first one (resp.~second one) is the contribution
of the first $j$ trials (resp.~the last $d-j$ trials).
Since $M(c)$ starts with $z_1,\ldots,z_j$, the first term is
simply $T_1 := 2^{-j}$. The second term has the following form
\[  
T_2 := \frac{ \left( \frac{2^{-j}}{m-j} \right)^{d-j} }
{ 2^{-j} \left( 2^{-j}-\frac{2^{-j}}{m-j} \right)
\left( 2^{-j}-2\frac{2^{-j}}{m-j} \right) \ldots
\left( 2^{-j}-(d-j-1)\frac{2^{-j}}{m-j} \right) } \enspace .
\]
As usual, the numerator contains the product of the $c$-conditional
(here: uniform) likelihoods while the denominator contains the
product of the corresponding normalization factors. $T_2$ looks
terrifying at first glance, but luckily there is a lot of cancellation
and $T_2$ can be rewritten as follows:
\begin{eqnarray*}
T_2 & = & \frac{1}{(m-j)^{d-j} \left( 1-\frac{1}{m-j} \right)
\left( 1-\frac{2}{m-j} \right) \ldots
\left( 1-\frac{d-j-1}{m-j} \right) } \\
& = & \frac{1}{(m-j)(m-j-1)(m-j-2) \ldots (m-d+1)} \enspace .
\end{eqnarray*}
Remember that $d = d(c^*) \le m-1$. It follows 
that $m-d+1 \ge 2$ and therefore
\[
T_2 \le 2^{-(d-j)}\ \mbox{ and }\ P^{O,\ol{R}}(M(c^*|c) = 
T_1 \cdot T_2 \le 2^{-d}   
\]
with equality only if either $j=d$ or $d = m-1$ and $j = m-2$. 
Note that $j=d$ if and only if the sequence $M(c)$ starts with
the sequence $M(c^*) = z_1,\ldots,z_d$. \\
We enter now Stage 2 of the parameter construction, in which we
make some infinitesimal changes of the parameters that we have used
so far. In order to distinguish the new parameter collection from
the old one, the new parameters are denoted by $P_\ve(z|c)$.
They are defined as follows:
\[
P_\ve(z|c) = \left\{ \begin{array}{ll}
    2^{-i} & \mbox{if $1 \le i \le d(c)-1$ and $z = z_i^c$} \\
    2^{-i}+\ve & \mbox{if $i = d(c)$ and $z = z_i^c$} \\
    \frac{2^{-d(c)}-\ve}{m-d(c)} & \mbox{if $d(c)+1 \le i \le m$ and $z = z_i^c$} \\
    0 & \mbox{if $z \in Z \sm Z_c$}
        \end{array} \right. \enspace .
\]
The main difference to the old parameter collection is the ``extra-bonus'' $\ve$
that $c$ assigns to the last element $z_{d(c)}^c$ of the sequence $M(c)$.
Now the total probability mass assigned to $z_1^c,\ldots,z_{d(c)}^c$
is by the amount of $\ve$ greater than before, so that only probability
mass $2^{-d(c)}-\ve$ is left for $Z_c \sm M(c)$. Again, this probability
mass is shared evenly among the elements of $Z_c \sm M(c)$. Here comes
the central observation:
\begin{description}
\item[Claim:] If $\ve > 0$ is sufficiently small, then the following
implications are valid:
\begin{eqnarray*}
P^{O,\ol{R}}(M(c^*)|c^*) > P^{O,\ol{R}}(M(c^*)|c) & \impl &
P_\ve^{O,\ol{R}}(M(c^*)|c^*) > P_\ve^{O,\ol{R}}(M(c^*)|c) \enspace , \\
P^{O,\ol{R}}(M(c^*)|c^*) = P^{O,\ol{R}}(M(c^*)|c) & \impl &
P_\ve^{O,\ol{R}}(M(c^*)|c^*) > P_\ve^{O,\ol{R}}(M(c^*)|c) \enspace .
\end{eqnarray*}
\item[Proof of the Claim:]
The first implication is based on a simple continuity argument.
The second implication can be verified as follows. Remember from the
discussion in Stage 1 that $P^{O,\ol{R}}(M(c^*)|c^*) = P^{O,\ol{R}}(M(c^*)|c)$
can occur only if either $M(c)$ starts with $M(c^*) = z_1,\ldots,z_d$
or if $d = m-1$ and $j = m-2$. In the former case, the effect 
of $P_\ve(z_d|c^*) = P(z_d|c^*) + \ve$ and $P_\ve(z_d|c) = P(z_d|c)$ 
will be that 
\begin{equation} \label{eq:eps-bonus}
P_\ve^{O,\ol{R}}(M(c^*)|c^*) > P^{O,\ol{R}}(M(c^*)|c^*) =
P^{O,\ol{R}}(M(c^*)|c) = P_\ve^{O,\ol{R}}(M(c^*)|c) \enspace ,
\end{equation}
as desired. In the latter case, we have $M(c^*) = z_1,\ldots,z_{m-1}$
and either $M(c) = z_1,\ldots,z_{m-2}$ or $M(c) = z_1,\ldots,z_{m-2},z_m$.
In the latter case, we again end up at~(\ref{eq:eps-bonus}).
Suppose therefore that  $M(c^*) = z_1,\ldots,z_{m-1}$
and $M(c) = z_1,\ldots,z_{m-2}$. Here the situation is less clear, 
because the $\ve$-bonus will affect not only the $c^*$-conditional
likelihood of $M(c^*)$ but also the $c$-conditional likelihood.
We therefore compute both quantities and compare them afterwards.
Clearly $P_\ve^{O,\ol{R}}(M(c^*)|c^*) = 2^{-(m-1)} + \ve$.
The term $P_\ve^{O,\ol{R}}(M(c^*) | c)$ can be expressed as a product
of two terms, The first one (resp.~second one) is the contribution
of the first $m-2$ trials (resp.~the last trial).
Since $M(c) = z_1,\ldots,z_{m-2}$, the first term clearly
equals $2^{-(m-2)} + \ve$. Note that $2^{-(m-2)}-\ve$ is the
probability mass remaining for, and evenly shared by, $z_{m-1}$
and $z_m$. The second term equals therefore
\[
\frac{P_\ve(z_{m-1}|c)}{2^{-(m-2)}-\ve} =
\frac{\left({2^{-(m-2)}-\ve}\right) / 2}{2^{-(m-2)}-\ve} = 
\frac{1}{2} \enspace .
\]
It follows that
\[ 
P_\ve^{O,\ol{R}}(M(c^*) | c) = \frac{1}{2} \cdot \left(2^{-(m-2)} + \ve\right) = 
2^{-(m-1)} + \frac{\ve}{2} \enspace ,
\]
which is less than $P_\ve^{O,\ol{R}}(M(c^*)|c^*) = 2^{-(m-1)} + \ve$.
This completes the proof of the claim.
\end{description}
The above discussions show that we can view $M$ a teacher for the 
learner $L$ with parameter collection $(P_\ve(z|c))_{z \in Z,c \in C}$.
This completes the proof of the theorem.
\end{proof}

Combining Theorems~\ref{th:maptd-smn-set} and~\ref{th:maptd-smn-repfree}
with what we already know about saturating matching numbers, we obtain
the following result:

\begin{corollary} \label{cor:map-hierarchy}
Suppose that $(C,Z,\models)$ is of the form as described 
in Example~\ref{ex:labeled-examples} and $(\alpha,\beta) \neq (O,R)$.  
Then 
\[ 
\MAPTD^{\alpha,\beta}(C) = \SMN(G(C)^{\alpha,\beta})\ \mbox{ and }\ 
\MLETD^{\alpha,\beta}(C) = \SMN(G(C)_{\neq\eset}^{\alpha,\beta}) 
\enspace .
\]
Moreover
\begin{eqnarray*}
\MAPTD^{\ol{O},\ol{R}}(C) & \ge & \max\{\MAPTD^{O,R}(C) , \MAPTD^{O,\ol{R}}(C)\}
\enspace , \\
\MLETD^{\ol{O},\ol{R}}(C) & \ge & 
\max\{\MLETD^{O,R}(C) , \MLETD^{O,\ol{R}}(C)\} \enspace .
\end{eqnarray*}
\end{corollary}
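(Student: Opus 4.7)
The plan is to derive the corollary by combining already-established results, without any new arguments of substance. The first assertion, namely the equalities $\MAPTD^{\alpha,\beta}(C) = \SMN(G(C)^{\alpha,\beta})$ and $\MLETD^{\alpha,\beta}(C) = \SMN(G(C)_{\neq\eset}^{\alpha,\beta})$ for $(\alpha,\beta) \neq (O,R)$, is immediate once one observes that each of the three remaining sampling modes is covered by one theorem from the preceding material: $(\ol{O},R)$ by the equality part of Theorem~\ref{th:smn-bounds}, $(\ol{O},\ol{R})$ by Theorem~\ref{th:maptd-smn-set}, and $(O,\ol{R})$ by Theorem~\ref{th:maptd-smn-repfree}. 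So I would simply quote these three results.

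For the ``moreover'' part, I would use the equalities just established to translate each MAP- and MLE-teaching dimension into its saturating matching counterpart. Combining Corollary~\ref{cor:equivalent-modes} with the equality from Theorem~\ref{th:smn-bounds} additionally gives $\MAPTD^{O,R}(C) = \MAPTD^{\ol{O},R}(C) = \SMN(G(C)^{\ol{O},R})$ and, analogously, $\MLETD^{O,R}(C) = \SMN(G(C)_{\neq\eset}^{\ol{O},R})$. After this translation, the two inequalities to be proven become precisely
\[
\SMN(G(C)^{\ol{O},\ol{R}}) \ge \max\{\SMN(G(C)^{\ol{O},R}),\SMN(G(C)^{O,\ol{R}})\}
\]
and the corresponding inequality for the $\neq\eset$ variants, both of which are exactly the chains of inequalities stated in the paragraph preceding Corollary~\ref{cor:smn-bounds}. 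I would just cite them.

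For completeness I would add a brief indication of why those $\SMN$ inequalities are ``clear''. Given any $C$-saturating matching $M : C \to \cZ^{\ol{O},\ol{R}}$, reinterpreting each set $M(c)$ as a multiset with all multiplicities~$1$ yields an injective, consistency-preserving, order-preserving map into $\cZ^{\ol{O},R}$; similarly, listing the elements of $M(c)$ according to any fixed linear order on $Z$ yields such a map into $\cZ^{O,\ol{R}}$. Both constructions send non-empty $M(c)$ to non-empty images, so they also handle the $\neq\eset$ versions. There is no real obstacle here: the whole corollary is a bookkeeping step that assembles Theorems~\ref{th:smn-bounds}, \ref{th:maptd-smn-set} and~\ref{th:maptd-smn-repfree}, Corollary~\ref{cor:equivalent-modes}, and the trivial comparisons among the four shapes of samples.
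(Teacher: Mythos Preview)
Your proposal is correct and follows essentially the same route as the paper, which simply says the corollary is obtained by ``combining Theorems~\ref{th:maptd-smn-set} and~\ref{th:maptd-smn-repfree} with what we already know about saturating matching numbers.'' Your write-up is in fact more explicit than the paper's, spelling out the role of Theorem~\ref{th:smn-bounds} for the $(\ol{O},R)$ case and of Corollary~\ref{cor:equivalent-modes} for handling $(O,R)$.
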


The first assertion of the corollary implies the correctness of 
the results which are visualized in Fig.~\ref{fig:smn-characterization}.
The following two results provide some supplementary information:

\begin{theorem} \label{th:strict-smn-hierarchy}
Let $(\alpha,\beta)$ and $(\alpha',\beta')$ be two different
sampling modes. There exists a concept class $C$ such
that $\SMN(G(C)^{\alpha',\beta'}) \neq \SMN(G(C)^{\alpha,\beta})$.
\end{theorem}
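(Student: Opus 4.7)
The plan is to exhibit, for each of the six pairs of distinct sampling modes, a concept class $C$ (in the labeled-examples setting) witnessing that the two SMN values differ. The backbone is the partial order from Corollary~\ref{cor:smn-bounds}: $\SMN(G(C)^{O,R})$ is always the smallest value, $\SMN(G(C)^{\ol O, \ol R})$ the largest, and $\SMN(G(C)^{\ol O, R})$, $\SMN(G(C)^{O, \ol R})$ sit between them, incomparable in general. For each comparable pair it suffices to produce a single class where the (known) inequality is strict; for the one incomparable pair a class is needed for each direction.

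For the three pairs consisting of two modes both distinct from $(O,R)$, I would invoke Corollary~\ref{cor:map-hierarchy}, which identifies $\SMN(G(C)^{\alpha,\beta})$ with $\MAPTD^{\alpha,\beta}(C)$ in the labeled-examples setting (and analogously for MLE). Any concept class witnessing a $\MAPTD$ or $\MLETD$ separation between the two modes thus automatically witnesses the corresponding SMN separation. The concept classes implicit in the proofs of Lemmas~\ref{lem1:incomporability}--\ref{lem3:incomarability}, which establish pairwise incomparability of $(\ol O, R)$, $(O, \ol R)$, $(\ol O, \ol R)$ by producing MLE-learners with sharply different $\MLETD$ values, already supply the required separations. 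They can be transplanted into the labeled-examples framework by choosing a domain $X$ large enough to realize the prescribed consistency patterns, each abstract observation $z_i$ being encoded by a suitable labeled example $(x_i, b_i)$.

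For the three pairs involving $(O, R)$, Corollary~\ref{cor:map-hierarchy} is silent and I would argue directly on the consistency graph. The guiding observation is that $\cZ^{O,R}$ strictly refines each of the other three vertex sets: distinct sequences may collapse to the same multiset (so $\cZ^{\ol O, R}$ is strictly coarser), sequences with repetitions disappear in $\cZ^{O, \ol R}$, and $\cZ^{\ol O, \ol R}$ combines both effects. Each coarsening removes teaching vertices of a given order, so I would search for a small labeled-examples concept class in which two concepts are forced to be matched to sequences that become indistinguishable (or vanish) under the coarsening, pushing the saturating matching in the more restrictive mode to a larger order.

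The main obstacle is the combinatorial verification of these lower bounds in the labeled-examples setting, where $|Z_c|=|X|$ for every $c$ and the extensional view~(\ref{eq:extensional-view}) forces distinct concepts to have distinct $Z_c$. Showing that SMN strictly exceeds a value $d$ for the more restrictive mode amounts to ruling out every saturating matching of order $d$; this would be handled by a short case analysis, combined with counting the available multisets/sets of bounded order and eliminating candidate matchings by pigeonhole. Assembling these witnesses — one per pair, with two for the incomparable pair $\{(\ol O, R),(O, \ol R)\}$ — then yields the theorem.
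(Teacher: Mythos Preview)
Your plan has a genuine gap in the treatment of the pairs not involving $(O,R)$. The separations established in Lemmas~\ref{lem1:incomporability}--\ref{lem3:incomarability} are separations of $\MLETD_L^{\alpha,\beta}(C)$ for a \emph{fixed} learner $L$, not of $\MLETD^{\alpha,\beta}(C) = \min_L \MLETD_L^{\alpha,\beta}(C)$. Corollary~\ref{cor:map-hierarchy} identifies $\SMN(G(C)^{\alpha,\beta})$ with the latter, so those lemmas give you nothing about SMN. In fact the concept classes used there all share the \emph{same} consistency graph (the complete bipartite graph $K_{3,3}$, since $c_i \models z_j$ for all $i,j$), and SMN depends only on the consistency graph. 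For that graph every concept is consistent with every singleton, so $\SMN(G(C)^{\alpha,\beta}) \le 1$ in \emph{all} four modes simultaneously; the classes witness no SMN separation whatsoever. Your proposed transplantation into the labeled-examples setting cannot reproduce this consistency pattern anyway, because~(\ref{eq:extensional-view}) forces distinct concepts to have distinct $Z_c$.

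The paper instead bypasses $\MAPTD$ and $\MLETD$ entirely and argues directly by counting, which is close in spirit to what you sketch for the $(O,R)$ pairs but carried out uniformly. For, say, $(\ol O,R)$ versus $(\ol O,\ol R)$: over $Z = X \times \{0,1\}$ with $|X|=m$ large, the number of multisets of size $\le 2$ strictly exceeds the number of sets of size $\le 2$. One checks via Hall's condition that the bipartite graph between $C_m = 2^X$ and $\cZ_2 := \{A \in \cZ^{\ol O,R} : |A|\le 2\}$ (edges given by consistency) admits a $\cZ_2$-saturating matching $M$. Taking $C$ to be the set of concepts with an $M$-partner gives $\SMN(G(C)^{\ol O,R}) = 2$ by construction, while $|C| = |\cZ_2|$ exceeds the number of sets of size $\le 2$, so by pigeonhole $\SMN(G(C)^{\ol O,\ol R}) > 2$. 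The same template---compare the cardinalities of $\{A \in \cZ^{\alpha,\beta}: |A|\le d\}$ for two modes and build $C$ to saturate the larger side---handles every pair.
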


\begin{proof}
We present the proof for $(\alpha,\beta) = (\ol{O},R)$
and $(\alpha',\beta') = (\ol{O},\ol{R})$.\footnote{The proof for the
other choices of $(\alpha,\beta)$ and $(\alpha',\beta')$ is similar.}
Let $X = \{x_1,\ldots,x_m\}$, let $Z = X \times \{0,1\}$,
let $C_m$ be the powerset of $X$ and let $\models$ be given
by~(\ref{eq:labeled-examples}). Let $\cZ_2$ (resp.~$\cZ'_2$)
be the set of all $A \in \cZ^{\ol{O},R}$ (resp.~$A \in \cZ^{\ol{O},\ol{R}}$)
such that $|A| \le 2$. A simple counting argument shows
that $|\cZ'_2| < |\cZ_2|$. Consider the bipartite graph $G$
with vertex sets $C_m$ and $\cZ_2$ and with an edge $(c,A)$ if
and only if $c \models A$. Each vertex in $\cZ_2$ has degree 
at least $D := 2^{m-2}$ whereas each vertex in $C_m$ has 
degree $d := 1+2m+\frac{1}{2}(m-1)m$. Suppose that $m$ is sufficiently 
large such that $d \le D$. Fix an arbitrary subset $S$ of $\cZ_2$. 
It follows that
\[ |\Gamma(S)| \ge \frac{D}{d} \cdot |S| \ge |S| \]
so that $G$ satisfies Hall's condition. It follows that $G$
admits a $\cZ_2$-saturating matching, say $M$. 
Let $C$ be the set of concepts in $C_m$ having an $M$-partner.
By construction: $\SMN(G(C)^{\ol{O},R}) = 2$.
For cardinality reasons, namely $|C| = |M| = |\cZ_2| > |\cZ'_2|$,
we have $\SMN(G(C)^{\ol{O},\ol{R}}) > 2$. 
\end{proof}

Theorem~\ref{th:strict-smn-hierarchy} implies that the parameters
with different colors in Fig.~\ref{fig:smn-characterization} can
generally have different values.

\subsection{MAP- versus MLE-Learners} \label{subsec:map-versus-mle}

Suppose that $L$ is an MLE-learner for $C$. Let $L'$ be a MAP-learner
that differs from $L$ only by having non-uniform priors, i.e., the
conditional likelihoods are the same. The following example
demonstrates that the gap between $\MAPTD_L^{\alpha,\beta}(C)$ 
and $\MAPTD_{L'}^{\alpha,\beta}(C)$ can become arbitrarily 
large.\footnote{This example uses a concept class, namely singletons 
plus empty set, which is often used to demonstrate that the classical 
teaching model from~\cite{SM1991,GK1995} may assign an inappropriately high 
teaching dimension to a trivial concept class.}

\begin{example} \label{ex:map-mle-gap}
Let $X = \{x_1,\ldots,x_m\}$, $Z = X \times \{0,1\}$,
$C = \{\{x_1\},\ldots,\{x_m\}\} \cup \{\eset\}$ and let $\models$
be given by~(\ref{eq:labeled-examples}). Consider the 
MLE-learner $L$ be given by the parameters 
\[ P((x_i,c(x_i)) | c)\ =\ \frac{1}{m} \]
for each $c \in C$ and $i=1,\ldots,m$. We assume for simplicity
that the sampling mode $(\alpha,\beta)$ of $L$ equals $(\ol{O},\ol{R})$, 
but the following reasoning (mutatis mutandis) applies to any other 
sampling mode as well. Clearly, for each $k \in [m]$, the concept $\{x_k\}$ 
can be taught by the single observation $(x_k,1)$. However $\eset$ can only 
be taught by the full set $A_0 := \{(x_i,0): i=1,\ldots,m\}$ of 
observations that $\eset$ is consistent with: as long as some $(x_k,0)$ 
is missing in a set $A \subset A_0$, we have that $P(A|\eset) = P(A|\{x_k\})$
so that $\eset$ is not the unique maximizer of $P(A|c)$.
We may conclude from this discussion 
that $\MAPTD_L^{\alpha,\beta}(C) = m$. Let $L'$ be a MAP-learner
that differs from $L$ only by having for $\eset$ a higher prior
than for the other concepts in $C$. Then the concept $\{x_k\}$
can still be taught by the single observation $(x_k,1)$. But now
also the concept $\eset \in C$ can be taught in a trivial fashion 
by $\eset \in 2^Z$. We may conclude 
that $\MAPTD_{L'}^{\alpha,\beta}(C) = 1$.
\end{example}

In contrast to Example~\ref{ex:map-mle-gap}, the next result shows
that, in case of optimally parameterized learners, the advantage of 
MAP-learners over MLE-learners is all but dramatic:

\begin{theorem}
Suppose that $(C,Z,\models)$ is of the form as described
in Example~\ref{ex:labeled-examples} and $(\alpha,\beta) \neq (O,R)$.
Then
\begin{equation} \label{eq1:map-mle}
\MAPTD^{\alpha,\beta}(C) \le \MLETD^{\alpha,\beta}(C) \le 
1+\MAPTD^{\alpha,\beta}(C) \enspace .
\end{equation} 
Moreover, there exist concept classes $C'$ and $C''$ such that 
\begin{equation} \label{eq2:map-mle}
\MLETD^{\alpha,\beta}(C') = \MAPTD^{\alpha,\beta}(C')\  
\mbox{ and }\ \MLETD^{\alpha,\beta}(C'') =  1+\MAPTD^{\alpha,\beta}(C'')
\enspace .
\end{equation}
\end{theorem}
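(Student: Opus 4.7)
The left inequality in~(\ref{eq1:map-mle}) is the trivial one already noted at the end of Section~\ref{subsec:map-mle}. For the nontrivial bound I would apply Corollary~\ref{cor:map-hierarchy}, which in the three relevant sampling modes identifies $\MAPTD^{\alpha,\beta}(C)$ with $\SMN(G(C)^{\alpha,\beta})$ and $\MLETD^{\alpha,\beta}(C)$ with $\SMN(G(C)^{\alpha,\beta}_{\neq\eset})$, so the task reduces to showing
\[
\SMN(G(C)^{\alpha,\beta}_{\neq\eset})\ \le\ \SMN(G(C)^{\alpha,\beta}) + 1 \enspace .
\]
Set $d := \SMN(G(C)^{\alpha,\beta})$. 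If $d \ge |X|$, the assignment $c \mapsto Z_c$ (viewed as a set, multiset, or canonically ordered repetition-free sequence, depending on $(\alpha,\beta)$) is well-defined by $|Z_c| = |X|$ and is injective by~(\ref{eq:extensional-view}); it yields a $C$-saturating matching in $G(C)^{\alpha,\beta}_{\neq\eset}$ of order $|X| \le d$, so $\MLETD^{\alpha,\beta}(C) \le d$.

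In the remaining case $d+1 \le |X|$, I would apply Hall's theorem to the subgraph of $G(C)^{\alpha,\beta}_{\neq\eset}$ whose right-side vertices have size at most $d+1$; it suffices to check that for every non-empty $S \seq C$ at least $|S|$ such vertices are adjacent to $S$. Hall's condition applied to the order-$d$ matching guaranteed by $\SMN(G(C)^{\alpha,\beta}) = d$ produces at least $|S|$ size-$\le d$ elements of $\cZ^{\alpha,\beta}$ consistent with some $c \in S$; at most one of them is $\eset$, leaving $\ge |S|-1$ non-empty candidates of size at most $d$. The missing unit is supplied by any size-$(d+1)$ element consistent with an arbitrary $c \in S$: in mode $(\ol{O},\ol{R})$ take any $(d+1)$-subset of $Z_c$, in mode $(O,\ol{R})$ any length-$(d+1)$ repetition-free ordering of such a subset, and in mode $(\ol{O},R)$ the multiset consisting of $d+1$ copies of any $z \in Z_c$; each of these is guaranteed to exist because $|Z_c| = |X| \ge d+1$ (and in mode $(\ol{O},R)$ no size restriction on $|X|$ is needed at all). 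The principal obstacle is precisely this case split: the size-$(d+1)$ witness argument breaks down when $|X| < d+1$, which is exactly the regime handled above by the direct matching $c \mapsto Z_c$.

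For the witnessing classes I would take $C' = \{\{x_1\},\{x_2\}\}$ over $X = \{x_1, x_2\}$: the matching $\{x_i\} \mapsto (x_i, 1)$ saturates $C'$ with order $1$ while avoiding $\eset$, giving $\MLETD^{\alpha,\beta}(C') = \MAPTD^{\alpha,\beta}(C') = 1$. For $C''$ take $X = \{x_1, x_2, x_3\}$ and $C'' = 2^X \sm \{X\}$ (so $|C''| = 7$); the assignment $\eset \mapsto \eset$, $\{x_i\} \mapsto (x_i, 1)$, and $\{x_i, x_j\} \mapsto (x_k, 0)$ with $\{k\} = \{1,2,3\} \sm \{i,j\}$ is a $C''$-saturating matching of order $1$, whence $\MAPTD^{\alpha,\beta}(C'') = 1$. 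Without the $\eset$ target only the $2|X| = 6$ size-$1$ non-empty right-side vertices are available to cover the $7$ concepts, so Hall's condition fails at $S = C''$ and the order must grow; replacing $\eset \mapsto \eset$ by $\eset \mapsto \{(x_1,0),(x_2,0)\}$ gives a valid matching of order $2$, hence $\MLETD^{\alpha,\beta}(C'') = 2 = 1 + \MAPTD^{\alpha,\beta}(C'')$.
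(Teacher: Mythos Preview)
Your proposal is correct, with a presentation that differs from the paper's in two respects worth noting.

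For the inequality $\SMN(G(C)^{\alpha,\beta}_{\neq\eset}) \le 1 + \SMN(G(C)^{\alpha,\beta})$, the paper argues more directly than you do: starting from a $C$-saturating matching $M$ of order $d$ in $G(C)^{\alpha,\beta}$, if $M(c_0) = \eset$ for some $c_0$, one simply replaces $\eset$ by any element of $\cZ^{\alpha,\beta}$ of size exactly $d+1$ that is consistent with $c_0$ (such an element exists because $|Z_{c_0}| = |X| \ge d+1$); the new target has size $d+1 > d$ and therefore cannot collide with any other $M(c)$, so the modified $M$ is still a matching. Your Hall-condition detour reaches the same conclusion but is longer than needed, since you already hold the matching in your hands.

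For the witness $C''$, your concrete example $C'' = 2^X \setminus \{X\}$ with $|X| = 3$ is considerably simpler than the paper's construction, which first invokes a Hall-type argument to produce a $\cZ_{\le d}$-saturating matching inside the full powerset $C_m$ (for $m$ large) and then takes $C''$ to be the set of matched concepts, using the cardinality gap $|\cZ_{\le d}| > |\cZ'_{\le d}|$ to force $\SMN_{\neq\eset} > d$. Your example has the advantage of being explicit and of working uniformly for all three sampling modes with no asymptotic hypothesis on $|X|$; the paper's construction has the advantage of producing, for every $d \ge 1$, a class with $\MAPTD^{\alpha,\beta}(C'') = d$ and $\MLETD^{\alpha,\beta}(C'') = d+1$, not just the case $d = 1$.
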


\begin{proof}
Clearly $\MAPTD^{\alpha,\beta}(C) \le \MLETD^{\alpha,\beta}(C)$.
In order to obtain~(\ref{eq1:map-mle}), it suffices therefore to 
show that $\MLETD^{\alpha,\beta}(C) \le 1 + \MAPTD^{\alpha,\beta}(C)$,
or equivalently, that 
$\SMN(G(C)_{\neq\eset}^{\alpha,\beta}) \le 1 + \SMN(G(C)^{\alpha,\beta})$.
We present the proof for $(\alpha,\beta) = (\ol{O},\ol{R})$.\footnote{The
proof for the other choices of $(\alpha,\beta)$ is similar.} 
For sake of brevity, set $m := |X|$, $G = G(C)^{\ol{O},\ol{R}}$
and $d := \SMN(G)$. Since $\SMN(G_{\neq\eset}) \le m$, 
we may assume that $d \le m-2$. Let $M:C \ra 2^Z$ be a $C$-saturating 
matching of order $d$ in $G$. If $M$ does not assign $\eset$ to any 
concept in $C$, then $\SMN(G_{\neq\eset}) \le d$. Otherwise, 
if $M(c_0) = \eset$ for some $c_0 \in C$, then we may arbitrarily 
pick a set $A \subset X$ of size $d+1$ and replace the $M$-partner $\eset$ 
of $c_0$ by the set $B = \{(a,c_0(a)): a \in A\}$. The resulting matching 
now witnesses that $\SMN(G_{\neq\eset}) \le d+1$. \\
We still have to specify concept classes $C'$ and $C''$
which satisfy~(\ref{eq2:map-mle}). As for $C'$, there are plenty 
of choices, e.g., $C' = \{ \{x_i\}: i = 1,\ldots,m\}$
satisfies 
\[ 
\MLETD^{\alpha,\beta}(C') = \MAPTD^{\alpha,\beta}(C') = 1 \enspace .
\]
In order to specify an appropriate class $C''$, we assume again
that $(\alpha,\beta) = (\ol{O},\ol{R})$ and proceed as follows.
Let $X = \{x_1,\ldots,x_m\}$, let $Z = X \times \{0,1\}$,
let $C_m$ be the powerset of $X_m$ and let $\models$ be given
by~(\ref{eq:labeled-examples}). Let $\cZ_{\le d}$ 
(resp.~$\cZ'_{\le d}$) be the set of subsets (resp.~non-empty subsets)
of $Z$ of size at most $d$. Consider the bipartite graph $G$ with vertex 
sets $C_m$ and $\cZ_{\le d}$ an edge $(c,A)$ if and only if $c \models A$. 
If $m$ is sufficiently large (while $d$ is kept fixed), $G$ admits 
a $\cZ_{\le d}$-saturating matching, say $M$.  Let $C''$ be the set of 
concepts in $C_m$ having an $M$-partner. 
By construction: $\SMN(G(C'')^{\ol{O},\ol{R}}) = d$.
For cardinality reasons, 
namely $|C''| = |M| = |\cZ_{\le d}| > |\cZ_{\le d}| - 1 = |\cZ'_{\le d}|$,
we have $\SMN(G(C'')_{\neq\eset}^{\ol{O},\ol{R}}) > d$, which implies
that $\SMN(G(C'')_{\neq\eset}^{\ol{O},\ol{R}}) = d+1$.
\end{proof}

\subsection{Parameters Bounding MLE-TD from Above} 
\label{subsec:mletd-upper-bounds}

Since $\MLETD$ can never be smaller than $\MAPTD$, it follows
that $\MLETD^{\ol{O},\ol{R}}(C)$ is the largest among the
parameters occurring in Corollary~\ref{cor:map-hierarchy}.
Hence upper bounds on $\MLETD^{\ol{O},\ol{R}}(C)$ are, all the more,
upper bounds on the other parameters. For this reason, we confine 
ourselves to MLE-learners and to sampling mode $(\ol{O},\ol{R})$
in what follows. In order to simplify notation, we will write 
\begin{itemize}
\item
$2^Z$ instead of $\cZ^{\ol{O},\ol{R}}$, 
\item 
$\MLETD(C)$ instead of $\MLETD^{\ol{O},\ol{R}}(C)$, 
\item
$G^+(C)$ instead of $G(C)^{\ol{O},\ol{R}}_{\neq\eset}$. 
\end{itemize}

Among the parameters that bound $\MLETD(C)$ from above are the
antichain number of $C$, the VC-dimension of $C$ and the so-called 
no-clash teaching dimension of $C$. We begin with the definition 
of the antichain number:

\begin{definition}[Antichain Mapping and Antichain Number]
\label{def:antichain}
$T:C \ra 2^Z$ is called an \emph{antichain mapping for $C$} 
if the following holds:
\begin{enumerate}
\item
Each concept $c \in C$ is consistent with $T(c)$.
\item
The sets $(T(c))_{c \in C}$ form an antichain, i.e.,
\[ 
\forall c_1 \neq c_2 \in C: 
T(c_1) \not\seq T(c_2) \wedge T(c_2) \not\seq T(c_1) \enspace . 
\]
\end{enumerate}
The smallest possible  order of an antichain mapping for $C$ 
is called the \emph{antichain number of $C$} and denoted by $\AN(C)$.
\end{definition}

\noindent
It is well-known that the antichain number is upper-bounded by 
the VC-dimension:

\begin{theorem}[\cite{MSSZ2022}] \label{th:an-vcd}
Suppose that the concept class $C$ is a family of subsets 
of a finite domain $X$. Then $\AN(C) \leq \VCD(C)$. 
\end{theorem}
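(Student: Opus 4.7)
The plan is to argue by induction on $|X|$, following the shifting/reduction paradigm familiar from Sauer-Shelah-style results. Let $d := \VCD(C)$. In the base case $|X| = 0$ we have $|C| \le 1$ and $T(c) = \eset$ yields $\AN(C) = 0 \le d$.

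For the inductive step, fix any $x \in X$ and consider two auxiliary classes on $X \sm \{x\}$: the restriction $C' := \{c \sm \{x\} : c \in C\}$ and the reduction $C'' := \{c' \seq X \sm \{x\} : c' \in C \wedge c' \cup \{x\} \in C\}$. The usual arguments yield $\VCD(C') \le d$ and $\VCD(C'') \le d-1$; for the latter, if $S \seq X \sm \{x\}$ is shattered by $C''$ then $S \cup \{x\}$ is shattered by $C$. The induction hypothesis supplies antichain mappings $T' : C' \ra 2^{(X \sm \{x\}) \times \{0,1\}}$ of order at most $d$ and $T'' : C'' \ra 2^{(X \sm \{x\}) \times \{0,1\}}$ of order at most $d-1$. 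I would then define $T : C \ra 2^Z$ by the rule: if $c \sm \{x\} \notin C''$, set $T(c) := T'(c \sm \{x\})$; if $c \sm \{x\} \in C''$, set $T(c) := T''(c \sm \{x\}) \cup \{(x, c(x))\}$. In either case $T(c) \seq Z_c$ and $|T(c)| \le d$.

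The main obstacle is verifying the antichain property. When $c_1, c_2$ both fall in the first case, it transfers directly from $T'$. When both fall in the second case, there are two subcases: if the restrictions $c_1 \sm \{x\} = c_2 \sm \{x\}$ coincide, then $c_1$ and $c_2$ disagree at $x$ and the oppositely-labelled markers $(x, c_1(x)), (x, c_2(x))$ render the two sets incomparable; otherwise the antichain property of $T''$ transfers. The delicate mixed case, where one concept uses $T'$ and the other uses $T''$ together with a marker, requires a coordinated construction: I would pick $T'$ first and then insist that $T''(c') \seq T'(c')$ for each $c' \in C''$. Under such a coordination, the antichain property of $T'$ restricted to $C''$ propagates to the combined mapping, and the marker $(x, \cdot)$ only strengthens incomparability. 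Showing that a coordinated $T''$ of order $d-1$ can be extracted from $T'$ (for instance by deleting one carefully chosen example per concept in $C''$, exploiting the extra degree of freedom guaranteed by $\VCD(C'') \le d-1$) is where I expect most of the real work to live.
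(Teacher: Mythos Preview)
The paper does not prove this theorem; it is quoted from~\cite{MSSZ2022} and used as a black box. So there is nothing in the paper to compare your argument against, but I can assess the proposal on its own terms.

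Your inductive framework and case analysis are reasonable, and the consistency and size bounds are fine. The mixed case is correctly identified as the crux, and your observation that the containment $T''(c') \seq T'(c')$ would suffice is also correct: if $T'(c_1\sm\{x\}) \not\seq T'(c_2\sm\{x\})$ and $T''(c_2\sm\{x\}) \seq T'(c_2\sm\{x\})$, then $T'(c_1\sm\{x\}) \not\seq T''(c_2\sm\{x\})$, while the marker $(x,c_2(x))$ blocks the reverse inclusion.

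The gap, however, is real and not merely deferred work. Your induction hypothesis yields \emph{some} antichain mapping for $C''$ of order at most $d-1$; it does not yield one that sits pointwise inside a previously chosen $T'$. To obtain a coordinated $T''$ you would need a strictly stronger statement, roughly ``inside every antichain mapping of order $d$ there is an antichain mapping of order $d-1$'', and this is not what the induction gives you. The ``delete one example per concept'' heuristic can break the antichain property: from $T'(c'_1)=\{a,b\}$ and $T'(c'_2)=\{b,c\}$, deleting $a$ and $c$ produces two copies of $\{b\}$. Avoiding such collisions globally amounts to finding a system of distinct (in fact pairwise incomparable) representatives inside the sets $T'(c')$, and nothing in the setup guarantees that such a system exists. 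Without either a proof of this auxiliary claim or a reformulated, stronger induction hypothesis that carries the coordination along, the argument does not close.
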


\noindent
We proceed with the definition of the teaching dimension 
in the so-called no-clash model of teaching:

\begin{definition}[No-clash Teaching Dimension~\cite{KSZ2019,FKSSZ2022}]
\label{def:nctd}
A mapping $T: C \ra 2^Z$ is called \emph{clash-free on $C$} 
if it satisfies the following:
\begin{enumerate}
\item Each $c \in C$ is consistent with $T(c)$.
\item
If $c_1 \neq c_2 \in C$, then $c_1$ is inconsistent with $T(c_2)$
or $c_2$ is inconsistent with $T(c_1)$.\footnote{The situation
that $c_1$ is consistent with $T(c_2)$ and $c_2$ is consistent
with $T(c_1)$ would be called a \emph{clash of $c_1$ and $c_2$}.
This explains why the mapping $T$ is called clash-free.}
\end{enumerate}
The \emph{no-clash teaching dimension of $C$}, denoted
as $\NCTD(C)$,  is the smallest possible order of a 
mapping $T:C \ra 2^Z$ that is clash-free on $C$. 
\end{definition}

\begin{theorem} \label{th:mle-others}
Suppose that $(C,Z,\models)$ is of the form as described 
in Example~\ref{ex:labeled-examples}. Then $\MLETD(C) \le \AN(C)$ 
and $\MLETD(C) \le \NCTD(C)$.
\end{theorem}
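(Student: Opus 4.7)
The plan is to reduce both inequalities to the saturating-matching characterization of $\MLETD(C)$ established in Theorem~\ref{th:maptd-smn-set}, namely $\MLETD(C) = \SMN(G^+(C))$. Thus it suffices to show that, from an antichain mapping (resp.\ a clash-free mapping) $T:C \ra 2^Z$ of order $d$, one can extract a $C$-saturating matching in $G^+(C)$ of order at most $d$.

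For the bound $\MLETD(C) \le \AN(C)$, I would take an antichain mapping $T$ of order $\AN(C)$. Pairwise incomparability forces the sets $T(c)$ to be pairwise distinct, so $T$ is injective; together with $c \models T(c)$, this makes $T$ a $C$-saturating matching in $G(C)^{\ol{O},\ol{R}}$. Because $|C| \ge 2$ and $\eset$ is comparable with every set, no $T(c)$ can equal $\eset$, so $T$ actually lives in $G^+(C)$. This immediately yields $\SMN(G^+(C)) \le \AN(C)$.

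For the bound $\MLETD(C) \le \NCTD(C)$, I would start with a clash-free $T$ of order $d = \NCTD(C)$. Clash-freeness first yields injectivity: if $T(c_1) = T(c_2)$ with $c_1 \ne c_2$, then both concepts would be consistent with this common set, a clash. Hence $T$ already represents a $C$-saturating matching in $G(C)^{\ol{O},\ol{R}}$, and the only obstruction to membership in $G^+(C)$ is that possibly $T(c_0) = \eset$ for (a necessarily unique) $c_0 \in C$. To repair this, I would pick any $z$ with $c_0 \models z$ (such $z$ exists since $Z_{c_0} = X \ne \eset$) and replace $T(c_0)$ by $\{z\}$. The crucial point, which uses clash-freeness, is that this singleton cannot equal any other $T(c_1)$: since $T(c_0) = \eset$ is vacuously consistent with every $c_1$, clash-freeness forces $c_0 \not\models T(c_1)$ for every $c_1 \ne c_0$, whereas $c_0 \models \{z\}$. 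Injectivity of the modified matching is therefore preserved, its order is $\max(1,d) = d$ (note $d \ge 1$ because $|C| \ge 2$ forces some $T(c) \ne \eset$), and it lives in $G^+(C)$, giving $\SMN(G^+(C)) \le \NCTD(C)$.

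I do not anticipate a serious obstacle here: both arguments are short structural observations combined with Theorem~\ref{th:maptd-smn-set}. The only subtle point is the empty-set repair step in the NCTD argument, and as indicated above the clash-free property delivers exactly the room needed to substitute a singleton without creating a collision or increasing the order.
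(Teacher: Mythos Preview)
Your proposal is correct and follows essentially the same approach as the paper: reduce both inequalities to the identity $\MLETD(C)=\SMN(G^+(C))$ and exhibit, from an antichain mapping (resp.\ a clash-free mapping), a $C$-saturating matching in $G^+(C)$ of the same order, with the NCTD case handled by the singleton repair of the (unique) empty value, using that clash-freeness forces $c_0\not\models T(c_1)$ for all $c_1\neq c_0$. One tiny notational slip: $Z_{c_0}$ is not equal to $X$ but is the set of labeled examples consistent with $c_0$ (of cardinality $|X|$); the conclusion that a suitable $z$ exists is of course unaffected.
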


\begin{proof}
Because $\MLETD(C) = \SMN(G^+(C))$, 
it suffices to show that $\SMN(G^+(C))$ is upper-bounded by $\AN(C)$ and $\NCTD(C)$.
An antichain mapping $T:C \ra 2^Z$ clearly satisfies~(\ref{eq:injectivity})
and does not have $\eset$ in its image. Thus, an application of
Remark~\ref{rem:smn-lb} yields $\AN(C) \ge \SMN(G^+(C))$.
A clash-free mapping $T:C \ra 2^Z$ must be of order at least~$1$. 
There can be at most one concept $c$ in $C$ 
such that $T(c) = \eset$. Suppose that $T(c)=\eset$. Consider
an arbitrary, but fixed, concept $c' \in C\sm\{c\}$.  Since $c'$ 
is consistent with (the empty sample) $T(c)$ and $T$ is clash-free, 
the concept $c$ must be inconsistent with $T(c')$. Let us 
redefine $T(c)$ as a singleton set $\{(x,b)\}$ such that $b = c(x)$. 
This modification of $T$ is still clash-free and leaves the order
of $T$ unchanged. Moreover, after this modification, 
$T$ satisfies ~(\ref{eq:injectivity}) and does not have $\eset$ in its
image. Now another application of Remark~\ref{rem:smn-lb} 
yields $\NCTD(C) \ge \SMN(G^+(C))$. 
\end{proof}
The inequality $\MLETD(C) \le \NCTD(C)$ had been proven already 
in~\cite{IJCAI}. The proof given there does not make use
of saturating matching numbers and is more complicated. 
Because $\AN(C) \le \VCD(C)$, we immediately obtain the following 
result:

\begin{corollary} \label{cor:mle-lb-vcd}
Suppose that $(C,Z,\models)$ is of the form as described
in Example~\ref{ex:labeled-examples}. Then $\MLETD(C) \le \VCD(C)$.
\end{corollary}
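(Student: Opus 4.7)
The plan is to derive this corollary as an immediate consequence of two results already established in the text, by chaining a pair of inequalities. Specifically, Theorem~\ref{th:mle-others} gives $\MLETD(C) \le \AN(C)$ for any triple $(C,Z,\models)$ of the form in Example~\ref{ex:labeled-examples}, and Theorem~\ref{th:an-vcd} (cited from~\cite{MSSZ2022}) gives $\AN(C) \le \VCD(C)$ whenever $C$ is a family of subsets of a finite domain.

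The only thing I would verify before combining these is that the hypotheses of both theorems are simultaneously in force. Theorem~\ref{th:mle-others} requires that $(C,Z,\models)$ has the labeled-example form of Example~\ref{ex:labeled-examples}, which is exactly the assumption of the corollary; this in particular means that $C$ is a family of subsets of a domain $X$. Since $C$ is assumed finite throughout the paper (it has size at least $2$), the ground set $X$ may also be taken finite without loss of generality (only the finitely many $x \in X$ appearing as first coordinates of some element of $Z$ are relevant), so the hypothesis of Theorem~\ref{th:an-vcd} is satisfied as well.

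With both hypotheses in place, I would simply conclude
\[
\MLETD(C) \;\le\; \AN(C) \;\le\; \VCD(C),
\]
which is the claim. There is no genuine obstacle here: the work has been done in the two earlier theorems, and the corollary is obtained by transitivity of $\le$. The proof in the paper can therefore be one line long.
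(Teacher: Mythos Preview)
Your proposal is correct and matches the paper's approach exactly: the paper also derives the corollary in one line by combining $\MLETD(C) \le \AN(C)$ from Theorem~\ref{th:mle-others} with $\AN(C) \le \VCD(C)$ from Theorem~\ref{th:an-vcd}.
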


\subsection{Computational Considerations} \label{subsec:polytime}

We will show in the course of this section that $\SMN(G^+(C))$
(and related quantities) can be computed in time poly$(|C|,|X|)$
from a given (finite) concept class $C \seq 2^X$. The central 
observation will be that, in order to find a $C$-saturating 
matching of minimum order in $G^+(C)$, we do not need to compute 
the (possibly exponentially large) bipartite graph $G^+(C)$.
All pieces of information about $G^+(C)$ that we need
in the course of the algorithm can be efficiently extracted 
from the much smaller bipartite graph $G(C)$. 

We start with a lemma that is particularly interesting when we have 
a bipartite graph whose first vertex set, $V_1$, is much smaller 
than its second vertex set, $V_2$:

\begin{lemma} \label{lem:oracle-algorithm}
Let $G = (V_1,V_2,E)$ with $E \seq V_1 \times V_2$ be a bipartite graph.
Let $\cO$ be an oracle that, upon request $(v,k)$ with $v \in V_1$
and $k \in [|V_1|]$, returns $\min\{\deg_G(v),k\}$ distinct neighbors 
of $v$.\footnote{The oracle $\cO$ can be implemented efficiently if,
for instance, $G$ is represented by the adjacency lists for the vertices
in $V_1$ and there is direct access to each of these lists.}
Then there is an oracle algorithm $A^\cO$ which computes a maximum 
matching in $G$ and has a time bound that is polynomial in~$|V_1|$.
\end{lemma}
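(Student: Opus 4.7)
The plan is to use the oracle to extract a small bipartite subgraph $G'$ of $G$ whose maximum matching size equals that of $G$, and then to solve the matching problem on $G'$ using any standard polynomial-time bipartite matching algorithm. The intuition is that a matching can saturate at most $|V_1|$ vertices on the $V_2$-side, so each $v \in V_1$ only ever ``needs'' $|V_1|$ of its neighbors.

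Concretely, the algorithm $A^\cO$ proceeds as follows: for each $v \in V_1$, query $\cO(v,|V_1|)$ to obtain a set $N(v) \seq V_2$ of $\min\{\deg_G(v),|V_1|\}$ distinct neighbors of $v$. Form the bipartite graph $G' = (V_1, V_2', E')$ with $V_2' = \bigcup_{v \in V_1}N(v)$ and $E' = \{(v,u) : v \in V_1,\ u \in N(v)\}$; note that $|V_2'| \le |V_1|^2$ and $|E'|\le|V_1|^2$. Then compute and return a maximum matching of $G'$, e.g.\ via the Hopcroft--Karp algorithm, which runs in time polynomial in~$|V_1|$. Clearly every matching of $G'$ is also a matching of $G$, so it suffices to prove the following key claim.

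\emph{Key claim:} $\nu(G)=\nu(G')$, where $\nu(\cdot)$ denotes the maximum matching size. The direction $\nu(G')\le\nu(G)$ is trivial since $G'$ is a subgraph of $G$. For the reverse I would invoke the defect version of the K\"onig--Hall theorem, which for any bipartite graph $H$ on vertex classes $V_1,V_2$ reads
\[
\nu(H) \;=\; |V_1| - \max_{S\seq V_1}\bigl(|S|-|\Gamma_H(S)|\bigr)\ .
\]
The crucial observation is: if $S \seq V_1$ has strictly positive defect in $G'$, i.e.\ $|\Gamma_{G'}(S)|<|S|\le |V_1|$, then in particular $|\Gamma_{G'}(S)|<|V_1|$, so no $v\in S$ can have $|N(v)|=|V_1|$; by the oracle's specification this forces $\deg_G(v)<|V_1|$ and $N(v)=\Gamma_G(\{v\})$ for every $v\in S$. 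Hence $\Gamma_{G'}(S)=\Gamma_G(S)$, and the defects of $S$ in $G$ and $G'$ agree. Combined with the trivial bound $|\Gamma_{G'}(S)|\le|\Gamma_G(S)|$ (which handles sets of defect $\le 0$), taking the maximum over $S$ yields equality of the deficiencies and therefore of the matching numbers.

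The only real obstacle is the correctness of the reduction via Hall's defect condition; once that is in place, the complexity bound is immediate, as the algorithm makes $|V_1|$ oracle calls and then runs a polynomial-time matching algorithm on a graph with $O(|V_1|^2)$ edges.
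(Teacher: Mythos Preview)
Your proof is correct, but the route differs from the paper's. The paper partitions $V_1$ into the set $V_1'$ of vertices of degree less than $n:=|V_1|$ and its complement $V_1''$. After calling the oracle with $k=n$ for every vertex, it computes a maximum matching $M'$ in the subgraph induced by $V_1'$ and $\Gamma(V_1')$ (where the oracle returns \emph{all} neighbors, so this subgraph is fully known), and then greedily saturates each $v\in V_1''$ by scanning its list of $n$ neighbors for an unmatched one. Correctness is argued directly: any matching of $G$ restricted to $V_1'$ is a matching in that induced subgraph, so has size at most $|M'|$, and the remaining edges number at most $|V_1''|$; conversely, a vertex with $\ge n$ neighbors always finds a free partner among $n$ listed ones since fewer than $n$ can already be matched.

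Your approach instead throws all oracle-returned edges into a single graph $G'$ and invokes the defect form of the K\"onig--Hall theorem to show $\nu(G')=\nu(G)$. This is arguably cleaner as a reduction (one matching computation, no case split), at the price of importing the defect formula; the paper's argument is entirely elementary and self-contained. Both hinge on the same observation---that any set $S\subseteq V_1$ witnessing a deficiency must consist of low-degree vertices, for which the oracle output is complete---but the paper exploits it algorithmically whereas you exploit it structurally.
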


\begin{proof}
For sake of brevity, we set $n = |V_1|$. Let $V'_1 \seq V_1$ be the 
set of vertices in $V_1$ with less than $n$ neighbors, and
let  $V''_1 = V_1 \sm V'_1$ be the set of remaining vertices in $V_1$,
i.e., the vertices with at least $n$ neighbors. The algorithm $A^\cO$
proceeds as follows:
\begin{enumerate}
\item 
For each $v \in V_1$, it sends the request $(v,n)$ to $\cO$ and
receives a list of all neighbors if $v \in V'_1$, resp.~a list 
of $n$ distinct neighbors if $v \in V''_1$. 
\item
Now $A^\cO$ computes a maximum matching $M'$ in the subgraph $G'$ of $G$ 
that is induced by $V'_1$ and $\Gamma(V'_1)$. 
\item
$A^\cO$ augments $M'$ to a $V_1$-saturating matching in a greedy fashion:
for each $v  \in V''_1$, it inspects the list of $n$ distinct neighbors
of $v$ and matches $v$ with the first neighbor which had not been matched
before.
\end{enumerate}
Note that $G'$ has at most $n(n-1)$ vertices. Moreover, among $n$ neighbors 
of a vertex $v \in V''_1$, there must be at least $1$ neighbor which is not 
already matched with another vertex in $V_1$. It easily follows that $A^\cO$ 
returns a maximum matching in poly$(|V_1|)$ time.
\end{proof}

\noindent
With a bipartite graph $G = (V_1,V_2,E)$, we associate 
the bipartite graph 
\begin{equation} \label{eq:extended-graph}
G^+ = (V_1,2^{V_2}\sm\{\eset\},E^+)\ \mbox{ with }\ 
E^+ = \{(v,B) \in V_1 \times 2^{V_2}\sm\{\eset\}: \{v\} \times B \seq E\}
\enspace .
\end{equation}
In other words: the pair $(v,B)$ with $v \in V_1$ 
and $\eset \subset B \seq V_2$ is an edge in $E^+$ iff, 
for every $v' \in B$, the pair $(v,v')$ is an edge in $E$.

\begin{theorem} \label{th:matching-algorithm}
Given a bipartite graph $G = (V_1,V_2,E)$, a $V_1$-saturating matching 
of minimum order in $G^+$ (resp.~an error message if a $V_1$-saturating
matching does not exist) can be computed in polynomial time:
\end{theorem}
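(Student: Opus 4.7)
The plan is to reduce the problem of finding a minimum-order $V_1$-saturating matching in $G^+$ to $|V_2|$ applications of Lemma~\ref{lem:oracle-algorithm}, one for each candidate value of the target order. For each $k \in \{1,\ldots,|V_2|\}$, let $G^+_k = (V_1, \cZ_k, E^+_k)$ denote the subgraph of $G^+$ obtained by restricting the second vertex class to those non-empty $B \seq V_2$ with $|B| \le k$. A $V_1$-saturating matching of order $\le k$ in $G^+$ is the same thing as a $V_1$-saturating matching in $G^+_k$. Thus I would iterate $k = 1, 2, \ldots, |V_2|$ and, for each $k$, invoke (an implementation of) Lemma~\ref{lem:oracle-algorithm} on $G^+_k$ to compute a maximum matching $M_k$; the smallest $k$ for which $M_k$ saturates $V_1$ is returned. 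If no such $k$ exists (equivalently, if $G^+_{|V_2|} = G^+$ admits no $V_1$-saturating matching), the algorithm outputs an error message.

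The main obstacle is that $G^+_k$ can have exponentially many vertices on the right. This is precisely where the oracle formulation of Lemma~\ref{lem:oracle-algorithm} pays off: the lemma only needs, for each $v \in V_1$, a list of at most $n := |V_1|$ distinct neighbors of $v$. To implement the oracle $\cO_k$ for $G^+_k$ in polynomial time, I would first read off $\Gamma_G(v)$ from the compact graph $G$, and then enumerate non-empty subsets $B \seq \Gamma_G(v)$ with $|B| \le k$ in a canonical order (e.g.~by size, then lexicographically), halting once $n$ such subsets have been produced or once the enumeration is exhausted. In either case, the number of sets produced is at most $n$, each has size at most $k \le |V_2|$, and the bookkeeping is polynomial in $|V_1|$ and $|V_2|$. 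Plugging this oracle into Lemma~\ref{lem:oracle-algorithm} yields a maximum matching in $G^+_k$ in poly$(|V_1|,|V_2|)$ time.

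It remains to handle the error case and to argue correctness of the iteration. For the error case, the above procedure already outputs an error message iff no $k \le |V_2|$ yields a $V_1$-saturating matching in $G^+_k$, which (by the obvious monotonicity $G^+_1 \seq G^+_2 \seq \ldots \seq G^+_{|V_2|} = G^+$) happens iff $G^+$ itself does not admit such a matching. Correctness of the minimization is then immediate: the returned matching has order equal to the smallest $k$ for which $G^+_k$ is $V_1$-saturable, which is exactly the minimum order of a $V_1$-saturating matching in $G^+$. Since the outer loop has $|V_2|$ iterations, each running in poly$(|V_1|,|V_2|)$ time, the total running time is polynomial as claimed. (A binary search on $k$ would shave this to $O(\log|V_2|)$ iterations but is not needed for the polynomial bound.)
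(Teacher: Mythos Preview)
Your proposal is correct and follows essentially the same approach as the paper: both iterate over the candidate order $k$ (the paper uses $\ell$), restrict $G^+$ to right-vertices of size at most $k$, implement the neighbor oracle by listing up to $|V_1|$ subsets of $\Gamma_G(v)$, invoke Lemma~\ref{lem:oracle-algorithm} to get a maximum matching, and return the first $k$ that saturates $V_1$. Your write-up is in fact slightly more explicit about the oracle implementation and the correctness argument, and your binary-search remark is a nice aside, but the underlying argument is the same.
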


\begin{proof}
We consider first the problem of computing a $V_1$-saturating 
matching of minimum order in $G^+$. Let us fix some notation. 
For $\ell=1,\ldots,|V_2|$, 
let $G^{(\ell)} = (V_1,V_2^{(\ell)},E^{(\ell)})$ be the bipartite graph 
given by
\[
V_2^{(\ell)} = \{B \seq V_2: 1 \le |B| \le \ell\})\ \mbox{ and }\
E_2^{(\ell)} = \{(v,B) \in V_1 \times V_2^{(\ell)}: \{v\} \times B \seq E\}
\enspace .
\]
In other words, $G^{(\ell)}$ is the subgraph of $G^+$ induced by $V_1$
and $V_2^{(\ell)}$. Given $G$, $\ell \in [|V_2|]$, $k \in [|V_1|]$
and $v \in V_1$, it is 
easy to compute a list of $\min\{\deg(v),k\}$ distinct neighbors
of $v$ in $G^{(\ell)}$. It follows from Lemma~\ref{lem:oracle-algorithm}
that, given $G$ and $\ell \in [|V_2|]$, we can compute in poly$(|V_1|,|V_2|)$
steps a maximum matching $M_{\ell}$ in $G^{(\ell)}$. Let $\ell^+$
be the minimum $\ell$ such that $M_\ell$ is of size $|V_1|$,
respectively $\ell^+ = 1+|V_2|$ if none of the $M_\ell$ 
saturates $V_1$. If $\ell^+ \le |V_2|$, then $M_{\ell^+}$
is the desired $V_1$-saturating matching of minimum order in $G^+$.
If $\ell^+ = |V_2|+1$, we may report error because $G^+$ does not
admit a $V_1$-saturating matching. 
\end{proof}

\begin{corollary} \label{cor:poly-smn}
Suppose that $(C,Z,\models)$ is of the form as described
in Example~\ref{ex:labeled-examples}. Then the following 
objects can be computed in polynomial time:
\begin{itemize}
\item
the bipartite consistency graph $G(C)$ with vertex sets $C$ and $Z$
\item 
the (identical) parameters $\SMN(G^+(C))$ and $\MLETD(C)$
\item 
a $C$-saturating matching $M$ in $G^+(C)$ of order $\SMN(G^+(C))$ 
\item
parameters representing an MLE-learner $L$ for $C$ and a teacher $T$
for $L$ who is of order $\MLETD(C)$
\end{itemize}
\end{corollary}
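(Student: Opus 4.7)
The plan is to assemble the four deliverables by reading polynomial-time algorithms off existence proofs already on record. First I would build the consistency graph $G(C) = (C, Z, E)$ in a straightforward scan: for each of the $2|C|\cdot|X|$ pairs $(c,(x,b))$, include the edge iff $b = c(x)$. This takes $O(|C|\cdot|X|)$ time. I would then feed $G(C)$ directly to the algorithm guaranteed by Theorem~\ref{th:matching-algorithm}, which produces in $\mathrm{poly}(|C|,|Z|)$ time a $C$-saturating matching $M:C \to 2^Z \sm \{\eset\}$ of minimum order in $G^+(C)$. That order is, by definition, $\SMN(G^+(C))$, and by Theorem~\ref{th:maptd-smn-set} coincides with $\MLETD(C)$, so the second and third bullets are discharged together.

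For the last bullet I would turn $M$ into an MLE-learner $L$ using the explicit parameter recipe~(\ref{eq:teaching-by-matching}) from the proof of Theorem~\ref{th:maptd-smn-set}: set $m := |X|$ and $d(c) := |M(c)|$, and assign to each $c \in C$ the conditional likelihoods $(1-\ve)/d(c)$ on $M(c)$, $\ve/(m-d(c))$ on $U_1(c)$, and $0$ on $U_0(c)$, with $U_0(c), U_1(c)$ as in~(\ref{eq:Z-partition}). The teacher is then simply $T := M$; both $L$ and $T$ can be written out in $\mathrm{poly}(|C|,|X|)$ time once $\ve$ is fixed.

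The only non-routine step, and hence the main obstacle, is to pin down a \emph{concrete} $\ve > 0$ of polynomial bit-length that realises the phrase ``for sufficiently small $\ve$'' from the proof of Theorem~\ref{th:maptd-smn-set}. Inspecting the three cases there, it suffices to pick $\ve$ so that $1 - 2 d(c_0)^2 \ve > \max\{2 d(c_0) d(c_1) \ve,\, 1/2\}$ for every pair $c_0 \neq c_1 \in C$; since $d(c) \le m$ always, the explicit choice $\ve := 1/(8m^2)$ works uniformly, yielding $1 - 2 d(c_0)^2 \ve \ge 3/4 > 1/4 \ge 2 d(c_0) d(c_1) \ve$. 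This $\ve$ has $O(\log m)$ bit-length, so every entry of $L$ is a rational of $\mathrm{poly}(|X|)$ bit-length, and the corollary follows.
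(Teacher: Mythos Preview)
Your argument follows the same line as the paper's own proof: build $G(C)$, invoke Theorem~\ref{th:matching-algorithm} to obtain a minimum-order $C$-saturating matching $M$ in $G^+(C)$, read off $\SMN(G^+(C)) = \MLETD(C)$, and lift $M$ to an explicit MLE-learner with teacher $T := M$. The one difference is that the paper points to the parameter recipe from the proof of Theorem~\ref{th:smn-bounds} rather than Theorem~\ref{th:maptd-smn-set}; your choice is the more natural one for the $(\ol{O},\ol{R})$ mode used throughout this section, and your explicit quantification $\ve = 1/(8m^2)$ (together with the check that it beats all three case bounds from that proof) addresses a constructivity detail that the paper leaves entirely implicit.

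One small gap to patch: the recipe~(\ref{eq:teaching-by-matching}) is stated in the proof of Theorem~\ref{th:maptd-smn-set} only under the standing assumption $d^+ \le m-1$. If the computed matching has $d^+ = m$, then any $c$ with $d(c) = m$ satisfies $M(c) = Z_c$ and $U_1(c) = \eset$, and your assignment gives that $c$ total mass $1-\ve$ rather than $1$. A one-line fix suffices: whenever $d(c) = m$, declare $P(\cdot \mid c)$ uniform on $Z_c$. Then $P^{\ol{O},\ol{R}}(M(c)\mid c) = 1$ while $P^{\ol{O},\ol{R}}(M(c)\mid c') = 0$ for $c' \neq c$; and conversely, for any $c_0$ with $d(c_0) < m$ and any $c_1$ with $d(c_1) = m$, either $c_1 \not\models M(c_0)$ (giving probability $0$) or the symmetry argument of Case~2 in that proof yields $P^{\ol{O},\ol{R}}(M(c_0)\mid c_1) \le \binom{m}{d(c_0)}^{-1} \le 1/2$, so your choice of $\ve$ continues to work.
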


\begin{proof}
Given $C$, the set $Z$ and the bipartite graph $G(C)$ can clearly
be computed in polynomial time. We may now apply 
Theorem~\ref{th:matching-algorithm} to the bipartite graph $G = G(C)$. 
Then $G^+$ in Theorem~\ref{th:matching-algorithm} equals $G^+(C)$, 
Hence the algorithm sketched in the proof 
of Theorem~\ref{th:matching-algorithm} can be used for finding 
a $C$-saturating matching $M$ in $G^+(C)$ of minimum order (which 
is order $\SMN(G^+(C))$). As a byproduct, the parameter $\SMN(G^+(C))$ 
is now known as well. 
As for the specification of an appropriate MLE-learner $L$,
we may use the parameter setting that is found in the proof 
of Theorem~\ref{th:smn-bounds}. As also shown 
in that proof, $M$ (already known to be computable from $C$ in
polynomial time) represents a teacher of order $\MLETD(C)$ for $L$.
This completes the proof of the corollary.
\end{proof}

It is straightforward to extend Corollary~\ref{cor:poly-smn}
from sampling mode $(\ol{O},\ol{R})$ to other sampling modes,
and from $\MLETD$ to $\MAPTD$. The main point is to adjust
the definition of $G^+$ in~(\ref{eq:extended-graph}) 
so that $G(C)^+$ becomes identical to $G(C)_{\neq\eset}^{\alpha,\beta}$
resp.~to $G(C)^{\alpha,\beta}$. We omit the details.

\paragraph{Open Problems and Future Work.}

What are ``natural parameterizations'' of MAP- or MLE-learners?
Does MAP-based teaching of naturally parameterized learners force
the teacher to present observations/examples which illustrate
the underlying target concept in an intuitively appealing way?

\appendix

\section{Proof of Facts 1--4}

\begin{description}
\item[Fact 1:]
Suppose that $0 \le |\Delta| < p < 1/2$.
Let $c^{\pm\Delta}$ be the concept given by~(\ref{eq:plus-minus-Delta}).
Then $P^{O,R}(z_1,z_2)|c^{\pm\Delta})$
and $P^{\ol{O},\ol{R}}(z_1,z_2|c^{\pm\Delta})$ are both
strictly decreasing when $|\Delta|$ is increased.
\end{description}

\begin{proof}
The assertion is obvious 
for $P^{O,R}(z_1,z_2)|c^{\pm\Delta}) = (p+\Delta)(p-\Delta) = p^2-\Delta^2$.
Consider now the function
\[ 
h(\Delta) := P^{\ol{O},\ol{R}}(z_1,z_2|c^{\pm\Delta}) =
\frac{(p+\Delta)(p-\Delta)}{1-p-\Delta} +  \frac{(p-\Delta)(p+\Delta)}{1-p+\Delta} =
\frac{2(1-p)(p^2-\Delta^2)}{(1-p)^2-\Delta^2} \enspace ,
\]
where the last equation can be obtained by a straightforward calculation.
Another straightforward, but tedious, calculation shows that
\[
h'(\Delta) = - \frac{4(1-p)(1-2p)\Delta}{((1-p)^2-\Delta^2)^2}.
\]
Hence the function $h(\Delta)$ is strictly increasing for $\Delta < 0$
and strictly decreasing for $\Delta > 0$. It is therefore strictly
decreasing when $|\Delta|$ is increased.
\end{proof}

\begin{description}
\item[Fact 2:]
Suppose that $0 \le \Delta < p < 1/2$. Let $c^{\pm\Delta}$ be
the concept given by~(\ref{eq:plus-minus-Delta}). Then
\[
P^{O,\ol{R}}(z_1,z_2 | c^{\pm\Delta}) - P^{O,\ol{R}}(z_1,z_2 | c^{\pm0})
\left\{ \begin{array}{ll}
   = 0 & \mbox{if $\Delta \in \{0,\frac{p^2}{1-p}\}$} \\
   > 0 & \mbox{if $0 < \Delta < \frac{p^2}{1-p}$} \\
   < 0 & \mbox{otherwise}
\end{array} \right. \enspace .
\]
\end{description}

\begin{proof}
We set
\[ 
h(\Delta) := P^{O,\ol{R}}(z_1,z_2|c^{\pm\Delta}) =
\frac{(p+\Delta)(p-\Delta)}{1-p-\Delta} = \frac{p^2-\Delta^2}{1-p-\Delta}
\]
and observe that
\begin{eqnarray*}
P^{O,\ol{R}}(z_1,z_2 | c^{\pm\Delta}) - P^{O,\ol{R}}(z_1,z_2 | c^{\pm0}) 
& = & 
h(\Delta) - h(0) \\
& = &
\frac{(1-p)(p^2-\Delta^2) - (1-p-\Delta)p^2}{(1-p-\Delta)(1-p)} \\
& = & \frac{\Delta (p^2-(1-p)\Delta)}{(1-p-\Delta)(1-p)} \enspace .
\end{eqnarray*}
The denominator of the latter expression is strictly positive.
Moreover
\[
\Delta (p^2-(1-p)\Delta) \left\{ \begin{array}{ll}
   = 0 & \mbox{if $\Delta \in \{0,\frac{p^2}{1-p}\}$} \\
   > 0 & \mbox{if $0 < \Delta < \frac{p^2}{1-p}$} \\
   < 0 & \mbox{otherwise} 
\end{array} \right. \enspace ,
\]
which accomplishes the proof of Fact~2.
\end{proof}

\begin{description}
\item[Fact 3:]
Suppose that $0 \le \Delta < p < 1/2$. Let $c^{\pm\Delta}$ be
the concept given by~(\ref{eq:plus-minus-Delta}). Then
\[
P^{O,R}(z_1,z_1,z_2 | c^{\pm\Delta}) - P^{O,R}(z_1,z_1,z_2 | c^{\pm0})
\left\{ \begin{array}{ll}
   = 0 & \mbox{if $\Delta \in \{0,\frac{1}{2}\sqrt{5}-1)p\}$} \\
   > 0 & \mbox{if $0 < \Delta < \frac{1}{2}(\sqrt{5}-1)p$} \\
   < 0 & \mbox{otherwise}
\end{array} \right. \enspace .
\]
\end{description}

\begin{proof}
Let $0 < \delta < 1$ be given by $\Delta = \delta p$ and note that
\[
P^{O,R}(z_1,z_1,z_2|c^{\pm\delta p}) = (p+\delta p)^2 \cdot (p-\delta p) = 
(1+\delta)^2 \cdot (1-\delta) \cdot p^3 = (1+\delta-\delta^2-\delta^3) \cdot p^3 
\enspace .
\]
It follows that
\[
P^{O,R}(z_1,z_1,z_2|c^{\pm\delta p}) - P^{O,R}(z_1,z_1,z_2|c^{\pm0}) =
\delta \cdot (1-\delta-\delta^2) \cdot p^3 \enspace .
\]
Furthermore
\[
\delta \cdot (1-\delta-\delta^2) \left\{ \begin{array}{ll}
       = 0 & \mbox{if $\delta \in \left\{0,\frac{1}{2}(\sqrt{5}-1)\right\}$} \\
       > 0 & \mbox{if $0 < \delta < \frac{1}{2}(\sqrt{5}-1)$} \\
       < 0 & \mbox{otherwise} 
    \end{array} \right. \enspace .
\]
We may conclude from this discussion that~(\ref{eq2:odd-split}) is valid.
\end{proof}

\begin{description}
\item[Fact 4:]
Suppose that $0 < p < 1/2$ and $1 \le t < \frac{1-p}{p}$.
Let $c^{(t)}$ be the concept given by~(\ref{eq:times-t}).
Then $P^{\ol{O},\ol{R}}(z_1,z_2|c^{(t)})$ is strictly increasing
with $t$.
\end{description}

\begin{proof}
Set 
\begin{eqnarray*}
h(t) & := &\frac{P^{\ol{O},\ol{R}}(z_1,z_2|c^{(t)})}{p^2} =
\frac{1}{1-pt} + \frac{1}{1-p/t} = \frac{1}{1-pt} + \frac{t}{t-p} \\
& = &
\frac{(t-p) + (1-pt)s}{(1-pt)(t-p)} = \frac{2t-pt^2-p}{(p^2+1)t-pt^2-p}
\enspace .
\end{eqnarray*}
It suffices to show that $h(t)$ is strictly increasing with $t$. 
To this end, we compute the first derivative:
\[
h'(t) =
\frac{(2-2pt) \cdot ((p^2+1)t-pt^2-p) - (2t-pt^2-p)(p^2+1-2pt)}
{(1-pt)^2 \cdot (t-p)^2} \enspace .
\]
The denominator is strictly positive. After an application of the
distributive law and some cancellation, the numerator has the form
\[ f(t) := p(1-p^2)(t^2-1) \enspace . \]
Hence the numerator equals $0$ for $t=1$ and is strictly positive
for $t>1$. It follows that $h(t)$ with $t \ge 1$ is strictly increasing.
\end{proof}

\bibliography{map}

\begin{thebibliography}{10}

\bibitem{B2008}
Frank Balbach.
\newblock Measuring teachability using variants of the teaching dimension.
\newblock {\em Theoretical Computer Science}, 397(1--3):94--113, 2008.

\bibitem{EavesS16}
Baxter~S. Eaves, Jr. and Patrick Shafto.
\newblock Toward a general, scaleable framework for {B}ayesian teaching with
  applications to topic models.
\newblock {\em CoRR}, 2016.

\bibitem{FKSSZ2022}
Shaun Fallat, David Kirkpatrick, Hans~U. Simon, Abolghasem Soltani, and Sandra
  Zilles.
\newblock On batch teaching without collusion.
\newblock {\em Journal of Machine Learning Research}, 23:1--32, 2022.

\bibitem{IJCAI}
C{\`{e}}sar Ferri, Jos{\'{e}} Hern{\'{a}}ndez{-}Orallo, and Jan~Arne Telle.
\newblock Non-cheating teaching revisited: {A} new probabilistic machine
  teaching model.
\newblock In Luc~De Raedt, editor, {\em Proceedings of the Thirty-First
  International Joint Conference on Artificial Intelligence, {IJCAI} 2022},
  pages 2973--2979, 2022.

\bibitem{GRSZ2017}
Ziyuan Gao, Christoph Ries, Hans~U. Simon, and Sandra Zilles.
\newblock Preference-based teaching.
\newblock {\em Journal of Machine Learning Research}, 18(31):1--32, 2017.

\bibitem{GK1995}
Sally~A. Goldman and Michael~J. Kearns.
\newblock On the complexity of teaching.
\newblock {\em Journal of Computer and System Sciences}, 50(1):20--31, 1995.

\bibitem{GM1996}
Sally~A. Goldman and H.~David Mathias.
\newblock Teaching a smarter learner.
\newblock {\em Journal of Computer and System Sciences}, 52(2):255--267, 1996.

\bibitem{ourECML}
Brigt Arve~Toppe {H\aa}vardstun, C{\`e}sar Ferri, Jos{\'e} Hernandez-Orallo,
  Pekka Parviainen, and Jan~Arne Telle.
\newblock {XAI} with machine teaching when humans are (not) informed about the
  irrelevant features.
\newblock In {\em ECML}, 2023.
\newblock {To appear}.

\bibitem{KSZ2019}
David~G. Kirkpatrick, Hans~U. Simon, and Sandra Zilles.
\newblock Optimal collusion-free teaching.
\newblock In Aur{\'e}lien Garivier and Satyen Kale, editors, {\em Proceedings
  of Machine Learning Research (ALT 2019)}, volume~98, pages 1--23, 2019.

\bibitem{MSSZ2022}
Farnam Mansouri, Hans Simon, Adish Singla, and Sandra Zilles.
\newblock On batch teaching with sample complexity bounded by vcd.
\newblock In S.~Koyejo, S.~Mohamed, A.~Agarwal, D.~Belgrave, K.~Cho, and A.~Oh,
  editors, {\em Advances in Neural Information Processing Systems}, volume~35,
  pages 15732--15742. Curran Associates, Inc., 2022.

\bibitem{Muggleton}
Stephen~H. Muggleton.
\newblock Learning from positive data.
\newblock In Stephen~H. Muggleton, editor, {\em Inductive Logic Programming,
  6th International Workshop, ILP 1996}, volume 1314 of {\em Lecture Notes in
  Computer Science}, pages 358--376. Springer, 1996.

\bibitem{SHAFTO201455}
Patrick Shafto, Noah~D. Goodman, and Thomas~L. Griffiths.
\newblock A rational account of pedagogical reasoning: Teaching by, and
  learning from, examples.
\newblock {\em Cognitive Psychology}, 71:55 -- 89, 2014.

\bibitem{SM1991}
Ayumi Shinohara and Satoru Miyano.
\newblock Teachability in computational learning.
\newblock {\em New Generation Computing}, 8(4):337--348, 1991.

\bibitem{VC1971}
Vladimir~N. Vapnik and Alexey~Ya. Chervonenkis.
\newblock On the uniform convergence of relative frequencies of events to their
  probabilities.
\newblock {\em Theory of Probability \& its Applications}, 16(2):264--280,
  1971.

\bibitem{yang2017explainable}
Scott Cheng-Hsin Yang and Patrick Shafto.
\newblock Explainable artificial intelligence via bayesian teaching.
\newblock In {\em NIPS 2017 workshop on Teaching Machines, Robots, and Humans},
  pages 127--137, 2017.

\bibitem{NIPS2013_9c01802d}
Jerry Zhu.
\newblock Machine teaching for bayesian learners in the exponential family.
\newblock In C.J. Burges, L.~Bottou, M.~Welling, Z.~Ghahramani, and K.Q.
  Weinberger, editors, {\em Advances in Neural Information Processing Systems},
  volume~26. Curran Associates, Inc., 2013.

\bibitem{zhu2018overview}
Xiaojin Zhu, Adish Singla, Sandra Zilles, and Anna~N. Rafferty.
\newblock An overview of machine teaching.
\newblock {\em CoRR}, 2018.

\bibitem{ZLHZ2011}
Sandra Zilles, Steffen Lange, Robert Holte, and Martin Zinkevich.
\newblock Models of cooperative teaching and learning.
\newblock {\em Journal of Machine Learning Research}, 12:349--384, 2011.

\end{thebibliography}

\end{document}